\documentclass{article}

\usepackage[preprint,nonatbib]{neurips_2026}

\usepackage[utf8]{inputenc} 
\usepackage[T1]{fontenc}    
\usepackage{hyperref}       
\usepackage{url}            
\usepackage{booktabs}       
\usepackage{amsfonts}       
\usepackage{nicefrac}       
\usepackage{microtype}      
\usepackage{multirow}
\usepackage[table,xcdraw]{xcolor}
\usepackage{graphicx}
\usepackage[normalem]{ulem} 
\useunder{\uline}{\ul}{}    
\usepackage{subcaption}
\usepackage{booktabs} 
\usepackage{bm}
\usepackage{subcaption}
\usepackage{enumitem}
\usepackage{algorithm}
\usepackage{algorithmic}
\usepackage{amsmath}
\usepackage{amssymb}
\usepackage{mathtools}
\usepackage{amsthm}

\usepackage{wrapfig}
\usepackage[capitalize,noabbrev]{cleveref}

\theoremstyle{plain}
\newtheorem{theorem}{Theorem}[section]

\newtheorem{lemma}[theorem]{Lemma}

\theoremstyle{definition}

\newtheorem{assumption}[theorem]{Assumption}
\theoremstyle{remark}
\newtheorem{remark}[theorem]{Remark}
\usepackage{todonotes}
\usepackage{etoc}

\title{FGRPO: Federated GRPO with Adaptive Aggregation on Non-IID Data}

\author{
  \textbf{Pengyu Chen}\textsuperscript{1}, 
  \textbf{Shaowei Li}\textsuperscript{1}, 
  \textbf{Kai Wang}\textsuperscript{2}, 
  \textbf{Yunsheng Yuan}\textsuperscript{1} \\
  \textbf{Kai Han}\textsuperscript{3}, 
  \textbf{Jun Luo}\textsuperscript{4}, 
  \textbf{Feng Li}\textsuperscript{1} \\
  \vspace{0.2cm}
  \normalfont 
  \textsuperscript{1}School of Computer Science and Technology, Shandong University, Qingdao China \\
  \textsuperscript{2}School of Mathematical Science, Peking University, China \\
  \textsuperscript{3}School of Computer Science and Artificial Intelligence, \\Shanghai University of Finance and Economics, Shanghai, China \\
  \textsuperscript{4}College of Computing and Data Science, Nanyang Technological University, Singapore \\
  \vspace{0.1cm}
  Email: \texttt{\{202435194, 202420868\}@mail.sdu.edu.cn, wangkaisd@stu.pku.edu.cn,}\\
  \texttt{ysyuan1028@mail.sdu.edu.cn, hankai@mail.shufe.edu.cn, }\\
  \texttt{junluo@ntu.edu.sg, fli@sdu.edu.cn}
}

\begin{document}

\maketitle

\begin{abstract}
  Recent advances in language models have established reinforcement learning as the primary paradigm for eliciting self-correction and long-chain reasoning. While \emph{group relative policy optimization} (GRPO) offers superior scalability by eliminating the critic network, deploying it on a central infrastructure entails collecting a large volume of data from distributed owners, which poses significant privacy risks. To address these concerns, we introduce \emph{federated GRPO} (FGRPO), a framework designed to decentralize the fine-tuning of reasoning models across heterogeneous data owners. To effectively mitigate the instability caused by divergent reward scales across heterogeneous tasks, FGRPO incorporates an adaptive aggregation mechanism based on relative performance gain. By characterizing each client's improvement relative to its personalized historical baseline, the framework dynamically prioritizes effective learning trajectories regardless of local task difficulty. FGRPO ensures robust convergence on non-IID data while preserving data privacy.
\end{abstract}

\section{Introduction} \label{sec:intro}
  The paradigm of \textit{large language models} (LLMs) has recently been redefined by the emergence of advanced reasoning frameworks, such as OpenAI-o1~\cite{jaech2024openai} and DeepSeek-R1~\cite{GuoYZSWZXZMB-Nature25}, as well as instruction-tuned multimodal models (e.g., Qwen2.5-VL-3B-Instruct~\cite{bai2025qwen2}). These models exhibit strong reasoning capabilities across diverse tasks, including mathematical theorem proving, competitive programming, and multi-step scientific synthesis. A key driver behind these advances is the strategic use of \emph{reinforcement learning} (RL), which has emerged as a central paradigm for eliciting self-correction and robust long-chain reasoning in LLMs~\cite{ZhangZHSLJFTJL-arXiv25}.
  Existing RL methods, such as proximal policy optimization (PPO)~\cite{SchulmanWDRK-arXiv17} and REINFORCE-style variants such as leave-one-out (RLOO)~\cite{AhmadianCGFKPUS-ACL24}, estimate advantages from absolute reward signals, typically using value-function critics or variance-reduction baselines. In contrast, \textit{group relative policy optimization} (GRPO)~\cite{ShaoWZXSBZZLW-arXiv24} adopts a different paradigm by leveraging intra-group comparisons to compute relative advantages, thereby avoiding the need for a separate critic network. Recent works~\cite{ZhengLLCYGDLMY-arXiv25,ChenCWY-arXiv25,ZhangZ-arXiv25,YuZZYZYDFLL-arXiv25} further improve the robustness and efficiency of GRPO.

  Meanwhile, data privacy remains a critical concern in training large-scale reasoning models, as high-quality reasoning traces often contain sensitive intellectual property. Examples include expert reasoning processes in code generation and mathematical proving, proprietary logic in commercial systems, and regulated data in healthcare and finance. Aggregating such data from multiple owners into a centralized third-party infrastructure creates significant risks of privacy leakage and legal non-compliance. To this end, \emph{federated learning} (FL) has emerged as an effective framework for distributed model training, in which a learning task is collaboratively solved by a federation of participating data owners (or clients)~\cite{McMahanMRHA-AISTATS17,LiSZSTS-MLSys20,KarimireddyKMRS-ICML20}. Each FL client computes a local update to the current global model maintained by a central server, and the FL server updates the global model by aggregating the received local updates. In this process, clients communicate only their local updates to the central server and do not share their local data with others for the sake of privacy preservation.

  Although a substantial body of work has explored \emph{federated reinforcement learning} (FedRL)~\cite{FanMDJTL-NIPS21,JinPYWZ-AISTATS22,KhodadadianSJM-ICML22,ZhangWMA-ICLR24,WooSJC-ICML24,FangWG-WWW25,JiangWZBTF-AAMAS25,WangHZMA-ICML24,XiongWJL-ICLR25,LanHHAB-ICLR25}, applying FL to LLM fine-tuning has only recently emerged in the form of federated \emph{reinforcement learning with human feedback} (RLHF)~\cite{FanTOWO-AAMAS25}, where human feedback is utilized to define client-specific rewards. In this setting, adopting traditional RL methods such as Actor–Critic~\cite{KondaT-NIPS99} and PPO~\cite{SchulmanWDRK-arXiv17} leads to federated variants relying on value (critic) networks and absolute reward signals for advantage estimation. In contrast, as mentioned above, GRPO eliminates the critic and derives advantages from relative comparisons among multiple sampled outputs; this design renders advantages inherently local. Particularly, under non-IID data, where each client’s dataset comprises tasks of varying difficulty, these advantages become incomparable across clients: modest gains on simpler tasks may yield gradients comparable in magnitude to substantial improvements on more complex tasks, leading to gradient interference during aggregation. While prior FedRL approaches address data heterogeneity through shared value estimation, e.g., by aligning gradients via global Q-function estimation~\cite{YangCWCC-NIPS24} or enforcing a shared critic through reversed actor–critic updates~\cite{XieS-arXiv25}, such mechanisms are not applicable for critic-free GRPO.  

  In this paper, we propose FGRPO, an FL framework that leverages GRPO to fine-tune reasoning-capable LLMs under non-IID data. To compensate for the absence of a global or shared critic, FGRPO introduces \emph{relative performance gain} (RPG) as a decentralized surrogate, shifting aggregation from absolute performance to relative learning progress. Specifically, each client measures its improvement relative to an \emph{exponential moving average} (EMA) baseline and normalizes it by local reward volatility, yielding a progress-aware signal that restores cross-client comparability while suppressing noisy or stagnant updates. The server then aggregates local models using an RPG-based weighting scheme that prioritizes clients demonstrating consistent and meaningful progress relative to their task difficulty. In this way, FGRPO amplifies informative learning trajectories and enables robust optimization under non-IID data. Our main contributions are summarized as follows:
  \begin{itemize}
    \item We propose {FGRPO}, a framework extending GRPO to FL settings, by incorporating an RPG-based adaptive aggregation mechanism to mitigate the instability caused by divergent reward scales under non-IID data.
    \item We provide a rigorous theoretical analysis establishing the non-convex convergence of FGRPO. Specifically, we quantify how FGRPO suppresses the convergence error floor, offering a theoretical guarantee for the resilience against data heterogeneity.
    \item We conduct extensive experiments on standard benchmarks with various models. The results show that FGRPO consistently outperforms state-of-the-art baselines, demonstrating superior scalability and robustness across varying client populations and non-IID settings.
  \end{itemize}

  The remainder of this paper is organized as follows. Sec.~\ref{sec:sys} introduces the system model and preliminaries. The detailed design of our FGRPO framework and the corresponding analysis are provided in Sec.~\ref{sec:fgrpo}. We report extensive experiment results in Sec.~\ref{sec:exp} and survey the related literature in Sec.~\ref{sec:relwork}. Finally, Sec.~\ref{sec:conclusion} concludes the paper.

\section{System Model and Preliminaries} \label{sec:sys}
  \subsection{Federated Learning} \label{ssec:fl}
    Consider a set of $N$ clients $\mathcal{N}=\{1,2,\cdots,N\}$ and a central server. Each client $i\in\mathcal{N}$ is associated with a fixed local dataset $\mathcal{D}_i$. Let ${D}_i$ denote the number of data samples in $\mathcal{D}_i$. The clients are coordinated by the central server to achieve a global learning objective:
    \begin{equation}
      \min_{\theta \in \mathbb{R}^d} ~ F(\theta) \triangleq  \sum^N_{i=1} \omega_i \mathbb{E}_{\xi \sim \mathcal{D}_i} \left[f_i(\theta; \xi)\right]
    \end{equation}
    where $\theta \in \mathbb{R}^d$ denotes the parameters of the global model, $f_i(\theta; \xi)$ denotes the local loss of client $i$ evaluated at model $\theta$ on a sample $\xi$ drawn from $\mathcal{D}_i$, and $\omega_i \geq 0$ is the weight such that $\sum^N_{i=1} \omega_i = 1$. 
    %

    At the beginning of each communication round $t$, the central server broadcasts the current global model parameters $\theta^{[t]}$ to the participating clients. Each client $i$ subsequently computes a local model $\theta^{[t]}_i$ by optimizing the objective on its private dataset. These local models are then transmitted back to the server. Typically following the conventions established in~\cite{McMahanMRHA-AISTATS17,LiSZSTS-MLSys20}, the server aggregates them to derive the updated global model $\theta^{[t+1]}$ via weighted averaging such that $\theta^{[t+1]} = \sum^N_{i=1} \omega_i \theta^{[t]}_i$.

  \subsection{GRPO} \label{ssec:grpo}
    GRPO~\cite{ShaoWZXSBZZLW-arXiv24,GuoYZSWZXZMB-Nature25} departs from traditional actor-critic reinforcement learning algorithms. We consider a language model acting as a policy $\pi_{\theta}$, which maps an input prompt $q$ to a sequence of tokens $o$. This policy is parameterized by $\theta$. In conventional frameworks such as PPO~\cite{SchulmanWDRK-arXiv17}, a separate value network is required to estimate a baseline for advantage computation. This dual-network architecture increases memory consumption and computational complexity during training. In contrast, GRPO computes advantages by sampling a group of $K$ outputs $\{o_1, o_2, \dots, o_K\}$ for each input prompt $q$. For each output sample $o_k$, reward $r_k$ is assigned by a reward function $R(q, o_k)$, which may consist of rule-based verifiers, neural reward models, or a combination thereof. The advantage $A_k$ for each output $o_k$ is then calculated based on its relative performance within the group:
    \begin{equation} \label{eq:grpo_advantage}
      A_k = \frac{r_k - \mathrm{mean}(r_1, \dots, r_K)}{\mathrm{std}(r_1, \dots, r_K)}
    \end{equation}
    By utilizing the intra-group baseline, GRPO eliminates the need for a dedicated critic network. For each token position $\ell = 1,2,\dots,|o_{k}|$ within a specific output $o_k$, the surrogate objective is
    \begin{equation}
      \Phi_{k,\ell}(\theta) = \min \left\{ \phi_{k,\ell}(\theta)A_k, \text{clip} ( \phi_{k,\ell}(\theta), 1-c, 1+c ) A_k \right\}
    \end{equation}
    where $\phi_{k,\ell}(\theta) = \frac{\pi_\theta (o_{k,\ell} \mid o_{k,<\ell}, q)}{\pi_{\theta_{\text{old}}} (o_{k,\ell} \mid o_{k,<\ell}, q)}$
    %
    %
    represents the probability ratio between the current and old policies, and $c$ is a clipping hyperparameter. $\pi_\theta$ is optimized by maximizing the following objective function:
    \begin{align} \label{eq:obj_grpo}
      \mathcal{J}_{\text{GRPO}}(\theta) = \mathbb{E}_{q \sim \mathcal{D}, \{o_k\}^K_{k=1}\sim\pi_{\theta_{\text{old}}}(\cdot\mid q) } \left[ \frac{1}{K} \sum^K_{k=1} \left( \frac{1}{|o_k|} \sum^{|o_k|}_{\ell=1} \Phi_{k,\ell}(\theta) - \beta D_{\text{KL}}(\pi_\theta || \pi_{\text{ref}}) \right) \right]
    \end{align}
    where $\pi_{\mathrm{ref}}$ denotes a reference policy, $D_{\mathrm{KL}}(\pi_\theta\|\pi_{\mathrm{ref}})$ is the Kullback–Leibler (KL) divergence between the current policy $\pi_\theta$ and a reference policy $\pi_{\mathrm{ref}}$, and $\beta$ is a hyper-parameter controlling how strongly the optimization penalizes divergence from $\pi_{\mathrm{ref}}$. 

     Designing a federated GRPO framework on non-IID data is particularly challenging because GRPO is critic-free and derives advantages through intra-group normalization, making the resulting learning signals inherently local and not directly comparable across clients. In heterogeneous settings, clients may face prompts with very different difficulty levels and reward distributions, so the same normalized improvement can correspond to very different absolute progress across clients. Without a global or shared critic to anchor these signals, standard federated averaging becomes unreliable and may aggregate misaligned updates, leading to gradient mismatch and unstable optimization. Consequently, the core challenge is to restore cross-client comparability while remaining fully critic-free.

\section{FGRPO} \label{sec:fgrpo}

  \subsection{Overview} \label{ssec:overview}
    \textbf{Algorithm}~\ref{alg:fgrpo} presents the iterative execution of our FGRPO framework, orchestrating collaboration between a central server and $N$ distributed clients over $T$ communication rounds. In each round $t$, the process begins with the server broadcasting the current global parameters $\theta^{[t]}$ to all participating clients. On the client side (Lines 4-18), each participant performs $E$ steps of local update using its private dataset $\mathcal{D}_i$. Crucially, this involves sampling a group of $K$ outputs per prompt to compute intra-group advantages $A_k(q)$ without a critic network, updating the local policy via the GRPO objective, and simultaneously tracking a local return ($\bar{R}_i^{[t]}$) to quantify reasoning progress. Once these local updates are uploaded, the server executes the adaptive global phase (Lines 21-23). It calculates aggregation weights $\omega_i^{[t]}$ based on each client's RPG value, which is a metric derived from the client's improvement over its own historical baseline. Finally, the server applies an Adam-style adaptive update to the weighted gradients, ensuring the global model $\theta^{[t+1]}$ converges robustly despite the divergent reward scales inherent in heterogeneous reasoning tasks.
    \begin{algorithm}[t!]
    \caption{FGRPO}
    \label{alg:fgrpo}
    \begin{algorithmic}[1]
    \item[] {\bfseries Input:} Initialized global model $\theta^{[0]}$ and the first and second moments $m^{[-1]}=0, ~v^{[-1]}=0$, the maximum number of rounds $T$, the maximum number of local updates per round $E$.
    \item[] {\bfseries Output:} Final global model $\theta^{[T]}$.
    %
    %
    %
    \FOR{$t=0$ {\bfseries to} $T-1$}
      \STATE \textbf{Server} broadcasts global parameters $\theta^{[t]}$ to clients $\mathcal{N}$;
      \FOR{\textbf{each client} $i\in\mathcal{N}$ \textbf{in parallel}}
        \STATE $\theta^{[t,0]}_i = \theta^{[t]}$
        \FOR{$e=0$ {\bfseries to} $E-1$}
        \STATE Sample mini-batch of $B$ prompts $\mathcal{Q} \subseteq \mathcal{D}_i$;
        \FOR{each prompt $q \in \mathcal{Q}$}
          \STATE Generate $K$ outputs $\{o_{k}\}^K_{k=1} \sim \pi_{\theta^{[t,e]}_{i}}(\cdot|q)$;
          \STATE Obtain rewards $R(q, o_k)$, $\forall k$;
          %
          %
          \STATE Compute advantages $A_k(q)$ using Eq.~\eqref{eq:grpo_advantage}
        \ENDFOR
        %
        %
        \STATE $g^{[t,e]}_{i} = \nabla_{\theta_i} \mathcal{J}_{\mathrm{GRPO}}^{(i)}(\theta^{[t,e]}_{i})$ using Eq.~\eqref{eq:obj_grpo}
        \STATE $\theta^{[t,e+1]}_i = \theta^{[t,e]}_i + \alpha_i g^{[t,e]}_{i}$
        \STATE $\bar{R}^{[t,e]}_i = \frac{1}{BK}\sum_{q\in\mathcal{Q}}\sum^K_{k=1}R(q,o_k)$
      \ENDFOR
      \STATE $\bar{R}^{[t]}_{i} = \frac{1}{W} \sum^{E-1}_{e=E-W}\bar{R}^{[t,e]}_i$
      \STATE $\theta^{[t]}_i = \theta^{[t,E]}_i$
      \STATE Upload $\theta^{[t]}_i$ and $\bar{R}^{[t]}_{i}$ to the server
    \ENDFOR
    %
    %
    \STATE \textbf{Server side:}
    \STATE Calculate weight $\omega^{[t]}_i$ using Eq.~\eqref{eq:weight}
    \STATE Calculate moments $m^{[t]}$ and $v^{[t]}$ using Eq.~\eqref{eq:model_diff}--\eqref{eq:momentum}
    \STATE Update global model $\theta^{[t+1]} = \theta^{[t]} + \alpha \frac{m^{[t]}}{\sqrt{v^{[t]}} + \epsilon}$
  \ENDFOR
  \end{algorithmic}
  \end{algorithm}

  \subsection{Algorithm Design} \label{ssec:algdesign}
    \subsubsection{Local Updating at Client Side}
      In each communication round $t$, the central server broadcasts the current global model parameters $\theta^{[t]}$ to the clients. The goal of each client $i$ is to compute a local update that maximizes the GRPO objective based on its local data $\mathcal{D}_i$. To ensure training stability and minimize communication frequency, clients perform $E$ local update steps per round. 
      
      As detailed in \textbf{Algorithm}~\ref{alg:fgrpo}, each client initializes its local model as $\theta^{[t,0]}_i = \theta^{[t]}$. For each step $e\in\{0,1,\cdots,E-1\}$, client $i$ samples a mini-batch of $B$ prompts $\mathcal{Q}$ from the local data $\mathcal{D}_i$. For each prompt $q \in \mathcal{Q}$, client $i$ generates a group of $K$ outputs $\{o_1, \dots, o_K\}$ using the current policy $\pi_{\theta^{[t,e]}_i}$. After assigning rewards $\{r_1, \dots, r_K\}$ via the reward function $R(q, o_k)$, the client computes local advantage $A_k(q)$ for each prompt input $q$ and corresponding output $o_k$ using Eq.~\eqref{eq:grpo_advantage}, based on which, client $i$ calculates a stochastic gradient $g^{[t,e]}_i$ using Eq.~\eqref{eq:obj_grpo}, and then updates its local model:
      \begin{equation}
        \theta^{[t,e+1]}_i = \theta^{[t,e]}_i + \alpha_i g^{[t,e]}_i.
      \end{equation}
      where $\alpha_i$ is the local learning rate. 
      %
      %
      Simultaneously, the client tracks the average step reward, computed as $\bar{R}^{[t,e]}_i = \frac{1}{BK} \sum_{q\in\mathcal{Q}} \sum^K_{k=1} R(q,o_k)$. Given that rewards in earlier local steps often exhibit significant variance as the model begins to deviate from the global parameters, client $i$ computes a local performance indicator, $\bar{R}^{[t]}_i$, by averaging the rewards across the last $W$ local steps:
      \begin{equation}
        \bar{R}^{[t]}_i = \frac{1}{W} \sum^{E-1}_{e=E-W} \bar{R}^{[t,e]}_i.
      \end{equation}
      $\bar{R}^{[t]}_i$ represents the refined reasoning performance of client $i$ within round $t$. Finally, client $i$ uploads its updated local model $\theta^{[t]}_i = \theta^{[t,E]}_i$ and the reward estimate $\bar{R}^{[t]}_i$ to the server.

    \subsubsection{Model Aggregation at Server Side}
    %

      A key challenge in aggregating local models is the large variance in reward distributions across the clients with heterogeneous data. Since each client normalizes its advantages using a local baseline, a ``high'' advantage on one client may correspond to a lower absolute reasoning quality than a ``low'' advantage on another. As a result, simple averaging methods such as FedAvg~\cite{McMahanMRHA-AISTATS17} can be ineffective, as they treat all updates equally and may amplify conflicting gradients under non-IID data, where heterogeneity naturally induces gradient misalignment and unstable convergence. To address this issue, we propose an adaptive RPG-based aggregation scheme that weights each client’s contribution according to its \emph{relative performance gain} (RPG), measured against its own historical baseline. By prioritizing clients with consistent and meaningful progress, the server emphasizes high-quality learning trajectories while down-weighting noisy or stagnant updates, leading to more stable and effective global optimization.

      \textbf{Baseline Tracking}: The server maintains an \emph{exponential moving average} (EMA) baseline $\varphi^{[t]}_{i}$ for each client $i$:
      \begin{equation} \label{eq:ema}
        \varphi^{[t]}_{i} = \lambda_{\rm base} \bar{R}^{[t]}_i + (1-\lambda_{\rm base}) \varphi^{[t-1]}_{i},
      \end{equation}
      where $\lambda_{\rm base}\in(0,1]$ is a smoothing coefficient. This baseline serves as a ``personalized anchor'', representing the reward performance of client $i$ based on its recent history. The RPG is defined as the normalized improvement over this baseline:
      %
      %
      \begin{equation} \label{eq:rpg}
        h^{[t]}_i = \left(\bar{R}^{[t]}_i - \varphi^{[t-1]}_{i}\right) \big/ \left(\sigma^{[t]}_i + \varepsilon\right),
      \end{equation}
      where $\sigma^{[t]}_i = \mathrm{clip}\big(\iota \sigma^{[t-1]}_i + (1-\iota)\big|\bar{R}^{[t]}_i-\varphi^{[t-1]}_{i}\big|, \sigma_{\rm min}, \sigma_{\rm max} \big)$ is a clipped estimate of the reward fluctuation with $0<\iota\leq1$. Here, $\sigma_{\rm min}$ and $\sigma_{\rm max}$ are hyperparameters representing the lower and upper bounds of reward volatility. 
      %
      %
      A higher RPG indicates a more significant and consistent improvement in the client's reasoning capability.
      %
      %
      
      \textbf{Adaptive Weighting}: The server calculates the aggregation weight $\omega^{[t]}_i$ using a Boltzmann distribution with a dynamic temperature $\tau^{[t]}$:
      \begin{equation} \label{eq:weight}
        \omega^{[t]}_i = {\exp(h^{[t]}_i / \tau^{[t]})} \big/ {\sum^N_{i'=1} \exp(h^{[t]}_{i'} / \tau^{[t]})}.
      \end{equation}
      We employ an exponential annealing schedule for $\tau^{[t]}$:
      \begin{equation}
        \tau^{[t]} = \tau_{\rm min} + (\tau_{\rm max}-\tau_{\rm min})\exp(-\lambda_{\rm anneal} \cdot t).
      \end{equation}
      In early stages, a higher temperature smooths disparities to preserve optimization diversity; as training progresses, the temperature decreases to amplify the influence of the most ``reliable'' clients.

      \textbf{Global Update}: The server calculates a global pseudo-gradient according to the RPG-based weights:
      \begin{equation} \label{eq:model_diff}
        \Delta^{[t]}=\sum^N_{i=1}\omega^{[t]}_i(\theta^{[t]}_i- \theta^{[t]}),
      \end{equation}
      Subsequently, it updates the first and second moments
      \begin{equation} \label{eq:momentum}
        m^{[t]} = \beta_1 m^{[t-1]} + (1-\beta_1)\Delta^{[t]}, 
        ~\text{and}~ v^{[t]} = \beta_2 v^{[t-1]} + (1-\beta_2)(\Delta^{[t]})^2,
      \end{equation}
      where $\beta_1>0$ and $\beta_2>0$ are decay rates, and finally performs an adaptive global update:
      \begin{equation} \label{eq:model_update}
        \theta^{[t+1]} = \theta^{[t]} + \alpha \frac{m^{[t]}}{\sqrt{v^{[t]}} + \epsilon},
      \end{equation}
      where $\alpha$ is the server-side learning rate, and $\epsilon > 0$ is a small constant for numerical stability. Note that the squaring $(\cdot)^2$ and square root $\sqrt{~\cdot~}$ operations are applied element-wise.


  \subsection{Convergence Analysis} \label{ssec:convergence}
  %


    We first establish the necessary assumptions that have been widely adopted in the convergence analysis of federated optimization~\cite{McMahanMRHA-AISTATS17,LiSZSTS-MLSys20,ZhangWMA-ICLR24,FanMDJTL-NIPS21,ReddiCZGRKKM-ICLR21}.

    \begin{assumption}[$L$-Lipschitz Smoothness]
    \label{ass:main_lsmooth}
      Each local objective function $F_i$ is $L$-smooth for any $i \in \{1, \ldots, N\}$. Specifically, there exists constant $L \geq 0$ such that
      \begin{equation} \label{eq:lsmooth}
        \|\nabla F_i(\theta) - \nabla F_i(\theta')\| \le L \|\theta - \theta'\|, ~\forall \theta, \theta' \in \mathbb{R}^d.
      \end{equation}
      Consequently, the global objective function $F$ is also $L$-smooth.
    \end{assumption}
    \begin{assumption}[Unbiased Gradient and Bounded Variance]
    \label{ass:main_various}
      The stochastic gradient $g_i^{[t,e]}$ is an unbiased estimator of the true gradient $\nabla F_i(\theta_i^{[t,e]})$. We define the gradient noise as $\xi_i^{[t,e]} = g_i^{[t,e]} - \nabla F_i(\theta_i^{[t,e]})$. There exists a constant $\sigma \geq 0$ such that the noise satisfies the \emph{martingale difference sequence} (MDS) property:
      \begin{equation} \label{eq:various}
       \mathbb{E}\left[\xi_i^{[t,e]} \bigg| \mathcal{F}_{t,e}\right] = 0, \quad \mathbb{E}\left[\left\|\xi_i^{[t,e]}\right\|^2 \bigg| \mathcal{F}_{t,e}\right] \le \sigma^2,
      \end{equation}
      where the filtration $\mathcal{F}_{t,e}$ represents the information available up to local step $e$ of round $t$, including the current model state and all randomness revealed in previous local updates.
      %
      %
      Furthermore, the noise from different clients is uncorrelated given the same filtration.      
    \end{assumption}
    \begin{assumption}[Bounded Second Moment]
    \label{ass:main_secmoment}
      The second moment of the stochastic gradients is uniformly bounded. There exists a constant $G \geq 0$ such that:
      \begin{equation} \label{eq:secmoment}
        \mathbb{E}\left[\|g_i^{[t,e]}\|^2\right] \le G^2, \quad \forall i, t, e.
      \end{equation}
    \end{assumption}
    \begin{assumption}[Bounded Data Heterogeneity]
    \label{ass:main_datahet}
      There exists a constant $\kappa \geq 0$ such that for any $\theta \in \mathbb{R}^d$, the gradient heterogeneity across clients is bounded:
      \begin{equation} \label{eq:datahet}
        \frac{1}{N} \sum_{i=1}^N \|\nabla F_i(\theta) - \nabla F(\theta)\|^2 \le \kappa^2.
      \end{equation}
    \end{assumption}
    \begin{assumption}[Bounded RPG Score]
    \label{ass:main_boundrpg}
      The Relative Performance Gain (RPG) scores $h_i^{[t]}$ are uniformly bounded. There exists a constant $h_{\rm max} \geq 0$ such that:
      \begin{equation} \label{eq:boundrpg}
        \left| h_i^{[t]} \right| \le h_{\rm max}, \quad \forall i, t.
      \end{equation}
      %
    \end{assumption}
    \begin{assumption}[Bounded Adaptive Preconditioner]
    \label{ass:main_precondition}
      The eigenvalues of the $d\times d$ Adam-style preconditioner matrix $H^{[t]} = \mathrm{diag}((\sqrt{v^{[t]}} + \epsilon)^{-1})$ are bounded. Specifically, there exist positive constants $c_{\rm min}$ and $c_{\rm max}$ such that:
      \begin{equation}
        c_{\rm min} I \preceq H^{[t]} \preceq c_{\rm max} I, \quad \forall t.
      \end{equation}
      %
    \end{assumption}

    Based on the above assumptions, we establish the following convergence guarantee for FGRPO.
    \begin{theorem}[Non-Convex Convergence of FGRPO] \label{thm:main_convergence}
      Under \textbf{Assumptions}~\ref{ass:main_lsmooth}--\ref{ass:main_precondition}, when learning rates $\alpha = 1/T^{1/4}$ and $\alpha_i = 1/(ELT^{1/4}),~\forall i$, we have
      %
      %
      %
      \begin{align} 
        &\frac{1}{T} \sum_{t=0}^{T-1} \mathbb{E}\left[\left\|\nabla F(\theta^{[t]})\right\|^2\right] \nonumber\\
        \le& \underbrace{\frac{2L(F(\theta^{[0]}) - F^*)}{c_{\min}\sqrt{T}}}_{\text{Optimization term}~ \mathcal{O}(1/\sqrt{T})} + \underbrace{\mathcal{O}\!\left(\frac{1}{\sqrt{T}}\right)}_{\text{High-order decay term} } + \underbrace{\frac{4 c_{\max}^2}{c_{\min}^2}\exp\!\left(\frac{2 h_{\max}}{\tau_{\min}}\right)\left(\kappa^2 + \frac{\sigma^2}{E}\right)}_{\text{Irreducible error floor}~\mathcal{O}(1) }
      \end{align}
      where $F^* = \min_\theta F(\theta)$ denotes the global optimum.
      %
      %
    \end{theorem}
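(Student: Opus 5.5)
The argument is a one-step descent analysis of $F$ along the server update, viewed as a preconditioned step $\theta^{[t+1]} = \theta^{[t]} + \alpha H^{[t]} m^{[t]}$. Since GRPO maximizes $\mathcal{J}$, I read $F_i$ as $-\mathcal{J}^{(i)}_{\mathrm{GRPO}}$ and the ascent steps as descent on $F$; signs are then handled uniformly. By \textbf{Assumption}~\ref{ass:main_lsmooth},
\begin{equation*}
F(\theta^{[t+1]}) \le F(\theta^{[t]}) - \alpha\langle \nabla F(\theta^{[t]}), H^{[t]} m^{[t]}\rangle + \tfrac{L\alpha^2}{2}\|H^{[t]} m^{[t]}\|^2 .
\end{equation*}
To avoid the full Adam-momentum bookkeeping, I would first treat the $\beta_1$-averaging as a perturbation. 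Write $m^{[t]} = (1-\beta_1)\Delta^{[t]} + \beta_1 m^{[t-1]}$. The lagged part is controlled by $\|m^{[t-1]}\|\le \max_s \|\Delta^{[s]}\|$, which \textbf{Assumption}~\ref{ass:main_secmoment} bounds by $\alpha_i E G$. Together with $\alpha\alpha_i E = \mathcal{O}(1/\sqrt{T})$, this contribution goes into the ``high-order decay term''.

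The core step is to decompose the pseudo-gradient. Since $\Delta^{[t]} = -\alpha_i\sum_i \omega_i^{[t]}\sum_{e} g_i^{[t,e]}$, I would write
\begin{equation*}
\Delta^{[t]} = -\alpha_i E\Big[\nabla F(\theta^{[t]}) + \textstyle\sum_i \omega_i^{[t]}\big(\nabla F_i(\theta^{[t]})-\nabla F(\theta^{[t]})\big) + \tfrac{1}{E}\sum_{i,e}\omega_i^{[t]}\big(\nabla F_i(\theta_i^{[t,e]})-\nabla F_i(\theta^{[t]})\big) + \tfrac1E\sum_{i,e}\omega_i^{[t]}\xi_i^{[t,e]}\Big].
\end{equation*}
The first term yields $-\alpha\alpha_i E\, c_{\min}\|\nabla F\|^2$ through the lower eigenvalue bound of \textbf{Assumption}~\ref{ass:main_precondition}. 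The other three are cross terms, and I would split them with Young's inequality $\langle a, Hb\rangle \le \frac{c_{\min}}{2}\|a\|^2 + \frac{c_{\max}^2}{2c_{\min}}\|b\|^2$. This is the source of the ratio $c_{\max}^2/c_{\min}^2$ once everything is divided by $c_{\min}$. The three error pieces are handled as follows.

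\begin{itemize}
\item \textbf{Weight heterogeneity.} By \textbf{Assumption}~\ref{ass:main_boundrpg} and $\tau^{[t]}\ge\tau_{\min}$, each $\omega_i^{[t]}\le e^{2h_{\max}/\tau_{\min}}/N$. Jensen then gives $\|\sum_i\omega_i(\nabla F_i-\nabla F)\|^2 \le \sum_i \omega_i\|\cdot\|^2 \le e^{2h_{\max}/\tau_{\min}}\frac1N\sum_i\|\cdot\|^2 \le e^{2h_{\max}/\tau_{\min}}\kappa^2$ by \textbf{Assumption}~\ref{ass:main_datahet}.
\item \textbf{Client drift.} With $L$-smoothness, $\|\theta_i^{[t,e]}-\theta^{[t]}\|\le \alpha_i e G$ in expectation-square, so this piece is $\mathcal{O}(L^2\alpha_i^2E^2G^2) = \mathcal{O}(G^2/\sqrt{T})$. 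This is where the choice $\alpha_i = 1/(ELT^{1/4})$ is used.
\item \textbf{Noise.} The MDS property and cross-client uncorrelatedness of \textbf{Assumption}~\ref{ass:main_various} give $\mathbb{E}\|\frac1E\sum_{i,e}\omega_i\xi_i^{[t,e]}\|^2 \le \frac{\sigma^2}{E}\sum_i\omega_i^2 \le \frac{\sigma^2}{E}e^{2h_{\max}/\tau_{\min}}$.
\end{itemize}

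There is a subtlety: the weights $\omega_i^{[t]}$ depend on rewards generated in the same round, so they are not $\mathcal{F}_{t,e}$-measurable. I would therefore not rely on the noise mean being zero through the weights. Instead I would bound the whole noise-weighted term in second moment via Young's inequality. This keeps $\sigma^2/E$ as part of the irreducible floor, consistent with the statement.

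Finally, the quadratic term $\frac{L\alpha^2}{2}\|H m\|^2 \le \frac{L\alpha^2c_{\max}^2}{2}\alpha_i^2E^2G^2$ is higher order. Telescoping over $t$, dividing by $T\alpha\alpha_iEc_{\min}/2$, and noting that $\alpha\alpha_iE = 1/(L\sqrt{T})$, gives the leading term $2L(F(\theta^{[0]})-F^*)/(c_{\min}\sqrt T)$. The remaining floor is $\frac{4c_{\max}^2}{c_{\min}^2}e^{2h_{\max}/\tau_{\min}}(\kappa^2+\sigma^2/E)$; the factor 4 comes from splitting the two floor pieces and then halving. I expect the main obstacle to be the momentum/preconditioner coupling, since $H^{[t]}$ depends on $\Delta^{[t]}$, so the cross term $\langle\nabla F, H^{[t]}\nabla F\rangle$ is not a clean expectation. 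My plan is to use only the deterministic eigenvalue bounds, never conditional unbiasedness through $H^{[t]}$, and to absorb the $\beta_1$ lag into the $\mathcal{O}(1/\sqrt T)$ term as described above.
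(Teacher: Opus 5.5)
Your skeleton matches the paper: smoothness descent for the preconditioned step, Young's inequality producing $c_{\max}^2/c_{\min}^2$, and the heterogeneity, drift and noise pieces controlled via $\omega_i^{[t]}\le e^{2h_{\max}/\tau_{\min}}/N$. The step that absorbs the $\beta_1$-lag into the decay term, however, does not work. The lagged part of the first-order term is $\alpha\beta_1\langle\nabla F(\theta^{[t]}),H^{[t]}m^{[t-1]}\rangle$. Bounding it through $\|m^{[t-1]}\|\lesssim\alpha_iEG$ gives $\alpha\alpha_iE\,\beta_1c_{\max}G\,\|\nabla F(\theta^{[t]})\|$. That has the same prefactor $\alpha\alpha_iE$ as the descent you retain, $\alpha\alpha_iE(1-\beta_1)\frac{c_{\min}}{2}\|\nabla F(\theta^{[t]})\|^2$. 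The quantity $\alpha\alpha_iE=1/(L\sqrt T)$ is what you divide by at the end, not an extra smallness factor. After telescoping and normalizing you are left with $\frac{2\beta_1c_{\max}G}{(1-\beta_1)c_{\min}}\cdot\frac1T\sum_t\mathbb{E}\|\nabla F(\theta^{[t]})\|$. After Young this is an additional $\mathcal{O}(1)$ floor of order $\beta_1^2c_{\max}^2G^2/((1-\beta_1)^2c_{\min}^2)$. Moreover, only the fraction $1-\beta_1$ of the step produces descent, so the optimization term becomes $2L(F(\theta^{[0]})-F^*)/((1-\beta_1)c_{\min}\sqrt T)$. As written, your argument yields the stated bound only when $\beta_1=0$.

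The missing idea is to treat the lagged momentum as delayed descent, not as error. The paper unrolls $m^{[t]}=(1-\beta_1)\sum_{k\le t}\beta_1^{t-k}\Delta^{[k]}$ and expands every past round-gradient around the \emph{current} gradient: $\bar g^{[k]}=\nabla F(\theta^{[t]})+\delta^{[k,t]}+b^{[k]}+E^{-1}U^{[k]}+E^{-1}\Xi^{[k]}$, with staleness $\delta^{[k,t]}=\nabla F(\theta^{[k]})-\nabla F(\theta^{[t]})$. Each past round then contributes $\frac{c_{\min}}{2}\|\nabla F(\theta^{[t]})\|^2$, with total weight $1-\beta_1^{t+1}\approx1$, which gives the $\beta_1$-free leading constant. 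The price is $\mathbb{E}\|\delta^{[k,t]}\|^2\le L^2\,\mathbb{E}\|\theta^{[t]}-\theta^{[k]}\|^2\le L^2\alpha^2c_{\max}^2M_m^2(t-k)^2$. Because this is controlled by a displacement rather than a gradient magnitude, it is smaller than your bound by a factor of order $L\alpha\alpha_iE(t-k)=(t-k)/\sqrt T$. Since $\sum_j\beta_1^jj^2=\beta_1(1+\beta_1)/(1-\beta_1)^3<\infty$, the geometric sum is finite and $\beta_1$ appears only in an $\mathcal{O}(1/T)$ term.

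A smaller point: your noise bound $\frac{\sigma^2}{E}\sum_i(\omega_i^{[t]})^2$ drops cross-client terms with the weights inside the expectation. That contradicts your own remark that the weights depend on the same round's randomness. Use instead Jensen with the pointwise bound, $\|\sum_i\omega_i^{[t]}X_i\|^2\le\frac{C_\omega}{N}\sum_i\|X_i\|^2$ for $X_i=E^{-1}\sum_e\xi_i^{[t,e]}$, together with the MDS property within each client. This gives $C_\omega\sigma^2/E$, exactly as in the paper.
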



    The above theorem establishes a non-convex convergence guarantee for FGRPO, showing that the expected gradient norm decays at a rate of $O(1/\sqrt{T})$, matching the optimal rate for stochastic optimization~\cite{ShaoWZXSBZZLW-arXiv24,PangJ-arXiv25}. Beyond this asymptotic behavior, the bound reveals a non-vanishing error floor, consistent with traditional FedRL methods even with global or shared critic~\cite{YangCWCC-NIPS24,XieS-arXiv25}, which is governed by data heterogeneity ($\kappa^2$) and stochastic gradient noise ($\sigma^2/E$). Moreover, the bound explicitly characterizes the effect of RPG-based aggregation through $h_{\max}/\tau_{\min}$. In particular, the error floor is scaled by $\exp(2h_{\max}/\tau_{\min})$, indicating that larger values concentrate aggregation on a subset of clients and amplify heterogeneity, while smaller values promote more balanced updates and improved stability. 
    %
    %
    In a nutshell, $h_{\max}/\tau_{\min}$ governs how this irreducible heterogeneity is expressed, revealing a trade-off between selective aggregation and stable convergence. The detailed proof is provided in Appendix~\ref{sec:app_analysis}.

\section{Experiments} \label{sec:exp}
  \subsection{Implementation} \label{ssec:implement}
    To improve computational and communication efficiency, we incorporate Low-Rank Adaptation (LoRA)~\cite{HuSWALWWC-ICLR22} into FGRPO. LoRA is a parameter-efficient fine-tuning method that freezes the pre-trained model weights and injects trainable low-rank matrices, thereby substantially reducing the number of trainable parameters and GPU memory requirements. Specifically, for each client $i \in \mathcal{N}$ in \textbf{Algorithm}~\ref{alg:fgrpo}, we keep the backbone weights fixed and optimize only local adapter parameters $\theta_i=\{\mathbf{A}_i,\mathbf{B}_i\}$ throughout all communication rounds. Accordingly, the local gradient $g_i^{[t,e]}$ is calculated only with respect to these low-rank adapters. This design replaces full-model transmission with the communication of compact adapter matrices, so the server aggregates low-rank updates directly. As a result, FGRPO substantially reduces computation and communication cost while ensuring the effectiveness of policy optimization. More implementation details are given in Appendix~\ref{sec:app_lora}.

  \subsection{Experiment setup} \label{ssec:expset}
    We select \textbf{Qwen2.5-VL-3B-Instruct}, \textbf{Qwen2.5-VL-7B-Instruct}~\cite{bai2025qwen2}, \textbf{Qwen3-VL-4B-Instruct}~\cite{bai2025qwen3} and \textbf{Llama-3.2-11B-Vision-Instruct}~\cite{grattafiori2024llama} to evaluate FGRPO across different scales, ranging from resource-efficient edge-level models to high-performance reasoning models. 
    We evaluate FGRPO on the \textbf{OpenR1}~\cite{openr1_verified} and \textbf{GEOQA}~\cite{chen2025r1v} benchmarks. OpenR1 is a large-scale reasoning dataset with verified reasoning traces, including mathematical reasoning and other multi-step tasks, while GEOQA is a multimodal geometric question answering benchmark for numerical reasoning over both textual descriptions and visual diagrams. For OpenR1, we simulate data heterogeneity by partitioning samples into three difficulty tiers (\textit{simple}, \textit{medium}, and \textit{hard}) according to reasoning trace length, and then allocating them to clients using a Dirichlet distribution $\mathsf{Dir}(\mu)$, where smaller $\mu$ induces more severe non-IID partitions. For GEOQA, we instead model domain heterogeneity by partitioning samples according to geometric primitives (\textit{points}, \textit{lines}, \textit{circles}, and \textit{polygons}) under the same Dirichlet-based strategy, such that different clients specialize in distinct geometric concepts. We set $\mu=0.05$ for both datasets to create highly heterogeneous client distributions. Each dataset is split into 80\% for training and 20\% for testing.

    We compare FGRPO with three representative FL baselines. \textbf{FedAvg} performs standard data-volume-weighted model aggregation~\cite{McMahanMRHA-AISTATS17}. \textbf{FedProx} extends this framework by introducing a proximal regularization term in the local objective to mitigate client drift under non-IID data~\cite{LiSZSTS-MLSys20}. \textbf{SCAFFOLD} further addresses the data heterogeneity by using control variates to reduce the discrepancy between local and global update directions~\cite{KarimireddyKMRS-ICML20}. Although these methods were not originally designed for GRPO, their mechanisms for handling non-IID data can be naturally adapted to our FGRPO framework. To compute the reward for reinforcement learning, we evaluate the reasoning process by comparing the predicted answer against the ground-truth reference. A binary reward of 1 is assigned if the prediction matches the reference, and 0 otherwise. Consequently, we define the accuracy of a fine-tuned model as the aggregate proportion of these correct predictions across the test set. We implement these different algorithms on a cluster equipped with ten NVIDIA RTX Pro 6000 GPUs. 
    %
    %
    More details about the experiment settings can be found in the appendix (see Appendix~\ref{sec:app_expsetting}).
    %

  \subsection{Experiment Results} \label{ssec:expresults}
    Due to space limitations, we report results only for a five-client FL system with $\mu=0.05$, $\lambda_{\rm base} = 0.8$, $\tau_{\rm min} = 1.5$, $\tau_{\rm max} = 2.5$, $\sigma_{\rm min} = 0.05$, $\sigma_{\rm max} = 0.2$, and $\lambda_{\rm anneal} = 0.1$. 
    Additional experimental results, including the effects of varying the number of clients and data heterogeneity levels, extensions to different GRPO variants with RPG ablations, hyperparameter analysis, and resource consumption, are provided in Appendix~\ref{sec:appexp}.

    Table~\ref{tab:main_results} shows the model accuracy of the different algorithms. On Open-R1, FGRPO obtains the highest total accuracy across all four backbone models. For Qwen2.5-3B, FGRPO improves the total accuracy from the strongest baseline of 38.84\% to 41.86\%, yielding a gain of 3.02\%. For Qwen2.5-7B, FGRPO achieves 47.68\%, outperforming the best baseline FedProx by 1.84\%. The advantage remains consistent on larger and cross-family backbones. On Qwen3-4B, FGRPO reaches 43.16\% total accuracy, surpassing the strongest baseline FedProx by 1.46\%. This improvement is particularly pronounced on the challenging \emph{hard} split, where FGRPO achieves 18.80\%, substantially higher than FedProx at 14.37\%. This indicates that the proposed RPG-based aggregation is especially beneficial for difficult reasoning examples, where clients may exhibit heterogeneous local optimization progress and reward scales. On Llama-3.2-11B, FGRPO also achieves the best total accuracy of 41.88\%, outperforming the strongest baseline SCAFFOLD by 0.72\%. More importantly, FGRPO again shows clear advantages on the \emph{hard} split, obtaining 35.33\% compared with 33.29\% from the best baseline. These results suggest that FGRPO is not only effective for Qwen-based models, but can also generalize to a distinct vision-language model family. The advantage of FGRPO is even more evident on the more challenging \emph{hard} split of Open-R1. Compared with the strongest baseline, FGRPO improves the Hard accuracy by 5.27\%, 4.02\%, 4.43\%, and 2.04\% on Qwen2.5-3B, Qwen2.5-7B, Qwen3-4B, and Llama-3.2-11B, respectively.

    On GEOQA, FGRPO also consistently achieves the best total performance across all four models. For Qwen2.5-3B, FGRPO reaches 40.76\% total accuracy, surpassing the best baseline FedProx by 1.48\%. On Qwen2.5-7B, FGRPO further improves the total score to 51.36\%, outperforming FedProx by 2.41\%. The same trend holds on Qwen3-4B, where FGRPO obtains the highest total accuracy of 53.93\%, improving over FedProx by 1.00\%. Although FedAvg and SCAFFOLD perform competitively on some individual geometric categories, FGRPO achieves the best results on \emph{lines} and \emph{circles}, indicating stronger robustness on diverse geometric reasoning tasks. On Llama-3.2-11B, FGRPO achieves 28.42\% total accuracy, surpassing FedAvg by 1.60\%. Notably, FGRPO obtains the best performance across all GEOQA subcategories, including \emph{points}, \emph{lines}, \emph{circles}, and \emph{polygons}. This further confirms that the benefits of FGRPO are not limited to specific models or datasets, but remain effective across different backbone architectures and reasoning task structures.
    \begin{table*}[t]
    \centering
    \caption{Test accuracy (\%) of different methods on the Open-R1 and GEOQA benchmarks. Results are reported as mean$\pm$std. \textbf{Bold} values indicate the best performance among decentralized methods, while \underline{underlined} values denote the second-best results.}
    \label{tab:main_results}
    \small
    \setlength{\tabcolsep}{2.5pt}
    \renewcommand{\arraystretch}{1.12}
    \resizebox{\textwidth}{!}{
    \begin{tabular}{llccccccccc}
    \toprule
    \multicolumn{2}{c}{\textbf{Setup}} 
    & \multicolumn{4}{c}{\textbf{Open-R1}} 
    & \multicolumn{5}{c}{\textbf{GEOQA}} \\
    \cmidrule(lr){1-2} \cmidrule(lr){3-6} \cmidrule(lr){7-11}
    \textbf{Model} & \textbf{Method} 
    & \textbf{Simple} & \textbf{Medium} & \textbf{Hard} & \textbf{Total}
    & \textbf{Points} & \textbf{Lines} & \textbf{Circles} & \textbf{Polygons} & \textbf{Total} \\
    \midrule
    \multirow{4}{*}{Qwen2.5-3B}
    & FedAvg   
    & \underline{47.63$\pm$1.15} & 40.24$\pm$1.45 & \underline{28.08$\pm$3.75} & 38.64$\pm$1.09 
    & 41.43$\pm$4.94 & 36.89$\pm$2.69 & 40.93$\pm$2.74 & 31.78$\pm$0.93 & 35.59$\pm$1.31 \\
    & FedProx  
    & 41.80$\pm$1.33 & 42.04$\pm$0.67 & 24.55$\pm$1.60 & 36.12$\pm$0.86 
    & \underline{47.62$\pm$6.52} & 40.99$\pm$3.17 & \underline{44.89$\pm$1.15} & \underline{35.09$\pm$0.70} & \underline{39.28$\pm$1.14} \\
    & SCAFFOLD 
    & 46.37$\pm$2.43 & \underline{43.00$\pm$4.09} & 27.19$\pm$0.98 & \underline{38.84$\pm$1.68} 
    & \textbf{51.90$\pm$3.53} & \underline{41.37$\pm$1.68} & 40.79$\pm$1.06 & 34.98$\pm$0.66 & 38.08$\pm$0.85 \\
    & FGRPO   
    & \textbf{48.11$\pm$0.96} & \textbf{44.14$\pm$3.87} & \textbf{33.35$\pm$2.19} & \textbf{41.86$\pm$1.21} 
    & \underline{47.62$\pm$4.12} & \textbf{45.34$\pm$2.64} & \textbf{46.65$\pm$1.38} & \textbf{35.88$\pm$1.40} & \textbf{40.76$\pm$0.74} \\
    \midrule
    \multirow{4}{*}{Qwen2.5-7B}
    & FedAvg   
    & \textbf{60.90$\pm$0.98} & 43.72$\pm$0.89 & \underline{32.75$\pm$2.38} & 45.78$\pm$0.70 
    & \underline{58.57$\pm$6.43} & 50.06$\pm$2.35 & \underline{55.33$\pm$2.06} & 44.20$\pm$1.20 & 48.87$\pm$1.32 \\
    & FedProx  
    & \underline{58.56$\pm$1.96} & 46.25$\pm$1.20 & \underline{32.75$\pm$1.83} & \underline{45.84$\pm$1.19} 
    & 56.67$\pm$2.61 & 51.68$\pm$3.21 & 52.47$\pm$1.58 & \textbf{45.84$\pm$1.68} & \underline{48.95$\pm$1.59} \\
    & SCAFFOLD 
    & 55.98$\pm$2.20 & \underline{48.17$\pm$1.48} & 32.69$\pm$3.16 & 45.60$\pm$1.10 
    & 57.62$\pm$7.22 & \underline{52.17$\pm$3.26} & 51.94$\pm$1.48 & 44.99$\pm$1.89 & 48.42$\pm$1.75 \\
    & FGRPO   
    & 55.44$\pm$1.68 & \textbf{50.87$\pm$1.43} & \textbf{36.77$\pm$2.20} & \textbf{47.68$\pm$0.79} 
    & \textbf{60.00$\pm$4.58} & \textbf{55.40$\pm$1.55} & \textbf{59.16$\pm$1.29} & \underline{45.34$\pm$2.17} & \textbf{51.36$\pm$0.90} \\
    \midrule
    \multirow{4}{*}{Qwen3-4B}
    & FedAvg   
    & 62.82$\pm$3.49 & \textbf{49.37$\pm$2.38} & 8.38$\pm$3.16 & 40.16$\pm$0.69 
    & 53.33$\pm$7.06 & 45.34$\pm$2.37 & 53.79$\pm$1.57 & \textbf{51.15$\pm$2.12} & 51.40$\pm$1.86 \\
    & FedProx  
    & \textbf{65.83$\pm$1.61} & 44.98$\pm$1.92 & \underline{14.37$\pm$3.47} & \underline{41.70$\pm$0.55} 
    & \underline{56.19$\pm$4.33} & 52.17$\pm$1.46 & \underline{57.31$\pm$0.78} & 50.28$\pm$1.64 & \underline{52.93$\pm$0.83} \\
    & SCAFFOLD 
    & 63.06$\pm$1.64 & 45.05$\pm$0.74 & 9.21$\pm$1.88 & 39.08$\pm$0.67 
    & \textbf{60.48$\pm$2.71} & \underline{52.42$\pm$2.90} & 54.67$\pm$1.85 & 49.75$\pm$1.36 & 51.95$\pm$1.27 \\
    & FGRPO   
    & \underline{65.47$\pm$1.18} & \underline{45.29$\pm$2.51} & \textbf{18.80$\pm$3.61} & \textbf{43.16$\pm$1.50} 
    & 53.81$\pm$3.98 & \textbf{55.40$\pm$0.81} & \textbf{58.59$\pm$1.94} & \underline{50.83$\pm$1.28} & \textbf{53.93$\pm$1.37} \\
    \midrule
    \multirow{4}{*}{Llama-3.2-11B}
    & FedAvg   
    & \textbf{49.25$\pm$2.03} & 40.84$\pm$1.52 & 31.80$\pm$1.43 & 40.62$\pm$0.46 
    & 33.81$\pm$5.43 & 30.06$\pm$5.28 & 27.49$\pm$2.17 & \underline{25.34$\pm$1.17} & \underline{26.82$\pm$1.33} \\
    & FedProx  
    & 48.83$\pm$1.25 & 40.24$\pm$2.12 & 32.04$\pm$2.03 & 40.36$\pm$0.72 
    & 32.86$\pm$5.43 & \underline{30.68$\pm$3.12} & 27.49$\pm$0.98 & 24.49$\pm$0.75 & 26.41$\pm$0.66 \\
    & SCAFFOLD 
    & 48.17$\pm$1.17 & \textbf{42.04$\pm$2.16} & \underline{33.29$\pm$2.23} & \underline{41.16$\pm$1.64} 
    & \underline{35.24$\pm$4.26} & 29.81$\pm$2.85 & \underline{29.38$\pm$0.96} & 24.04$\pm$0.84 & 26.75$\pm$0.80 \\
    & FGRPO   
    & \underline{49.07$\pm$1.87} & \underline{41.26$\pm$1.51} & \textbf{35.33$\pm$3.10} & \textbf{41.88$\pm$0.89} 
    & \textbf{36.19$\pm$1.06} & \textbf{31.18$\pm$3.02} & \textbf{30.13$\pm$1.38} & \textbf{26.37$\pm$1.64} & \textbf{28.42$\pm$1.00} \\
    \bottomrule
    \end{tabular}
    }
\end{table*}

    The training trajectories in Fig.~\ref{fig:acc_convergence} further validate the effectiveness of FGRPO during federated optimization. For both Open-R1 and GEOQA, FGRPO generally reaches higher final accuracy and maintains a stronger upward trend over communication rounds compared with the other baselines. In particular, the advantage becomes more pronounced in the later training rounds, indicating that FGRPO can better accumulate useful policy improvements rather than being dominated by clients with larger absolute reward scales. The reward curves in Fig.~\ref{fig:reward_convergence_comparison_with_baseline} show a consistent pattern: FGRPO achieves more favorable reward trajectories across local update steps, which aligns with its superior downstream accuracy.
    \begin{figure*}[t!]
    \centering
      \parbox{.245\textwidth}{\center\includegraphics[width=.245\textwidth]{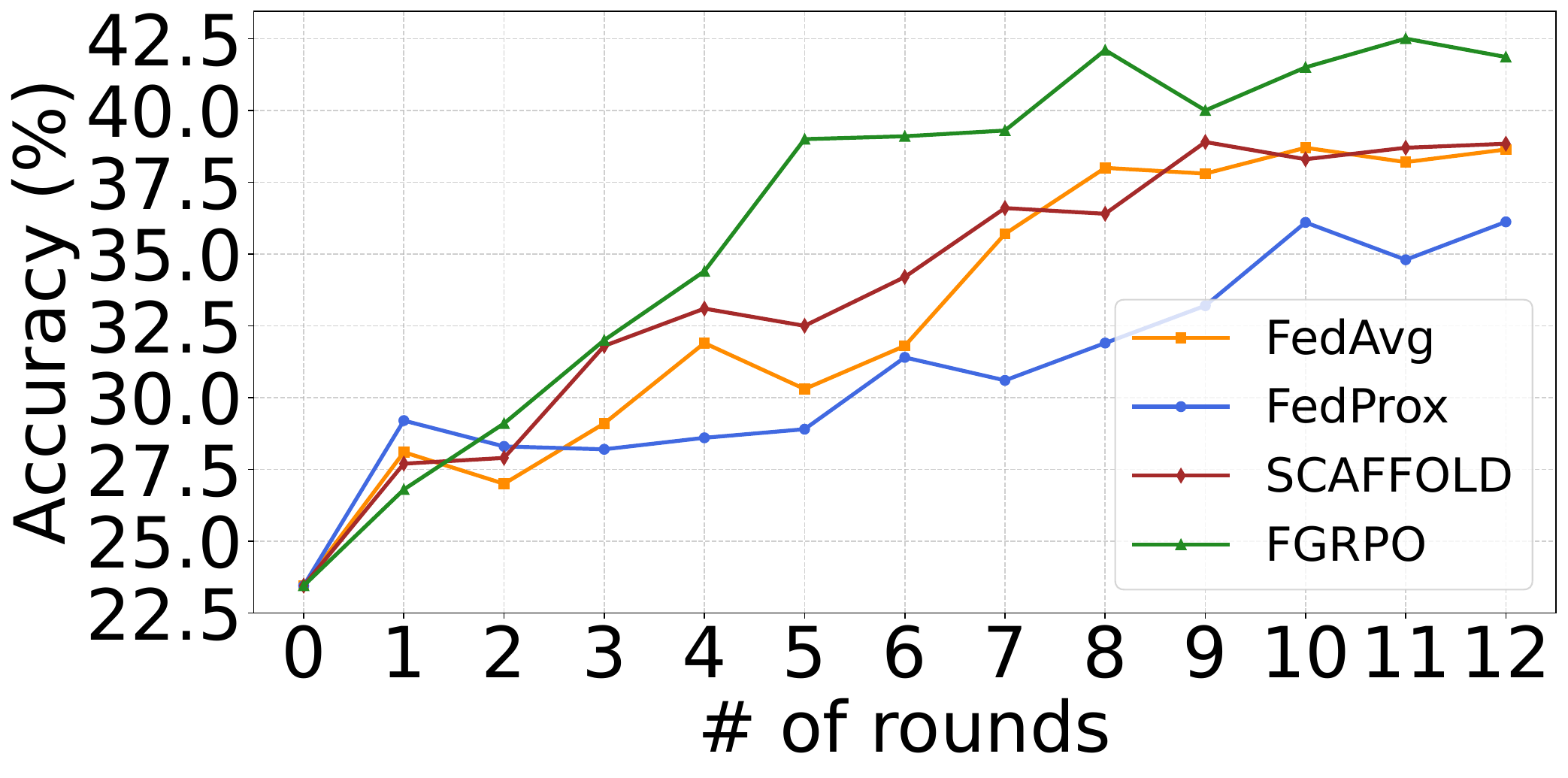}}
      \parbox{.245\textwidth}{\center\includegraphics[width=.245\textwidth]{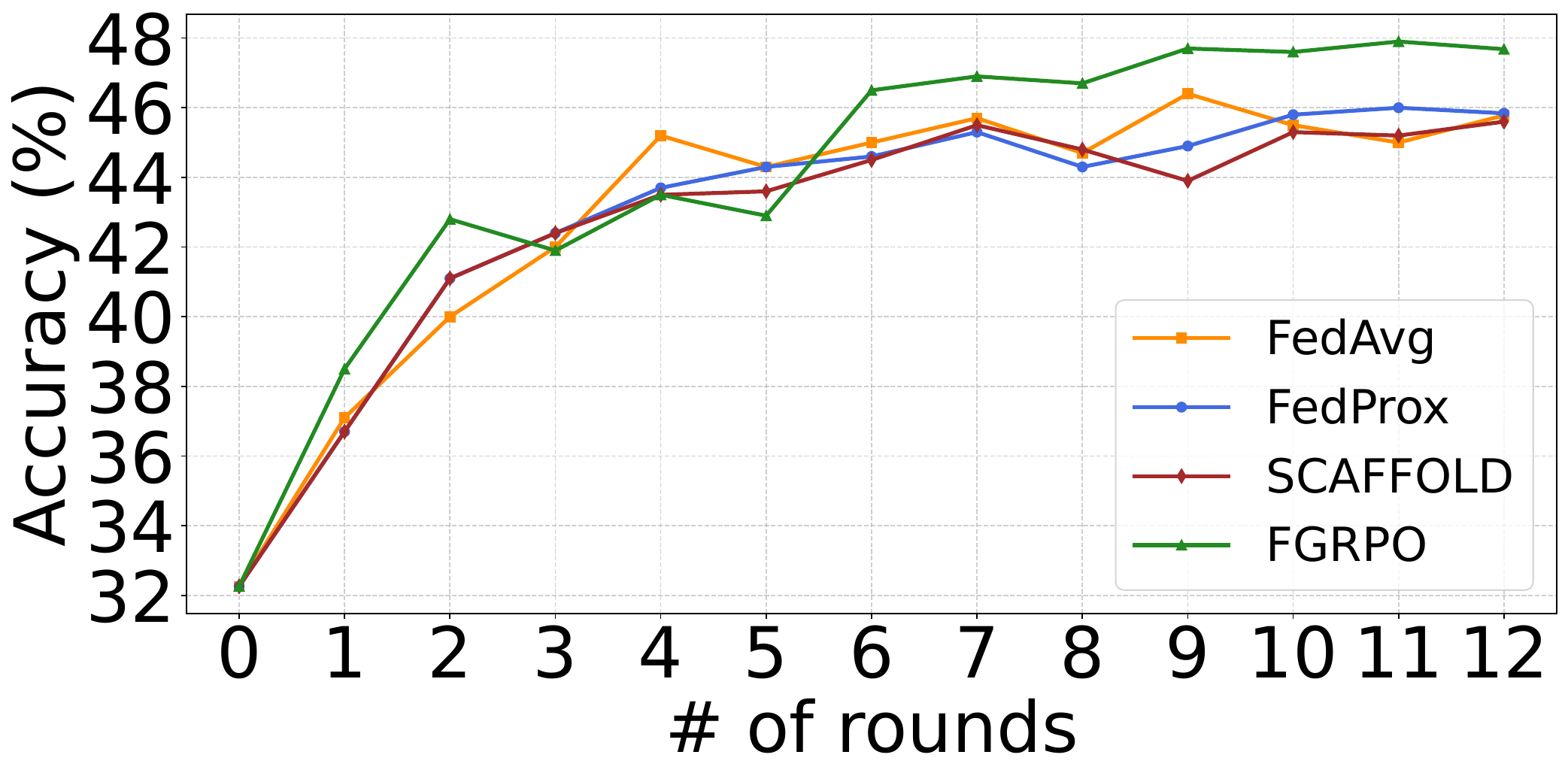}}
      \parbox{.245\textwidth}{\center\includegraphics[width=.245\textwidth]{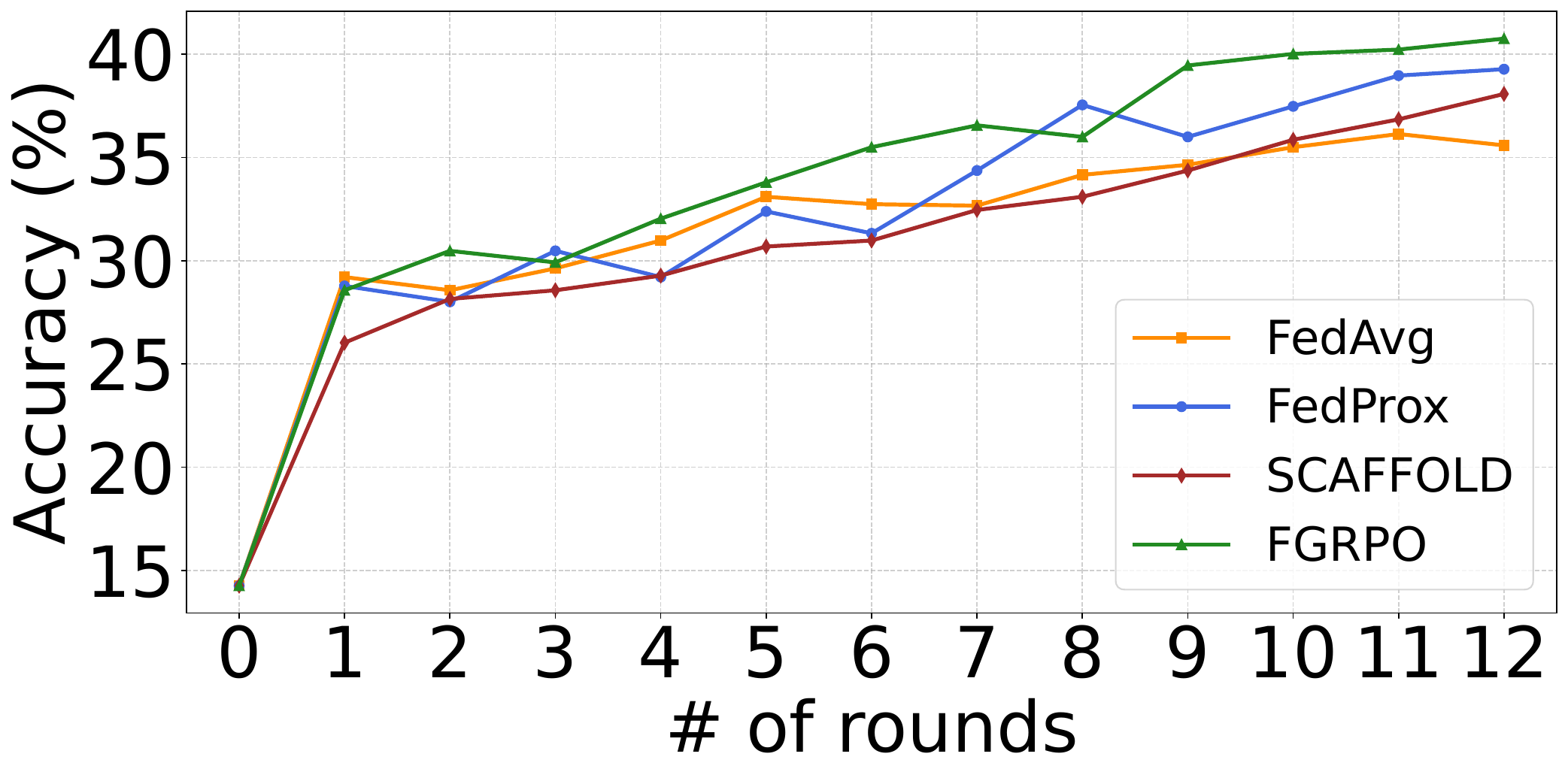}}
      \parbox{.245\textwidth}{\center\includegraphics[width=.245\textwidth]{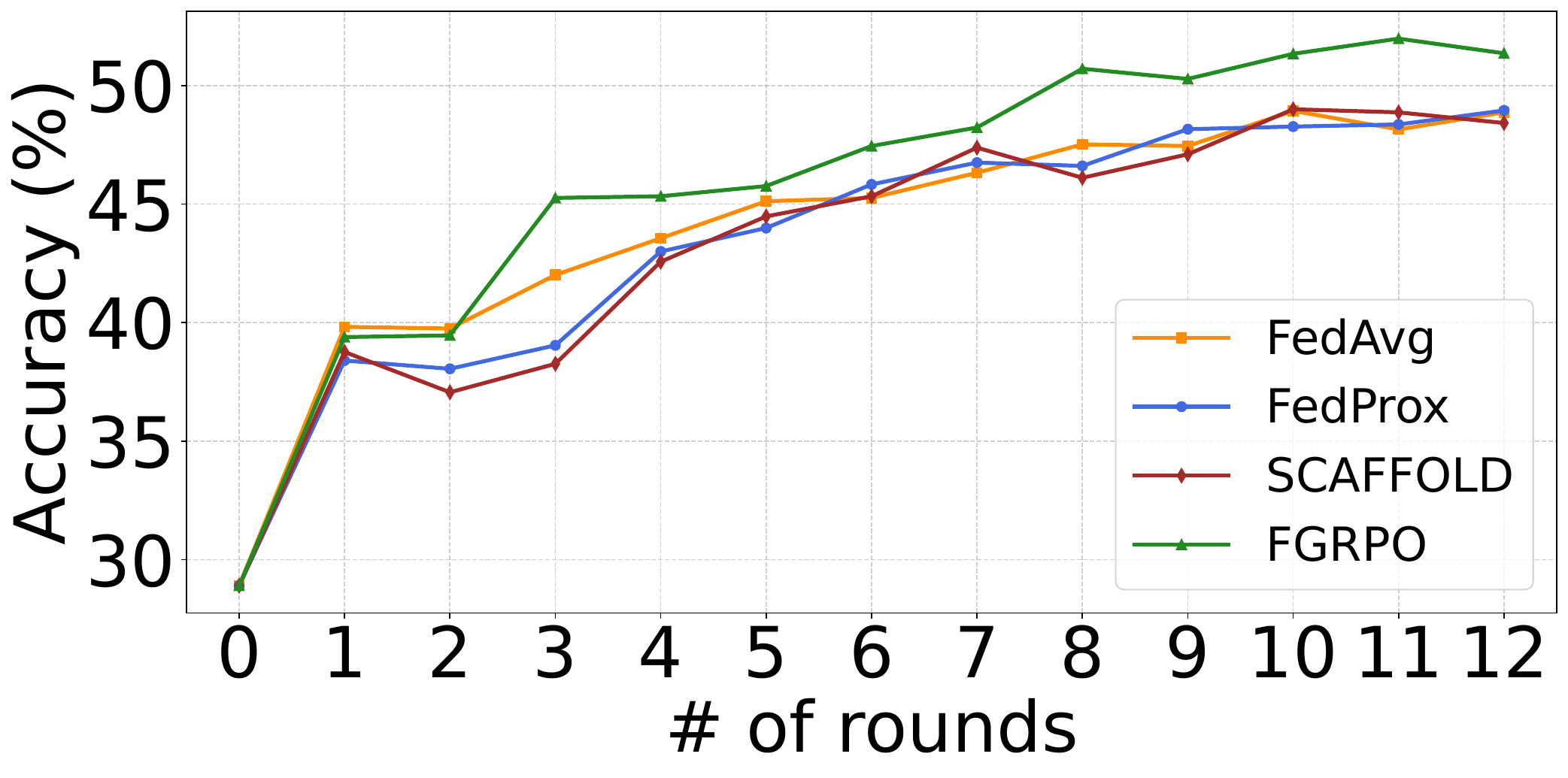}}
      \parbox{.245\textwidth}{\center\scriptsize(a) 3B model on OpenR1}
      \parbox{.245\textwidth}{\center\scriptsize(b) 7B model on OpenR1}
      \parbox{.245\textwidth}{\center\scriptsize(c) 3B model on GEOQA}
      \parbox{.245\textwidth}{\center\scriptsize(d) 7B model on GEOQA}
    \caption{Convergence performance of different methods in terms of model accuracy with Qwen2.5-3B and Qwen2.5-7B models.}    
    \label{fig:acc_convergence}
    \end{figure*}
    \begin{figure*}[t!]
    \centering
      \parbox{.245\textwidth}{\center\includegraphics[width=.245\textwidth]{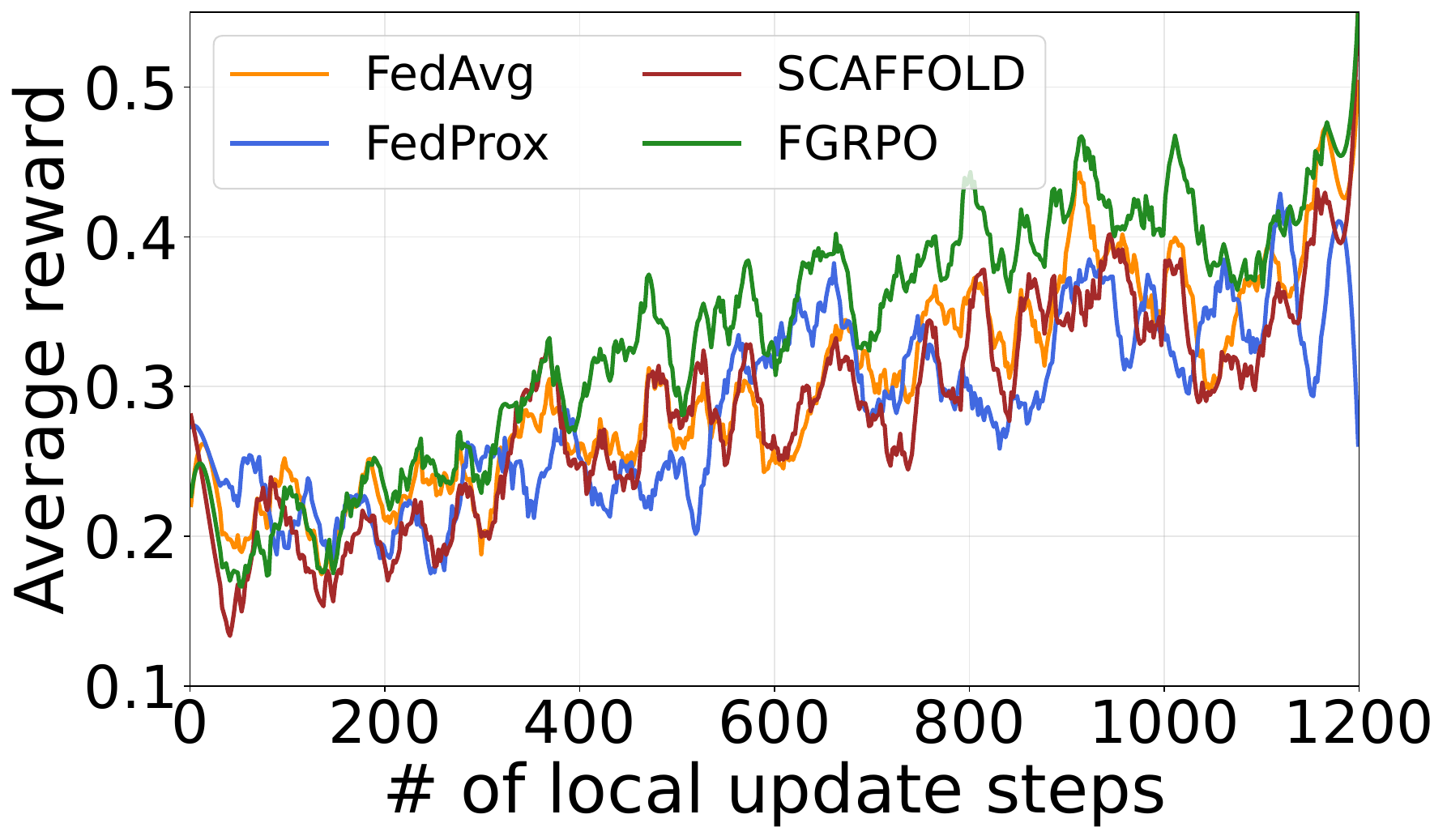}}
      \parbox{.245\textwidth}{\center\includegraphics[width=.245\textwidth]{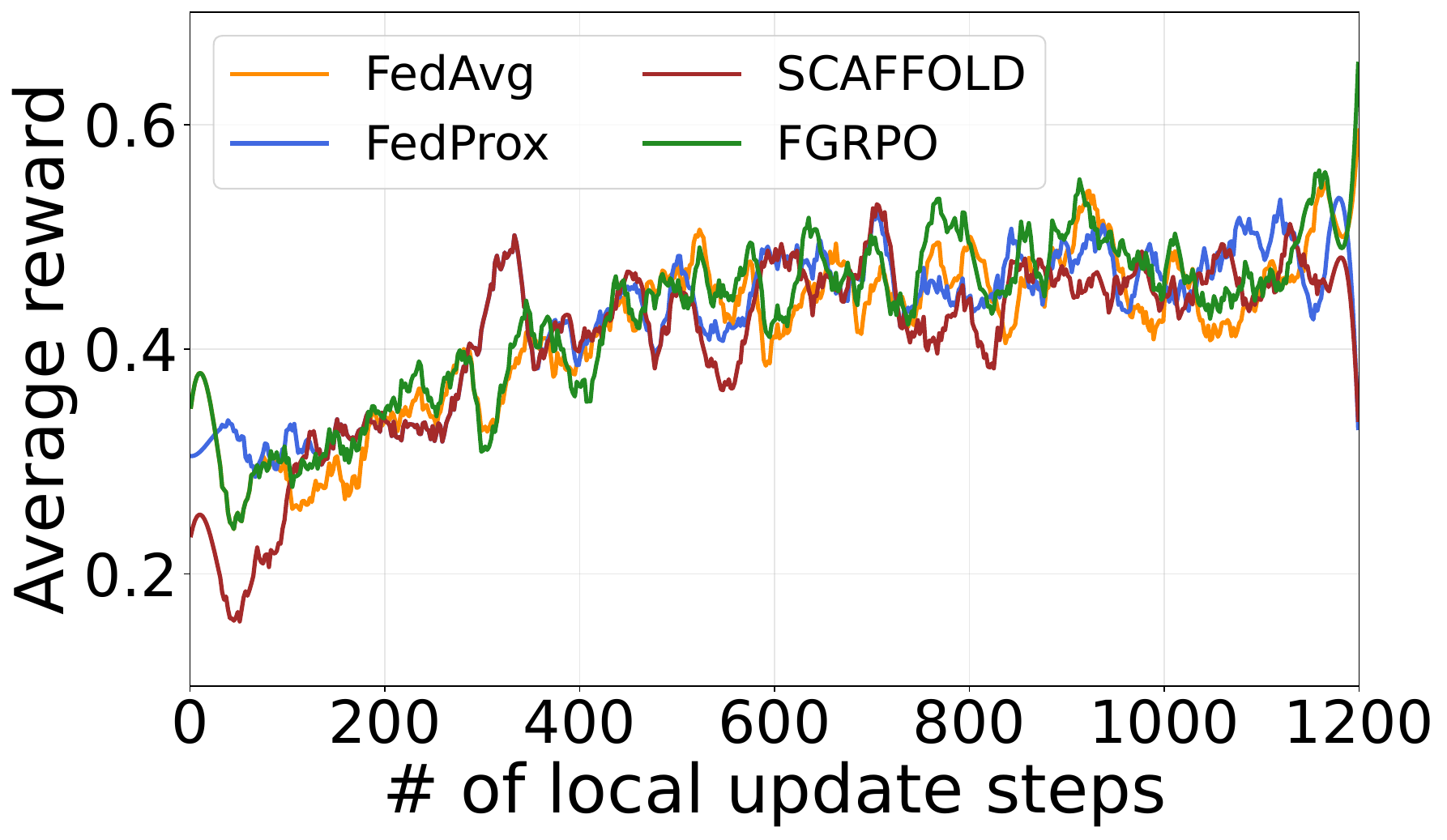}}
      \parbox{.245\textwidth}{\center\includegraphics[width=.245\textwidth]{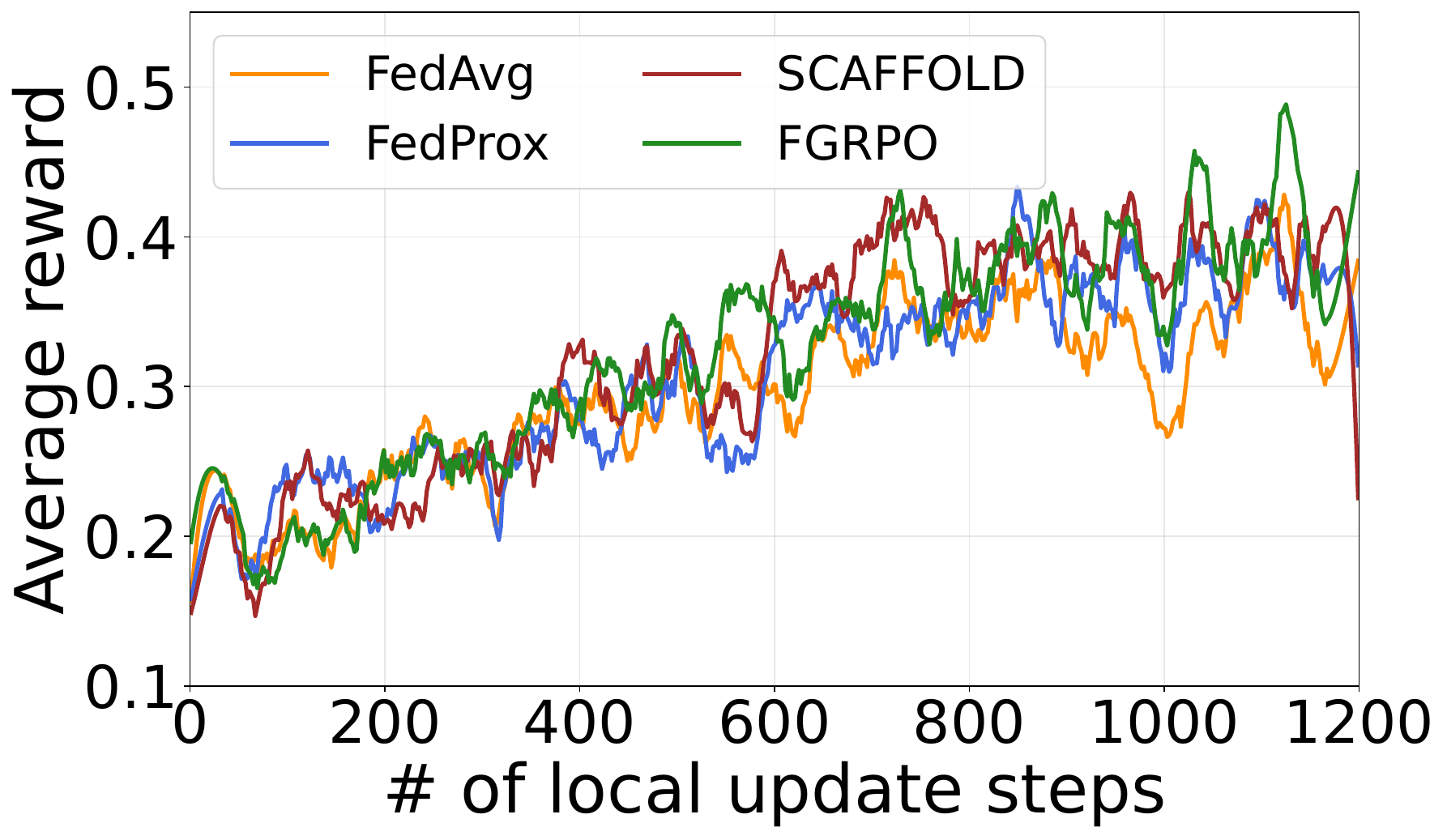}}
      \parbox{.245\textwidth}{\center\includegraphics[width=.245\textwidth]{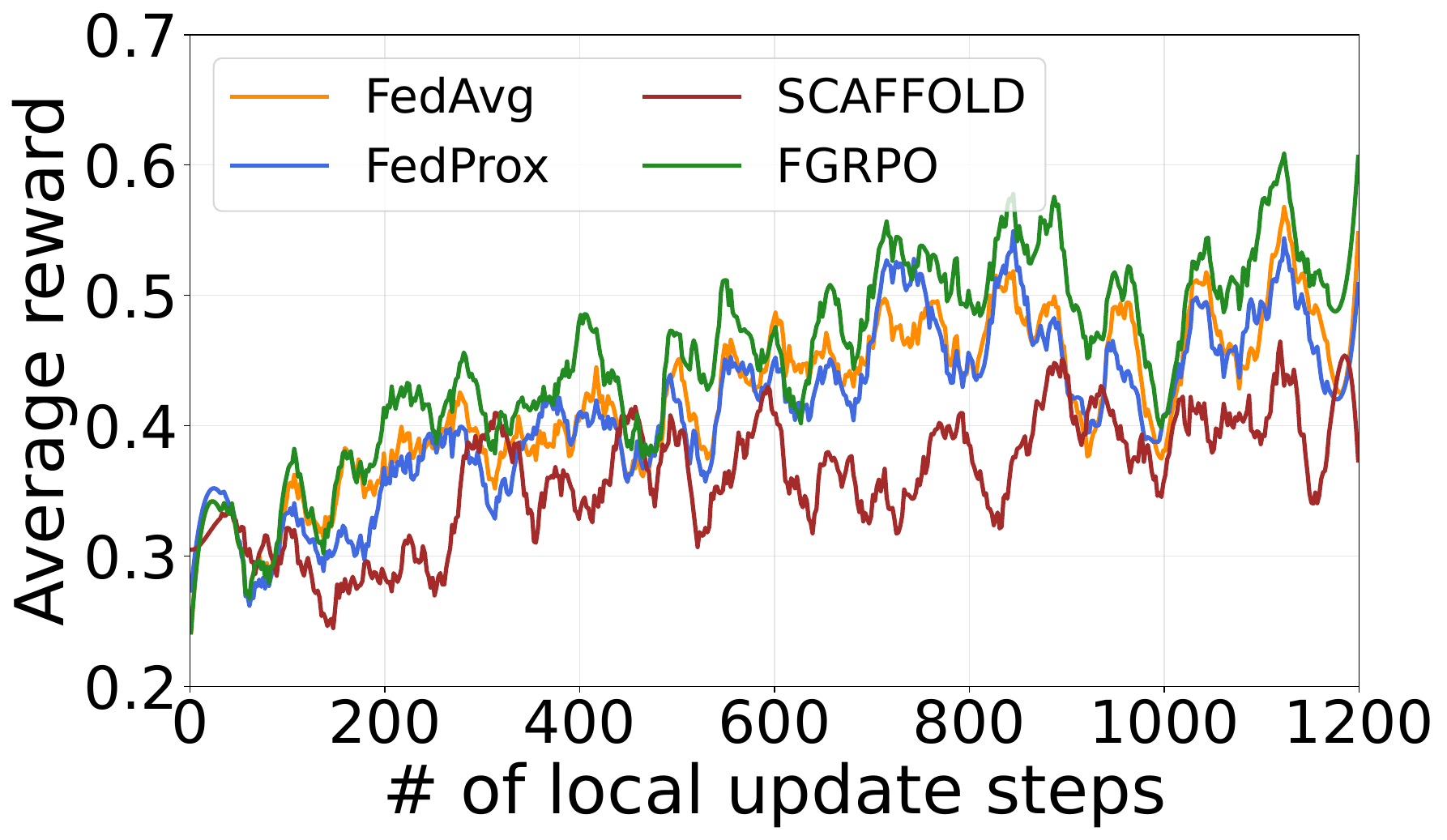}}
      \parbox{.245\textwidth}{\center\scriptsize(a) 3B model on OpenR1}
      \parbox{.245\textwidth}{\center\scriptsize(b) 7B model on OpenR1}
      \parbox{.245\textwidth}{\center\scriptsize(c) 3B model on GEOQA}
      \parbox{.245\textwidth}{\center\scriptsize(d) 7B model on GEOQA}
      \caption{Convergence performance of different algorithms in terms of average reward with Qwen2.5-3B and Qwen2.5-7B models.}
    \label{fig:reward_convergence_comparison_with_baseline}
    \end{figure*}

\section{Brief Literature Survey} \label{sec:relwork}
%
  %
  FedRL synergizes collaborative learning with sequential decision-making to enhance sample efficiency while preserving privacy. Recent theoretical advances have established rigorous convergence guarantees under data heterogeneity \cite{ZhangWMA-ICLR24} and Markovian sampling complexities \cite{KhodadadianSJM-ICML22}, with extensions to offline settings identifying sufficient conditions for global optimality \cite{WooSJC-ICML24}. Beyond theory, research has addressed practical system constraints, including robustness against Byzantine failures \cite{FanMDJTL-NIPS21, FangWG-WWW25} and asynchronous communication delays \cite{LanHHAB-ICLR25}. Furthermore, managing heterogeneity remains a central focus, where methods utilizing momentum-based aggregation \cite{WangHZMA-ICML24}, policy distillation \cite{JiangWZBTF-AAMAS25}, and shared representation learning \cite{XiongWJL-ICLR25} have been proposed to stabilize updates across diverse environments.

  %
  GRPO eliminates the critic network to scale reasoning model fine-tuning via group-level reward normalization \cite{ShaoWZXSBZZLW-arXiv24,GuoYZSWZXZMB-Nature25}. To mitigate the high computational costs and lack of sensitivity in the original framework, recent works have introduced efficiency optimizations such as completion pruning \cite{LinLXJ-arXiv23} and semantic entropy integration \cite{ChenCWY-arXiv25}. Concurrently, precision-enhancing techniques like difficulty-aware advantage reweighting \cite{ZhangZ-arXiv25} and dynamic sampling systems \cite{YuZZYZYDFLL-arXiv25} have been developed to stabilize long reasoning. Whereas these advancements implicitly assume centralized data availability, our proposed framework extends GRPO to distributed settings, enabling the collaborative training of reasoning LLMs across non-IID data. 
  A comprehensive survey is provided in Appendix~\ref{sec:app_survey}.

\section{Conclusion} \label{sec:conclusion}
  In this paper, we introduce \emph{federated group relative policy optimization} (FGRPO), a novel framework designed to enable the federated fine-tuning of reasoning-capable \emph{large language models} (LLMs). To address the critical challenge of divergent reward scales, where data heterogeneity across tasks can disproportionately destabilize the global model's reasoning trajectories, FGRPO leverages an adaptive aggregation mechanism based on \emph{relative performance gain} (RPG). We substantiate FGRPO through rigorous theoretical analysis and extensive empirical validation. Theoretically, we prove that FGRPO matches the asymptotic convergence rate of $\mathcal{O}(1/\sqrt{T})$ achieved by state-of-the-art FedRL algorithms, while uniquely utilizing local updates as an implicit variance stabilizer against stochastic noise. These insights are corroborated by our extensive experiments, which demonstrate that FGRPO achieves superior reasoning performance compared to state-of-the-art methods and maintains robust convergence even under highly heterogeneous data distributions.

    %
    
    %



\bibliographystyle{plain}
\bibliography{ref}



\clearpage
\setcounter{tocdepth}{2}
\tableofcontents
\clearpage

\appendix

\section*{Appendix}
\section{Proof of \textbf{Theorem}~\ref{thm:main_convergence}}
\label{sec:app_analysis}
%
  

  In this section, we present a comprehensive convergence analysis of FGRPO. We begin by a detailed discussion of the theoretical assumptions substantiating our analysis in Sec.~\ref{ssec:app_assumption}. We then provide the rigorous derivation in Sec.~\ref{ssec:detailed_proof}.

  \subsection{Assumptions} \label{ssec:app_assumption}
    We first establish the necessary assumptions which have been widely adopted in the convergence analysis of federated optimization~\cite{McMahanMRHA-AISTATS17,LiSZSTS-MLSys20,ZhangWMA-ICLR24,FanMDJTL-NIPS21,ReddiCZGRKKM-ICLR21}.
    \begin{itemize}
    \item \textbf{Assumption~\ref{ass:main_lsmooth} ($L$-Lipschitz Smoothness).} The objective function is smooth, meaning its gradient does not change arbitrarily fast. This standard assumption permits quadratic upper bounds on the loss, ensuring that gradient steps reliably reduce the objective value.
    \item \textbf{Assumption~\ref{ass:main_various} (Unbiased Gradient and Bounded Variance).} We assume local stochastic gradients are unbiased estimators of the true gradient with bounded noise. This ensures that while individual updates are noisy, the optimization process trends correctly on average without being overwhelmed by variance.
    \item \textbf{Assumption~\ref{ass:main_secmoment} (Bounded Second Moment).} The expected squared norm of the stochastic gradients is uniformly bounded. This prevents the accumulated momentum terms in the Adam optimizer from exploding, ensuring numerical stability during updates.
    \item \textbf{Assumption~\ref{ass:main_datahet} (Bounded Data Heterogeneity).} While data is non-IID, the divergence between local and global gradients is capped by constant $\kappa$. This guarantees that clients share a coherent global objective, making aggregation meaningful despite local distribution shifts.
    \item \textbf{Assumption~\ref{ass:main_boundrpg} (Bounded RPG Score).} The RPG scores are bounded, a property enforced by the algorithm's clipping mechanism. This prevents the aggregation weights from degenerating into a ``winner-take-all'' scenario, preserving the diversity of client contributions.
    \item \textbf{Assumption~\ref{ass:main_precondition} (Bounded Adaptive Preconditioner).} The eigenvalues of the Adam preconditioner matrix are bounded away from zero and infinity. This ensures the adaptive learning rates remain well-conditioned, preventing vanishing or exploding steps common in adaptive optimization.
    \end{itemize}

  \subsection{Proof Details} \label{ssec:detailed_proof}
    To facilitate the convergence analysis of FGRPO, we first establish a series of lemmas in Sec.~\ref{sssec:app_lemmas} to reveal the stepwise descent properties required for the final convergence theorem. \textbf{Lemma}~\ref{le:boundweights} first establishes that the RPG-based aggregation weights are uniformly bounded by $C_\omega = \exp(2h_{\rm max}/\tau_{\min})$, providing a key control over adaptive aggregation. Building on this, \textbf{Lemma}~\ref{le:one-step-descent} derives the fundamental descent inequality based on smoothness, which serves as the backbone of the analysis. \textbf{Lemma}~\ref{le:local-drift-bound} then quantifies the deviation introduced by multi-step local updates, capturing the client-side optimization error. Leveraging the bounded weights from \textbf{Lemma}~\ref{le:boundweights}, \textbf{Lemma}~\ref{le:weighted-variance-hete-control} further bounds the weighted gradient mismatch by $C_\omega \kappa^2$, where $\kappa$ quantifies the level of data heterogeneity. Finally, \textbf{Lemma}~\ref{le:momentum-bounds} bounds the second moments of the momentum and update steps by quantities proportional to $C_\omega$, combining the effects of bounded weights and local updates to control the overall update magnitude. Together, these lemmas bound all sources of error in the gradient descent.

    We then divide into the main proof in Sec.~\ref{sssec:app_mainproof}. The proof starts from a smoothness-based descent inequality for the global objective, with the update penalty controlled by the bounded second moment of the Adam update. It then decomposes the weighted round gradient into the current global gradient plus four error terms: staleness, gradient mismatch, local drift, and stochastic noise. By bounding these terms separately and combining them into a unified aggregate-error bound, the proof substitutes the result back into the descent inequality and telescopes over communication rounds, yielding the final non-convex convergence rate with a heterogeneity- and noise-dependent error floor.

    \subsubsection{Auxiliary Lemmas} \label{sssec:app_lemmas}
      \begin{lemma}[Bounded Weights] \label{le:boundweights}
        Suppose \textbf{Assumption}~\ref{ass:main_boundrpg} holds, then the aggregation weights $\omega_i^{[t]}$ possess a uniform upper bound. Specifically, there exists a constant $C_\omega = \exp(2h_{\rm max}/\tau_{\min})$ such that $0 \le \omega_i^{[t]} \le {C_\omega}/{N}, \forall i,t$.
      \end{lemma}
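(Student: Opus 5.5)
The plan is to bound the softmax weight of Eq.~\eqref{eq:weight} directly, by controlling its numerator from above and its denominator from below using \textbf{Assumption}~\ref{ass:main_boundrpg}, and then to remove the dependence on the round $t$ through the annealing schedule for $\tau^{[t]}$.

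The lower bound $\omega_i^{[t]} \ge 0$ is immediate, since every term $\exp(h_{i'}^{[t]}/\tau^{[t]})$ is strictly positive. For the upper bound, fix $i$ and $t$. Since $|h_{i'}^{[t]}| \le h_{\max}$ for every client, the numerator satisfies $\exp(h_i^{[t]}/\tau^{[t]}) \le \exp(h_{\max}/\tau^{[t]})$. Each of the $N$ terms in the denominator is at least $\exp(-h_{\max}/\tau^{[t]})$, so the denominator is at least $N\exp(-h_{\max}/\tau^{[t]})$. Dividing the two bounds gives
\begin{equation*}
\omega_i^{[t]} \le \frac{\exp(h_{\max}/\tau^{[t]})}{N\exp(-h_{\max}/\tau^{[t]})} = \frac{1}{N}\exp\!\left(\frac{2h_{\max}}{\tau^{[t]}}\right).
\end{equation*}

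It remains to make this bound uniform in $t$. From the schedule $\tau^{[t]} = \tau_{\min} + (\tau_{\max}-\tau_{\min})\exp(-\lambda_{\rm anneal} t)$, and assuming $\tau_{\max} \ge \tau_{\min} > 0$ and $\lambda_{\rm anneal} \ge 0$ (as the design intends), we get $\tau^{[t]} \ge \tau_{\min}$ for every $t$. Because $h_{\max} \ge 0$, the map $\tau \mapsto \exp(2h_{\max}/\tau)$ is nonincreasing on $(0,\infty)$. Therefore $\exp(2h_{\max}/\tau^{[t]}) \le \exp(2h_{\max}/\tau_{\min}) = C_\omega$, and hence $\omega_i^{[t]} \le C_\omega/N$ for all $i$ and $t$.

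There is no real obstacle in this argument. The only point needing care is the final step, which requires the implicit positivity and ordering of the temperature hyperparameters; I would state these explicitly. I would also note that the cruder bound $\omega_i^{[t]} \le 1$ is available, but the argument above gives the $1/N$ scaling needed later in \textbf{Lemma}~\ref{le:weighted-variance-hete-control}.
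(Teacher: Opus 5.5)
Your proof is correct and is essentially the paper's argument. The paper writes $\omega_i^{[t]} = 1/\sum_j \exp\bigl((h_j^{[t]} - h_i^{[t]})/\tau^{[t]}\bigr)$ and uses $h_j^{[t]} - h_i^{[t]} \ge -2h_{\max}$ together with $\tau^{[t]} \ge \tau_{\min}$, which is the same max-over-min exponent estimate you obtain by bounding the numerator and denominator separately; stating the conditions $\tau_{\max} \ge \tau_{\min} > 0$ and $\lambda_{\rm anneal} \ge 0$ explicitly, as you do, is a small improvement, since the paper assumes $\tau^{[t]} \ge \tau_{\min}$ without comment.
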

      \begin{proof}
        Given $\omega^{[t]}_i = \frac{\exp(h^{[t]}_i / \tau^{[t]})}{\sum^N_{j=1} \exp\left(h^{[t]}_{j} / \tau^{[t]}\right)}$ (see Eq.~\eqref{eq:weight} and \textbf{Assumption}~\ref{ass:main_boundrpg}, it follows that $h_j^{[t]} - h_i^{[t]} \ge -2h_{\rm max}$, combining which with $\tau^{[t]}\ge \tau_{\min}$, we have $\sum_{j=1}^N \exp((h_j^{[t]} - h_i^{[t]})/\tau^{[t]}) \ge N \exp(-2h_{\rm max}/\tau_{\min})$, and thus $\omega_i^{[t]} \le \frac{1}{N}\exp\left(\frac{2h_{\rm max}}{\tau_{\min}}\right)$.
      \end{proof}

      \begin{lemma}[One-step Descent] \label{le:one-step-descent} 
        Under \textbf{Assumption}~\ref{ass:main_lsmooth}, we have
        \begin{equation} \label{eq:one_step_descent}
          \mathbb{E}\left[ F(\theta^{[t+1]}) \right] \le \mathbb{E}\left[F(\theta^{[t]})\right] + \mathbb{E}\left[\left\langle \nabla F(\theta^{[t]}), \theta^{[t+1]} - \theta^{[t]} \right\rangle\right] + \frac{L}{2} \mathbb{E}\left[\left\|\theta^{[t+1]} - \theta^{[t]}\right\|^2\right], ~\forall t.
        \end{equation}
      \end{lemma}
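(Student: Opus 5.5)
The plan is to apply the standard quadratic upper bound that follows from $L$-smoothness (\textbf{Assumption}~\ref{ass:main_lsmooth}) to the two consecutive global iterates $\theta^{[t]}$ and $\theta^{[t+1]}$, and then take expectations.

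\textbf{Step 1 (pathwise bound).} Fix $t$ and a realization of all randomness. Set $\theta' = \theta^{[t+1]}$ and $\theta = \theta^{[t]}$. The fundamental theorem of calculus along the segment $\theta + s(\theta'-\theta)$, $s\in[0,1]$, gives
\begin{equation*}
F(\theta') - F(\theta) - \langle \nabla F(\theta), \theta'-\theta\rangle = \int_0^1 \langle \nabla F(\theta + s(\theta'-\theta)) - \nabla F(\theta), \theta'-\theta\rangle \, ds .
\end{equation*}

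\textbf{Step 2 (smoothness of $F$).} The assumption is stated for each $F_i$, so I first transfer it to $F$. Since $F = \sum_i \omega_i F_i$ with $\omega_i \ge 0$ and $\sum_i \omega_i = 1$, the triangle inequality gives
\begin{equation*}
\|\nabla F(\theta)-\nabla F(\theta')\| \le \sum_i \omega_i L\|\theta-\theta'\| = L\|\theta-\theta'\| .
\end{equation*}
This is the ``consequently'' clause of \textbf{Assumption}~\ref{ass:main_lsmooth}.

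\textbf{Step 3 (bounding the integral).} Apply Cauchy--Schwarz to the integrand and then the $L$-smoothness of $F$ from Step 2. This bounds the integrand by $sL\|\theta'-\theta\|^2$. Integrating over $s\in[0,1]$ gives $\frac{L}{2}\|\theta^{[t+1]}-\theta^{[t]}\|^2$, so the inequality holds pointwise for every realization.

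\textbf{Step 4 (expectations).} Take total expectation of both sides and use linearity. This yields \eqref{eq:one_step_descent}. No independence or unbiasedness is needed, since the inequality holds almost surely.

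\textbf{Main obstacle.} The only real care point is integrability: all three expectations must be well-defined. Three facts settle this. First, $\theta^{[t+1]}-\theta^{[t]} = \alpha H^{[t]} m^{[t]}$. Second, \textbf{Assumption}~\ref{ass:main_precondition} bounds $H^{[t]}$. Third, $m^{[t]}$ is a convex combination of the $\Delta$'s, each of which is a weighted sum of at most $E$ local steps with finite second moments by \textbf{Assumption}~\ref{ass:main_secmoment}. Hence the squared-norm term is finite. I would note this briefly and otherwise treat the argument as routine.
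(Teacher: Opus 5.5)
Your proof is correct and follows the same route as the paper: you apply the quadratic upper bound implied by $L$-smoothness pathwise to $\theta^{[t]},\theta^{[t+1]}$ and take expectations of the resulting almost-sure inequality, while your FTC derivation, the transfer of smoothness from the $F_i$ to $F$, and the integrability check are details the paper leaves implicit. One small tightening: your integrability argument only explicitly establishes that the quadratic term is finite, but the other two expectations follow the same way, since $\theta^{[t]}=\theta^{[0]}+\sum_{j<t}(\theta^{[j+1]}-\theta^{[j]})$ is square-integrable and smoothness gives $F$ at most quadratic growth and $\nabla F$ at most linear growth.
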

      \begin{proof}
        According to \textbf{Assumption}~\ref{ass:main_lsmooth}, we have 
        \begin{equation*}
          F(\theta^{[t+1]}) \le F(\theta^{[t]}) + \left\langle \nabla F(\theta^{[t]}), \theta^{[t+1]} - \theta^{[t]} \right\rangle + \frac{L}{2} \left\|\theta^{[t+1]} - \theta^{[t]}\right\|^2.
        \end{equation*}
        Taking the expectation with respect to the randomness at round $t$, we complete the proof.
      \end{proof}

      \begin{lemma}[Local Drift Bound] \label{le:local-drift-bound}
        Let $\alpha_i = \hat\alpha, ~\forall i\in\mathcal{N}$ for simplicity. Under \textbf{Assumption}~\ref{ass:main_secmoment}, the local model drift satisfies:
        \begin{equation} \label{eq:local_drift_bound}
          \mathbb{E}\left[\left\|\theta_i^{[t,e]}- \theta^{[t]}\right\|^2 \bigg| \mathcal{F}_t \right]
          \le \hat\alpha^2 E \sum_{k=0}^{e-1}\mathbb{E}\left[\left\|g_i^{[t,k]}\right\|^2 \bigg| \mathcal{F}_t\right]
          \le \hat\alpha^2 E^2 G^2, \quad\forall i,t,e
        \end{equation}    
      \end{lemma}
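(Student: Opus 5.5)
The plan is to unroll the local recursion and then apply Cauchy--Schwarz (Jensen) to the resulting finite sum. From the local update rule with common step size $\alpha_i=\hat\alpha$ and initialization $\theta_i^{[t,0]}=\theta^{[t]}$, telescoping gives
\begin{equation*}
  \theta_i^{[t,e]}-\theta^{[t]} = \hat\alpha\sum_{k=0}^{e-1} g_i^{[t,k]}, \qquad 0\le e\le E.
\end{equation*}
The sign convention (ascent on $\mathcal{J}_{\mathrm{GRPO}}$) is irrelevant, because only the squared norm enters. For $e=0$ both sides of the claim vanish, so it suffices to take $1\le e\le E$.

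The key step is the elementary inequality $\|\sum_{k=0}^{e-1} x_k\|^2 \le e\sum_{k=0}^{e-1}\|x_k\|^2$, which follows from Cauchy--Schwarz or convexity of $\|\cdot\|^2$. Applied to $x_k=g_i^{[t,k]}$, it yields
\begin{equation*}
  \|\theta_i^{[t,e]}-\theta^{[t]}\|^2 \le \hat\alpha^2 e\sum_{k=0}^{e-1}\|g_i^{[t,k]}\|^2 \le \hat\alpha^2 E\sum_{k=0}^{e-1}\|g_i^{[t,k]}\|^2,
\end{equation*}
using $e\le E$. This holds pathwise, so taking conditional expectation given $\mathcal{F}_t$ and using linearity gives the first inequality in \eqref{eq:local_drift_bound}. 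I deliberately avoid any martingale decomposition here: the crude Cauchy--Schwarz bound is all the stated form requires, and it sidesteps the cross terms between the noise $\xi_i^{[t,k]}$ and the gradient drift.

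For the second inequality, I would bound each summand by $G^2$ using Assumption~\ref{ass:main_secmoment}. Summing at most $e\le E$ terms then gives $\hat\alpha^2 E\cdot E G^2=\hat\alpha^2E^2G^2$.

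The main obstacle is a technical mismatch: Assumption~\ref{ass:main_secmoment} is stated for unconditional expectations, while the lemma uses expectations conditioned on $\mathcal{F}_t$. I would resolve this by reading the assumption, as is standard, as holding conditionally on the history, i.e.\ $\mathbb{E}[\|g_i^{[t,k]}\|^2\mid\mathcal{F}_{t,k}]\le G^2$. The tower property with $\mathcal{F}_t\subseteq\mathcal{F}_{t,k}$ then yields $\mathbb{E}[\|g_i^{[t,k]}\|^2\mid\mathcal{F}_t]\le G^2$. Alternatively, I would state the second inequality after taking total expectation, which is how it is used later in the telescoping argument. Everything else in the proof is routine algebra.
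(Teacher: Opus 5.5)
Your proposal is correct and follows essentially the same route as the paper: unroll the local recursion, apply $\|\sum_{k=0}^{e-1} x_k\|^2\le e\sum_{k=0}^{e-1}\|x_k\|^2$ pathwise, take the conditional expectation given $\mathcal{F}_t$, and bound each of the at most $E$ terms by $G^2$. One point where you go beyond the paper: Assumption~\ref{ass:main_secmoment} must be read conditionally, or the second inequality taken in total expectation. The paper applies the unconditional assumption to a conditional expectation without comment, so your remark fixes a subtlety it skips.
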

      \begin{proof}
        The local model at step $e$ is given by $\theta_i^{[t,e]} = \theta^{[t]} - \hat\alpha \sum_{k=0}^{e-1} g_i^{[t,k]}$. Therefore, the drift can be calculated as $\theta_i^{[t,e]} - \theta^{[t]} = - \hat\alpha \sum_{k=0}^{e-1} g_i^{[t,k]}$, and its squared norm is
        \begin{equation*}
          \left\|\theta_i^{[t,e]} - \theta^{[t]}\right\|^2 = {\hat\alpha}^2 \left\| \sum_{k=0}^{e-1} g_i^{[t,k]} \right\|^2.
        \end{equation*}
        By applying Jensen's inequality (or Cauchy-Schwarz inequality), we have
        \begin{equation*}
          \left\| \sum_{k=0}^{e-1} g_i^{[t,k]} \right\|^2 \le e \sum_{k=0}^{e-1} \left\| g_i^{[t,k]} \right\|^2.
        \end{equation*}
        Taking the expectation conditioned on $\mathcal{F}_t$, we have
        \begin{equation*}
          \mathbb{E}\left[\left\| \sum_{k=0}^{e-1} g_i^{[t,k]} \right\|^2 \Bigg\vert \mathcal{F}_t\right] \le e \sum_{k=0}^{e-1} \mathbb{E}\big[\|g_i^{[t,k]}\|^2 \mid \mathcal{F}_t\big].
        \end{equation*}
        Since $e \le E$, we apply \textbf{Assumption}~\ref{ass:main_secmoment}, and obtain
        \begin{equation*}
          \mathbb{E}\left[ \left\| \theta_i^{[t,e]}-\theta^{[t]} \right\|^2 \bigg| \mathcal{F}_t\right] \le {\hat\alpha}^2 e \cdot e G^2 \le {\hat\alpha}^2 E^2 G^2,
        \end{equation*}
        which completes the proof.
      \end{proof}

      \begin{lemma}[Weighted Variance \& Heterogeneity Control] \label{le:weighted-variance-hete-control}
        Under \textbf{Assumption}~\ref{ass:main_datahet} and \textbf{Lemma}~\ref{le:boundweights}, the weighted heterogeneity term is bounded by:
        \begin{equation}
          \sum_{i=1}^N \omega_i^{[t]} \left\|\nabla F_i(\theta) - \nabla F(\theta) \right\|^2 \le C_\omega \kappa^2.
        \end{equation}
      \end{lemma}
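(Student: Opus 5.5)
The plan is to combine the pointwise weight bound of \textbf{Lemma}~\ref{le:boundweights} with the averaged heterogeneity bound of \textbf{Assumption}~\ref{ass:main_datahet}. The weights enter the left-hand side linearly and each summand $\|\nabla F_i(\theta)-\nabla F(\theta)\|^2$ is nonnegative, so the weights can be replaced termwise by their uniform upper bound.

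First, fix $t$ and $\theta$. Note that $\omega_i^{[t]}\ge 0$ by the Boltzmann form \eqref{eq:weight}, and that each term $\|\nabla F_i(\theta)-\nabla F(\theta)\|^2$ is nonnegative. Hence, invoking $\omega_i^{[t]}\le C_\omega/N$ from \textbf{Lemma}~\ref{le:boundweights} term by term gives
\begin{equation*}
\sum_{i=1}^N \omega_i^{[t]}\|\nabla F_i(\theta)-\nabla F(\theta)\|^2 \le \frac{C_\omega}{N}\sum_{i=1}^N\|\nabla F_i(\theta)-\nabla F(\theta)\|^2 .
\end{equation*}
Second, the right-hand side is exactly $C_\omega$ times the uniform average controlled by \eqref{eq:datahet}. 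That assumption yields the bound $C_\omega\kappa^2$, which holds for all $\theta$ and $t$ because both ingredients are uniform.

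I expect no real obstacle; the only point requiring care is interpretive. \textbf{Assumption}~\ref{ass:main_datahet} is stated with uniform weights $1/N$, and it measures deviation from $\nabla F$, where $F$ is defined with the static weights $\omega_i$ from Sec.~\ref{ssec:fl}. It does not measure deviation from the RPG-weighted average. I would simply use $\nabla F$ as the fixed reference point, as in the statement, and not attempt to recenter at the $\omega^{[t]}$-weighted mean. That step is unnecessary for an upper bound, and it would only tighten the bound, since the weighted mean minimizes the weighted squared deviation.

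The factor $C_\omega\ge 1$ is the price of converting a uniform average into a nonuniform one. This is the source of the $\exp(2h_{\max}/\tau_{\min})$ factor in the error floor of \textbf{Theorem}~\ref{thm:main_convergence}.
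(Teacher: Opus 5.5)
Your proposal is correct and matches the paper's proof: you bound each weight by $C_\omega/N$ via Lemma~\ref{le:boundweights}, using the nonnegativity of the summands (which you state explicitly), and then apply the averaged heterogeneity bound of Assumption~\ref{ass:main_datahet} to obtain $C_\omega\kappa^2$. The paper gives the same argument in one line; your remark about recentering at the $\omega^{[t]}$-weighted mean is accurate but not needed.
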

      \begin{proof}
        The result follows directly from $\omega_i^{[t]} \le {C_\omega}/{N}$ and the heterogeneity assumption $\frac{1}{N} \sum_{i=1}^N \|\nabla F_i - \nabla F\|^2 \le \kappa^2$.
      \end{proof}

      \begin{lemma}[Momentum \& Update Bounds] \label{le:momentum-bounds}
        Under \textbf{Assumption}~\ref{ass:main_secmoment} and \textbf{Lemma}~\ref{le:boundweights}, the second moments of the Adam momentum term $m^{[t]}$ and the update step $z^{[t]} = \theta^{[t+1]} - \theta^{[t]}$ are uniformly bounded such that
        \begin{equation}
          \mathbb{E} \left[ \left\|m^{[t]}\right\|^2 \right] \le M_m^2, \quad \mathbb{E} \left[ \left\|z^{[t]}\right\|^2 \right] \le \alpha^2 c_{\max}^2 M_m^2,\quad\forall t
        \end{equation}
        where $M_m^2 = \hat\alpha^2 C_\omega  E^2 G^2$.
      \end{lemma}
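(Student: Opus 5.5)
The plan is to bound the pseudo-gradient $\Delta^{[t]}$ first, then push the bound through the EMA recursion for $m^{[t]}$, and finally through the preconditioner to get $z^{[t]}$.

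\textbf{Step 1: bound $\Delta^{[t]}$.} By Eq.~\eqref{eq:model_diff}, $\Delta^{[t]}=\sum_{i}\omega_i^{[t]}(\theta_i^{[t]}-\theta^{[t]})$, and each displacement is $\theta_i^{[t]}-\theta^{[t]}=\theta_i^{[t,E]}-\theta^{[t]}$. Since the weights are nonnegative and sum to one, convexity of $\|\cdot\|^2$ (Jensen) gives
\begin{equation*}
\|\Delta^{[t]}\|^2\le\sum_i\omega_i^{[t]}\|\theta_i^{[t,E]}-\theta^{[t]}\|^2 .
\end{equation*}
Invoking \textbf{Lemma}~\ref{le:boundweights}, $\omega_i^{[t]}\le C_\omega/N$, we get
\begin{equation*}
\mathbb{E}\|\Delta^{[t]}\|^2\le\frac{C_\omega}{N}\sum_i\mathbb{E}\|\theta_i^{[t,E]}-\theta^{[t]}\|^2 .
\end{equation*}
The tower property together with \textbf{Lemma}~\ref{le:local-drift-bound} at $e=E$ (which rests on \textbf{Assumption}~\ref{ass:main_secmoment}) bounds each summand by $\hat\alpha^2E^2G^2$. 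Hence $\mathbb{E}\|\Delta^{[t]}\|^2\le C_\omega\hat\alpha^2E^2G^2=M_m^2$.

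The weights are random and depend on the rewards, so they are correlated with the local trajectories. I will therefore use the deterministic bound $\omega_i^{[t]}\le C_\omega/N$ pointwise, before taking any expectation. Jensen alone, with $\sum_i\omega_i^{[t]}=1$, would already give the bound without the factor $C_\omega\ge1$, so the looser form is valid either way.

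\textbf{Step 2: bound $m^{[t]}$.} Unrolling Eq.~\eqref{eq:momentum} with $m^{[-1]}=0$ gives
\begin{equation*}
m^{[t]}=\sum_{s=0}^{t}(1-\beta_1)\beta_1^{t-s}\Delta^{[s]} .
\end{equation*}
This step requires $\beta_1\in(0,1)$, which is implicit in the Adam convention. The coefficients are nonnegative and sum to $1-\beta_1^{t+1}\le1$. Applying Jensen to the normalized weights and multiplying back by the total mass gives
\begin{equation*}
\|m^{[t]}\|^2\le\sum_s(1-\beta_1)\beta_1^{t-s}\|\Delta^{[s]}\|^2 .
\end{equation*}
Taking expectations and using Step 1 uniformly in $s$ yields $\mathbb{E}\|m^{[t]}\|^2\le M_m^2$.

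\textbf{Step 3: bound $z^{[t]}$.} From Eq.~\eqref{eq:model_update}, $z^{[t]}=\alpha H^{[t]}m^{[t]}$. \textbf{Assumption}~\ref{ass:main_precondition} gives $\|H^{[t]}\|\le c_{\max}$, so $\|z^{[t]}\|^2\le\alpha^2c_{\max}^2\|m^{[t]}\|^2$ pointwise. Taking expectations finishes the proof.

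The main obstacle is a sign inconsistency. Algorithm~\ref{alg:fgrpo} uses ascent steps $+\alpha_i g$, while \textbf{Lemma}~\ref{le:local-drift-bound} writes descent steps. This is harmless, because only squared norms enter the argument. The rest is routine, apart from the interplay between the random weights and the expectations, which Step 1 already handles.
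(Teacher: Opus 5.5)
Your proof is correct and is essentially the paper's argument. It uses Jensen over clients with the pointwise cap $\omega_i^{[t]}\le C_\omega/N$; invoking Lemma~\ref{le:local-drift-bound} at $e=E$ is just the paper's inline Cauchy--Schwarz on $\sum_e g_i^{[t,e]}$. It then applies Jensen over the EMA coefficients, whose total mass is $1-\beta_1^{t+1}\le 1$, and finally $\|H^{[t]}\|\le c_{\max}$.

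One aside is false, though nothing in your proof depends on it. Jensen with $\sum_i\omega_i^{[t]}=1$ alone does not yield $\mathbb{E}\|\Delta^{[t]}\|^2\le\hat\alpha^2E^2G^2$. With reward-dependent weights, $\mathbb{E}\bigl[\sum_i\omega_i^{[t]}\|\theta_i^{[t]}-\theta^{[t]}\|^2\bigr]$ can exceed $\max_i\mathbb{E}\|\theta_i^{[t]}-\theta^{[t]}\|^2$ when the weights concentrate on whichever client drifted furthest. That is precisely why the factor $C_\omega$ from your pointwise cap is needed.
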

      \begin{proof}
        We first give the bound on the weighted gradient aggregation $\Delta^{[t]} = \sum_{i=1}^N \omega_i^{[t]} (-\hat\alpha \sum_{e=0}^{E-1} g_i^{[t,e]})$. By applying Jensen's inequality, we have
        \begin{equation*}
          \left\|\Delta^{[t]}\right\|^2 \le \sum_{i=1}^N \omega_i^{[t]} \hat\alpha^2 E \sum_{e=0}^{E-1} \left\|g_i^{[t,e]}\right\|^2.
        \end{equation*}
        Furthermore, since $\omega_i^{[t]} \le C_\omega/N$ (see \textbf{Lemma}~\ref{le:boundweights}) and $\mathbb{E}\left[ \left\| g_i^{[t,e]} \right\|^2 \right] \le G^2$ for any $i,t,e$ (see \textbf{Assumption}~\ref{ass:main_secmoment}),
        \begin{equation*}
          \mathbb{E}\left[ \left\| \Delta^{[t]} \right\|^2 \right] \le \frac{1}{N} \sum_{i=1}^N  \hat\alpha^2 C_\omega E^2 G^2 = \hat\alpha^2 C_\omega E^2 G^2.
        \end{equation*}
        According to Jensen's inequality (convex combination), we have $m^{[t]} = (1-\beta_1) \sum_{k=0}^t \beta_1^{t-k} \Delta^{[k]}$ and thus
        \begin{equation*}
          \mathbb{E}\left[ \left\|m^{[t]}\right\|^2 \right]\le \sum_{k=0}^t (1-\beta_1)\beta_1^{t-k} \mathbb{E} \left[\|\Delta^{[k]}\|^2\right] \le \left( \sum_{k=0}^t (1-\beta_1)\beta_1^{t-k} \right) \hat\alpha^2 C_\omega E^2 G^2 \le M_m^2.
        \end{equation*}
        The bound on $z^{[t]}$ directly follows the update rule $\|z^{[t]}\| \le \alpha c_{\rm max} \|m^{[t]}\|$. 
      \end{proof}

    \subsubsection{Main Proof} \label{sssec:app_mainproof}
      \paragraph{i) Global descent inequality.}
      Under \textbf{Assumption}~\ref{ass:main_lsmooth}, for the global update step $z^{[t]} = \theta^{[t+1]} - \theta^{[t]} = \alpha H^{[t]} m^{[t]}$, we expand the objective function at $\theta^{[t]}$:
      \begin{equation}
        F(\theta^{[t+1]}) \le F(\theta^{[t]}) + \left\langle \nabla F(\theta^{[t]}), z^{[t]} \right\rangle + \frac{L}{2} \left\|z^{[t]}\right\|^2.
      \end{equation}
      By taking the expectation on both sides (conditioned on $\mathcal{F}_t$), we have
      \begin{equation} \label{eq:descent_inequality_error}
        \mathbb{E}[F(\theta^{[t+1]})] \le \mathbb{E}\left[ F(\theta^{[t]}) \right] + \underbrace{\alpha \mathbb{E}\left[\left\langle \nabla F(\theta^{[t]}), H^{[t]} m^{[t]} \right\rangle\right]}_{\mathbb{E}[\mathfrak{T}_1]} + \underbrace{\frac{L}{2} \mathbb{E}\left[\left\|z^{[t]}\right\|^2\right]}_{\mathbb{E}[\mathfrak{T}_2]}.
      \end{equation}

      We leverage the result from \textbf{Lemma}~\ref{le:momentum-bounds} to calculate an upper bound on the penalty term:
      \begin{equation} \label{eq:upbd_penalty}
        \mathbb{E}[\mathfrak{T}_2] =\frac{L}{2} \mathbb{E} \left\| \alpha H^{[t]} m^{[t]} \right\|^2 \le \frac{L \alpha^2 c^2_{\rm max}}{2} M_m^2,
      \end{equation}
      where $M_m^2 = C_\omega \hat\alpha^2 E^2 G^2$.

      To analyze the primary term $\mathfrak{T}_1$, we decompose the momentum $m^{[t]}$ into a weighted sum of historical updates. Define
      \begin{equation} \label{eq:avg_grad_round}
        \bar{g}^{[k]} = \frac{1}{E} \sum_{i=1}^N \omega_i^{[k]} \sum_{e=0}^{E-1} g_i^{[k,e]}
      \end{equation}
      as the average gradient in round $k$. Following the Adam update rule $m^{[t]} = (1-\beta_1) \sum_{k=0}^t \beta_1^{t-k} \Delta^{[k]}$ where $\Delta^{[k]} = - \hat\alpha E \bar{g}^{[k]}$, we obtain:
      \begin{equation} \label{eq:T1}
        \mathfrak{T}_1 = - \alpha \hat\alpha E (1-\beta_1) \sum_{k=0}^t \beta_1^{t-k} \left\langle \nabla F(\theta^{[t]}), H^{[t]} \bar{g}^{[k]} \right\rangle.
      \end{equation}

      We decompose the stochastic gradient $g_i^{[k,e]}$ as $g_i^{[k,e]} = \nabla F_i(\theta^{[k]}) + u_i^{[k,e]} + \xi_i^{[k,e]}$, where:
      \begin{itemize}
        %
        \item $u_i^{[k,e]} = \nabla F_i(\theta_i^{[k,e]}) - \nabla F_i(\theta^{[k]})$ denotes the gradient drift on client $i$ at local step $e$ in round $k$, capturing the deviation between gradients evaluated at the locally updated model $\theta_i^{[k,e]}$ and the global model $\theta^{[k]}$.
        %
        %
        \item $\xi_i^{[k,e]} = g_i^{[k,e]} - \nabla F_i(\theta_i^{[k,e]})$ denotes the deviation between the stochastic gradient $g_i^{[k,e]}$ and the true gradient $\nabla F_i(\theta_i^{[k,e]})$ at the local model $\theta_i^{[k,e]}$, indicating the gradient noise for client $i$ at local step $e$ in round $k$. It satisfies the MDS property as specified in \textbf{Assumption}~\ref{ass:main_various}.
      \end{itemize}
      We decompose $\bar{g}^{[k]}$ around the global gradient at round $t$ as the sum of the true gradient and several error terms:
      %
      %
      %
      \begin{align} \label{eq:decomp_grad_est}
        \bar{g}^{[k]} = \nabla F(\theta^{[t]}) + 
        \left(\nabla F(\theta^{[k]}) - \nabla F(\theta^{[t]})\right)
        + \left( \sum_{i=1}^N \omega_i^{[k]} \nabla F_i(\theta^{[k]}) - \nabla F(\theta^{[k]}) \right)
        + E^{-1} U^{[k]}
        + E^{-1} \Xi^{[k]}
      \end{align}
      where 
      \begin{itemize}
        \item \textbf{Staleness error:} $\delta^{[k,t]} = \nabla F(\theta^{[k]}) - \nabla F(\theta^{[t]})$ measures the discrepancy between gradients at past and current iterates, reflecting the staleness introduced by momentum accumulation.
        \item \textbf{Gradient mismatch:} $b^{[k]} = \sum_{i=1}^N \omega_i^{[k]} \nabla F_i(\theta^{[k]}) - \nabla F(\theta^{[k]})$ captures the mismatch between the weighted local gradients and the global gradient, arising from data heterogeneity and adaptive aggregation weights, and is bounded by $C_\omega \kappa^2$.
        \item \textbf{Local Drift:} Let $U^{[k]} = \sum_{i=1}^N \omega_i^{[k]} \sum_{e=0}^{E-1} u_i^{[k,e]}$. Then $E^{-1} U^{[k]}$ represents the average gradient deviation induced by multi-step local updates across clients.
        \item \textbf{Stochastic noise:} Let $\Xi^{[k]} = \sum_{i=1}^N \omega_i^{[k]} \sum_{e=0}^{E-1} \xi_i^{[k,e]}$. Then, $E^{-1}\Xi^{[k]}$ represents the aggregated stochastic gradient noise induced by sampling across clients and local updates.
      \end{itemize}
      %
      %
      %
      Defining the aggregate error term as 
      \begin{equation} \label{eq:agg_error}
        \mathfrak{E}^{[k]} := \delta^{[k, t]} + b^{[k]} + E^{-1} U^{[k]} + E^{-1} \Xi^{[k]},
      \end{equation}
      it follows that $\bar{g}^{[k]} = \nabla F(\theta^{[t]}) + \mathfrak{E}^{[k]}$. Through leveraging $H^{[t]} \succeq c_{\rm min} I$ (see \textbf{Assumption}~\ref{ass:main_precondition}) and Young's Inequality~\cite{young1912classes}, we have
      \begin{align}
        \left\langle \nabla F(\theta^{[t]}), H^{[t]} \bar{g}^{[k]} \right\rangle &= \left\langle \nabla F(\theta^{[t]}), H^{[t]} \nabla F(\theta^{[t]}) \right\rangle + \left\langle \nabla F(\theta^{[t]}), H^{[t]} \mathfrak{E}^{[k]} \right\rangle \nonumber\\
        &\ge c_{\rm min} \left\|\nabla F(\theta^{[t]})\right\|^2 - c_{\rm max} \left\|\nabla F(\theta^{[t]})\right\| \left\|\mathfrak{E}^{[k]}\right\| \nonumber\\
        &\ge \frac{c_{\rm min}}{2} \left\|\nabla F(\theta^{[t]})\right\|^2 - \frac{c^2_{\rm max}}{2c_{\rm min}} \left\|\mathfrak{E}^{[k]}\right\|^2.
      \end{align}
      by combining which with Eq.~\eqref{eq:T1}, we have
      \begin{equation} \label{eq:upper_bound_on_expT1}
        \mathbb{E}[\mathfrak{T}_1] \le - \frac{\alpha \hat\alpha  c_{\rm min} E (1-\beta_1^{t+1})}{2} \mathbb{E}\left[\left\|\nabla F(\theta^{[t]})\right\|^2\right] +  \frac{\alpha \hat\alpha c^2_{\rm max} E}{2c_{\rm min}} (1-\beta_1) \sum_{k=0}^t \beta_1^{t-k} \mathbb{E}\left[\left\|\mathfrak{E}^{[k]}\right\|^2\right].
      \end{equation}
      For sufficiently large $t$, we approximate $1-\beta_1^{t+1} \approx 1$.

      \paragraph{ii) Bounding the aggregated error $\mathfrak{E}$.}
      We now derive specific constant upper bounds for the four components of the total expected squared error $\mathbb{E}\left[\|\mathfrak{E}^{[k]}\|^2\right]$.
      %
      %
      Leveraging the inequality $(\sum_{j=1}^4 x_j)^2 \le 4 \sum_{j=1}^4 x_j^2$, we obtain:
      \begin{equation} \label{eq:exaggregate_error}
        \mathbb{E}\left[\left\|\mathfrak{E}^{[k]}\right\|^2\right] \le 4 \left( \mathbb{E}\left[\left\|\delta^{[k,t]}\right\|^2\right] + \mathbb{E}\left[\left\|b^{[k]}\right\|^2\right] + E^{-2}\mathbb{E}\left[\left\|U^{[k]}\right\|^2\right] + E^{-2}\mathbb{E}\left[\left\|\Xi^{[k]}\right\|^2 \right]\right).
      \end{equation}
      The terms are derived individually as follows:
     
      \textbf{Bounding staleness error:} By the $L$-smoothness shown in \textbf{Assumption}~\ref{ass:main_lsmooth}, we have $\|\delta^{[k,t]}\|^2 = \|\nabla F(\theta^{[k]}) - \nabla F(\theta^{[t]})\|^2 \le L^2 \|\theta^{[k]} - \theta^{[t]}\|^2$. Furthermore, we apply Jensen's inequality to the cumulative updates and thus have
      \begin{equation}
        \left\|\theta^{[t]} - \theta^{[k]}\right\|^2 = \left\|\sum_{j=k}^{t-1} z^{[j]}\right\|^2 = \left\|\sum_{j=k}^{t-1} \alpha H^{[j]} m^{[j]}\right\|^2 \le (t-k) \sum_{j=k}^{t-1} \alpha^2 \left\|H^{[j]} m^{[j]}\right\|^2.
      \end{equation}
      Finally, from \textbf{Lemma}~\ref{le:momentum-bounds} and its corollaries ($\mathbb{E}[\|m^{[j]}\|^2] \le M_m^2$ and $\|H^{[j]}\| \le c_{\rm max}$), we have
      \begin{equation} \label{eq:his_error}
        \mathbb{E}\left[\left\|\delta^{[k,t]}\right\|^2\right] \le L^2 (t-k) \sum_{j=k}^{t-1} \alpha^2 c^2_{\rm max} M_m^2 = L^2 \alpha^2 c^2_{\rm max} M_m^2 (t-k)^2.
      \end{equation}

      \textbf{Bounding gradient mismatch:} Recall $b^{[k]} = \sum_{i=1}^N \omega_i^{[k]} (\nabla F_i(\theta^{[k]}) - \nabla F(\theta^{[k]}))$. Using Jensen's inequality and \textbf{Lemma}~\ref{le:weighted-variance-hete-control}, we have
      \begin{equation}
        \mathbb{E}\left[\left\|b^{[k]}\right\|^2\right] \le \sum_{i=1}^N \omega_i^{[k]} \mathbb{E} \left[\left\|\nabla F_i(\theta^{[k]}) - \nabla F(\theta^{[k]})\right\|^2\right].
      \end{equation}
      Furthermore, given $0 \le \omega_i^{[k]} \le C_\omega/N$ (see \textbf{Lemma}~\ref{le:boundweights}) and $\frac{1}{N}\sum_{i=1}^N \|\nabla F_i(\theta^{[k]}) - \nabla F(\theta^{[k]})\|^2 \le \kappa^2$ (see \textbf{Assumption}~\ref{ass:main_datahet}), we obtain
      \begin{equation} \label{eq:het_bias}
       \mathbb{E}\left[\left\|b^{[k]}\right\|^2\right] \le \frac{C_\omega}{N} \cdot N \kappa^2 = C_\omega \kappa^2.
      \end{equation}

      \textbf{Bounding local drift:} Recall $U^{[k]} = \sum_{i=1}^N \omega_i^{[k]} \sum_{e=0}^{E-1} u_i^{[k,e]}$. By Jensen's inequality, we obtain
      \begin{equation}
        \|U^{[k]}\|^2 \le \left(\sum_i \omega_i^{[k]}\right) \sum_i \omega_i^{[k]} \left\|\sum_e u_i^{[k,e]}\right\|^2 \le \sum_i \omega_i^{[k]} \left( E \sum_e \left\|u_i^{[k,e]}\right\|^2 \right).
      \end{equation}
      Leveraging $L$-smoothness (see \textbf{Assumption}~\ref{ass:main_lsmooth}) and $\mathbb{E}\|\theta_i^{[k,e]} - \theta^{[k]}\|^2 \le \hat\alpha^2 e^2 G^2$ (see \textbf{Lemma}~\ref{le:local-drift-bound}), we obtain
      \begin{equation}
        \mathbb{E}\left[\left\|u_i^{[k,e]}\right\|^2\right] = \mathbb{E}\left[\left\|\nabla F_i(\theta_i^{[k,e]}) - \nabla F_i(\theta^{[k]})\right\|^2\right] \le L^2 \mathbb{E}\left[\left\|\theta_i^{[k,e]} - \theta^{[k]}\right\|^2\right] \le  \hat\alpha^2 e^2 L^2 G^2.
      \end{equation}
      Since $\sum_{e=0}^{E-1} e^2 \le E^3/3 \le E^3$, it follows that $\sum_e \|u_i^{[k,e]}\|^2 \le L^2 \hat\alpha^2 E^3 G^2$. Substituting it into $\mathbb{E}\|U^{[k]}\|^2$, we get
      \begin{equation}  \label{eq:local_drift}
       E^{-2} \mathbb{E} \left[\left\|U^{[k]}\right\|^2\right] \le E^{-2} \cdot \frac{C_\omega}{N} \sum_{i=1}^N ( E \cdot L^2 \hat\alpha^2 E^3 G^2 ) = \hat\alpha^2 C_\omega L^2 E^2 G^2.
      \end{equation}

      \textbf{Bounding stochastic noise:} Recall $\Xi^{[k]} = \sum_{i=1}^N \omega_i^{[k]} \sum_{e=0}^{E-1} \xi^{[k,e]}_i$. We first apply Jensen's inequality to the weighted sum over clients:
      \begin{equation}
        \left\| \Xi^{[k]} \right\|^2 \le \sum_{i=1}^N \omega_i^{[k]} \left\| \sum_{e=0}^{E-1} \xi_i^{[k,e]} \right\|^2.
      \end{equation}
      By the MDS property shown in \textbf{Assumption}~\ref{ass:main_various}, the noise terms $\xi_i^{[k,e]}$ for different local steps $e$ are uncorrelated given the filtration $\mathcal{F}_{k,e}$. Therefore, 
      \begin{equation}
        \mathbb{E} \left[ \left\| \sum_{e=0}^{E-1} \xi_i^{[k,e]} \right\|^2 \right] = \sum^{E-1}_{e=0} \mathbb{E} \left[ \left\| \xi_i^{[k,e]} \right\|^2 \right] \le E \sigma^2.
      \end{equation}
      Furthermore, by considering the weight bound $\omega_i^{[k]} \le C_\omega / N$ from \textbf{Lemma}~\ref{le:boundweights}, we have
      \begin{equation}
        \mathbb{E} \left[\left\| \Xi^{[k]} \right\|^2\right] \le \sum_{i=1}^N \frac{C_\omega}{N} \cdot E \sigma^2 = C_\omega E \sigma^2.
      \end{equation}
      and thus
      \begin{equation} \label{eq:stochastic_noise}
        E^{-2} \mathbb{E}\left[\left\| \Xi^{[k]} \right\|^2\right] \le E^{-2} \cdot C_\omega E \sigma^2 = \frac{C_\omega \sigma^2}{E}.
      \end{equation}

      Substituting Eqs. \eqref{eq:his_error}, \eqref{eq:het_bias}, \eqref{eq:local_drift}, and \eqref{eq:stochastic_noise} into Eq.~\eqref{eq:exaggregate_error}, we have
      \begin{equation} \label{eq:bdonXi}
        \mathbb{E}\left[\left\|\mathfrak{E}^{[k]}\right\|^2\right] \le 4 \left( C_{0} \alpha^2 (t-k)^2 + C_{1} \right).
      \end{equation}
      where
      \begin{equation}
        C_{0} = L^2 c^2_{\rm max} M_m^2 =  \hat\alpha^2 c^2_{\rm max} L^2 E^2 G^2 \exp(2h_{\rm max} / \tau_{\rm min})
      \end{equation}
      and
      \begin{equation}
        C_{1} = C_\omega \left(\kappa^2 + \hat\alpha^2 L^2 E^2 G^2 + \frac{\sigma^2}{E}\right) = \left(\kappa^2 + \hat\alpha^2 L^2 E^2 G^2 + \frac{\sigma^2}{E}\right) \exp(2h_{\rm max} / \tau_{\rm min}),
      \end{equation}


      \paragraph{iii) Final Synthesis and Convergence Bound.}
      Substituting the aggregated error bound (Eq.~\eqref{eq:bdonXi}) back into the expression of $\mathbb{E}[\mathfrak{T}_1]$~\eqref{eq:upper_bound_on_expT1}, we obtain:      
      \begin{equation} \label{eq:bound_T1}
        \mathbb{E}[\mathfrak{T}_1] \le - \frac{\alpha \hat\alpha c_{\rm min} E}{2} \mathbb{E}\left[\left\|\nabla F(\theta^{[t]})\right\|^2\right] + \alpha \hat\alpha E \frac{2c^2_{\rm max}}{c_{\rm min}} (1-\beta_1) \sum_{k=0}^t \beta_1^{t-k} \left( C_{\rm hist} \alpha^2 (t-k)^2 + C_{\rm err} \right).
      \end{equation}
      where the approximation $(1-\beta_1^{t+1}) \approx 1$ for sufficiently large $t$ is adopted. Furthermore, substituting Eq.~\eqref{eq:bound_T1} and Eq.~\eqref{eq:upbd_penalty} into the single-step descent inequality in Eq.~\eqref{eq:descent_inequality_error}, we obtain
      \begin{equation}
        \frac{\alpha \hat\alpha  c_{\rm min} E}{2} \mathbb{E} \left[\left\|\nabla F(\theta^{[t]})\right\|^2\right] \le \mathbb{E} \left[ F(\theta^{[t]}) \right] - \mathbb{E} \left[ F(\theta^{[t+1]}) \right] + \frac{L \alpha^2 c^2_{\rm max}}{2} M_m^2 + \mathfrak{R}^{[t]},
      \end{equation}
      where 
      \begin{equation}
        \mathfrak{R}^{[t]} =  \alpha \hat\alpha E \frac{2c^2_{\rm max}}{c_{\rm min}} (1-\beta_1) \sum_{k=0}^t \beta_1^{t-k} \left( C_{\rm hist} \alpha^2 (t-k)^2 + C_{\rm err} \right).
      \end{equation}
      Summing over $t = 0, \ldots, T-1$, we obtain
      \begin{align} \label{eq:sumgradient}
        &\frac{\alpha \hat\alpha  c_{\rm min} E}{2} \sum^{T-1}_{t=0} \mathbb{E} \left[\left\|\nabla F(\theta^{[t]})\right\|^2\right] \nonumber\\ 
        \le& \sum^{T-1}_{t=0} \left( \mathbb{E} \left[ F(\theta^{[t]}) \right] - \mathbb{E} \left[ F(\theta^{[t+1]}) \right] \right) + \sum^{T-1}_{t=0} \frac{L \alpha^2 c^2_{\rm max}}{2} M_m^2 + \sum^{T-1}_{t=0} \mathfrak{R}^{[t]} \nonumber\\
        \le& \mathbb{E}[F(\theta^{[0]})] - \mathbb{E}[F(\theta^{[T]})] + \frac{\alpha^2 c^2_{\rm max} L T M^2_m}{2} + \sum^{T-1}_{t=0} \mathfrak{R}^{[t]}  \nonumber\\
        \le& F(\theta^{[0]}) - F^* + \frac{\alpha^2 c^2_{\rm max} L T M^2_m}{2} + \sum^{T-1}_{t=0} \mathfrak{R}^{[t]}.
      \end{align}

      Therein,
      \begin{align}
        \sum^{T-1}_{t=0} \mathfrak{R}^{[t]} =& \sum^{T-1}_{t=0} \left(   \frac{2 \alpha \hat\alpha c^2_{\rm max} (1-\beta_1) E}{c_{\rm min}}   \sum_{k=0}^t \beta_1^{t-k} \left( C_{\rm hist} \alpha^2 (t-k)^2 + C_{\rm err} \right)\right)  \nonumber\\
        =& \frac{2 \alpha \hat\alpha c^2_{\rm max} (1-\beta_1) E}{c_{\rm min}}  \sum^{T-1}_{t=0} \left( \sum_{k=0}^t \beta_1^{t-k} \left( C_{\rm hist} \alpha^2 (t-k)^2 + C_{\rm err} \right)\right)  \nonumber\\
        %
        %
        %
        =& \frac{2 \alpha^3 \hat\alpha c^2_{\rm max} (1-\beta_1) E C_{\rm hist}}{c_{\rm min}}  \sum^{T-1}_{t=0} \sum^t_{k=0} \beta_1^{t-k} (t-k)^2 \nonumber\\
        & + \frac{2 \alpha \hat\alpha c^2_{\rm max} (1-\beta_1) E C_{\rm err}}{c_{\rm min}} \sum^{T-1}_{t=0} \sum^t_{k=0} \beta_1^{t-k} 
        %
      \end{align}
      Since 
      \begin{align*}
        \sum^{T-1}_{t=0} \sum^t_{k=0} \beta_1^{t-k} (t-k)^2 
        = \sum^{T-1}_{t=0} \sum^t_{j=0} \beta^j_1 j^2  
        \leq \sum^{T-1}_{t=0} \sum^\infty_{j=0} \beta^j_1 j^2
        = \frac{T\beta_1 (1+\beta_1)}{(1-\beta_1)^3}
      \end{align*}
      and
      \begin{align*}
        \sum^{T-1}_{t=0} \sum^t_{k=0} \beta_1^{t-k}  
        =\sum^{T-1}_{k=0} \sum^{T-1}_{t=k} \beta^{t-k}_1
        =\sum^{T-1}_{k=0} \sum^{T-k-1}_{j=0} \beta^j_1
        = \frac{T}{1-\beta_1},
      \end{align*}
      we have
      \begin{align} \label{eq:error_r}
        \sum^{T-1}_{t=0} \mathfrak{R}^{[t]} =& \sum^{T-1}_{t=0} \left(   \frac{2 \alpha \hat\alpha c^2_{\rm max} (1-\beta_1) E}{c_{\rm min}}   \sum_{k=0}^t \beta_1^{t-k} \left( C_{\rm hist} \alpha^2 (t-k)^2 + C_{\rm err} \right)\right)  \nonumber\\
        \le& \frac{2 \alpha^3 \hat\alpha c^2_{\rm max} (1-\beta_1) E C_{\rm hist}}{c_{\rm min}} \cdot \frac{T\beta_1(1+\beta_1)}{(1-\beta_1)^3} + \frac{2 \alpha \hat\alpha c^2_{\rm max} (1-\beta_1) E C_{\rm err}}{c_{\rm min}} \cdot \frac{T}{1-\beta_1} \nonumber\\
        =& \frac{2 \alpha^3 \hat\alpha c^2_{\rm max} \beta_1(1+\beta_1) T E C_{\rm hist}}{c_{\rm min} (1-\beta_1)^2} + \frac{2 \alpha \hat\alpha c^2_{\rm max} T E C_{\rm err}}{c_{\rm min}} 
      \end{align}
      By substituting \eqref{eq:error_r} into \eqref{eq:sumgradient}, we have

      \begin{align} \label{eq:convergence222}
      \frac{1}{T} \sum_{t=0}^{T-1} \mathbb{E}\left[\left\|\nabla F(\theta^{[t]})\right\|^2\right] 
      \le& \frac{2(F(\theta^{[0]}) - F^*)}{\alpha \hat\alpha c_{\rm min} E T}  
      + \frac{4 c_{\rm max}^2 C_{\rm err}}{c_{\rm min}^2}  \nonumber\\
      & + \frac{4 c_{\rm max}^2 \alpha^2 \beta_1(1+\beta_1) C_{\rm hist}}{c_{\rm min}^2 (1-\beta_1)^2} 
      + \frac{L \alpha c_{\rm max}^2 M_m^2}{\hat\alpha c_{\rm min} E} \end{align}
      %
      %
      When $\alpha=1/{T^{\frac{1}{4}}}$ and $\hat\alpha ={1}/{(LET^{\frac{1}{4}})}$ (i.e., the clients and the server all adopt a decaying learning rate), by plugging the following symbols into \eqref{eq:convergence222}
      \begin{equation}
      \begin{cases}
        C_\omega = \exp\!\left(\frac{2 h_{\max}}{\tau_{\min}}\right), \\
        C_{0} = L^2 c_{\max}^2 M_m^2 = \frac{c_{\max}^2 G^2}{\sqrt{T}} \exp\!\left(\frac{2 h_{\max}}{\tau_{\min}}\right), \\
        C_{1} = C_\omega\!\left(\kappa^2 + \hat{\alpha}^2 L^2 E^2 G^2 + \frac{\sigma^2}{E}\right)= \exp\!\left(\frac{2 h_{\max}}{\tau_{\min}}\right)\left(\kappa^2 + \frac{G^2}{\sqrt{T}} + \frac{\sigma^2}{E}\right) \\
        M_m^2 = \hat{\alpha}^2 C_\omega E^2 G^2 = \frac{G^2}{L^2 \sqrt{T}} \exp\!\left(\frac{2 h_{\max}}{\tau_{\min}}\right), \\
      \end{cases}
      \end{equation}
      %
      %
    we have
    \begin{align}
      & \frac{1}{T} \sum_{t=0}^{T-1} \mathbb{E}\!\left[\left\|\nabla F(\theta^{[t]})\right\|^2\right] \nonumber\\
      \le& \frac{2(F(\theta^{[0]}) - F^*)}{\alpha \hat{\alpha} c_{\min} E T} + \frac{4 c_{\max}^2 C_{\mathrm{err}}}{c_{\min}^2} + \frac{4 c_{\max}^2 \alpha^2 \beta_1(1+\beta_1) C_{\mathrm{hist}}}{c_{\min}^2 (1-\beta_1)^2} + \frac{L \alpha c_{\max}^2 M_m^2}{\hat{\alpha} c_{\min} E} \nonumber\\
      =&\frac{2L(F(\theta^{[0]}) - F^*)}{c_{\min}\sqrt{T}} + \frac{4 c_{\max}^2}{c_{\min}^2}\exp\!\left(\frac{2 h_{\max}}{\tau_{\min}}\right)\left(\kappa^2 + \frac{G^2}{\sqrt{T}} + \frac{\sigma^2}{E}\right) \nonumber\\
      &+ \frac{4 c_{\max}^4 \beta_1(1+\beta_1) G^2}{c_{\min}^2 (1-\beta_1)^2}\frac{1}{T}\exp\!\left(\frac{2 h_{\max}}{\tau_{\min}}\right) + \frac{c_{\max}^2 G^2}{c_{\min}\sqrt{T}}\exp\!\left(\frac{2 h_{\max}}{\tau_{\min}}\right) \nonumber\\
      =& \frac{4 c_{\max}^2}{c_{\min}^2}\exp\!\left(\frac{2 h_{\max}}{\tau_{\min}}\right)\left(\kappa^2 + \frac{\sigma^2}{E}\right) + \frac{2L(F(\theta^{[0]}) - F^*)}{c_{\min}\sqrt{T}} \nonumber\\
      &+ \left(\frac{4 c_{\max}^2 G^2}{c_{\min}^2}\exp\!\left(\frac{2 h_{\max}}{\tau_{\min}}\right) + \frac{c_{\max}^2 G^2}{c_{\min}}\exp\!\left(\frac{2 h_{\max}}{\tau_{\min}}\right)\right)\frac{1}{\sqrt{T}} \nonumber\nonumber\\
      &+ \frac{4 c_{\max}^4 \beta_1(1+\beta_1) G^2}{c_{\min}^2 (1-\beta_1)^2}\frac{1}{T}\exp\!\left(\frac{2 h_{\max}}{\tau_{\min}}\right) \nonumber\\
      %
      %
      =& \frac{2L(F(\theta^{[0]}) - F^*)}{c_{\min}\sqrt{T}} + \frac{4 c_{\max}^2}{c_{\min}^2}\exp\!\left(\frac{2 h_{\max}}{\tau_{\min}}\right)\left(\kappa^2 + \frac{\sigma^2}{E}\right) + \mathcal{O}\!\left(\frac{1}{\sqrt{T}}\right)
    \end{align} 
    which completes the proof of \textbf{Theorem}~\ref{thm:main_convergence}.

    \begin{remark}
      The role of the upper bound on the RPG values, $h_{\max}$, is captured entirely through the quantity $C_\omega=\exp(2h_{\max}/\tau_{\min})$ introduced in \textbf{Lemma}~\ref{le:boundweights}. Because the RPG-based aggregation weights follow a softmax rule (see Eq.~\eqref{eq:weight}), a larger $h_{\max}$ increases the worst-case spread of the logits and therefore allows the weights to become more concentrated on a small subset of clients, whereas a smaller $h_{\max}$ leads to more balanced aggregation; this is consistent with the standard behavior of temperature-scaled softmax distributions, where a larger logit spread yields a sharper probability distribution. This effect propagates throughout the proof. In particular, \textbf{Lemma}~\ref{le:weighted-variance-hete-control} amplifies the weighted heterogeneity term from $\kappa^2$ to $C_\omega \kappa^2$, showing that $h_{\max}$ does not change the intrinsic heterogeneity level itself, but magnifies its impact under adaptive weighting. The same factor also appears in \textbf{Lemma}~\ref{le:momentum-bounds} through $M_m^2=\hat\alpha^2 C_\omega E^2 G^2$, implying that a larger $h_{\max}$ increases the worst-case magnitude of both the momentum and the update steps. These dependencies further enter the main proof through the bounds on gradient mismatch, local drift, and stochastic noise in Eqs.~\eqref{eq:het_bias}, \eqref{eq:local_drift}, and \eqref{eq:stochastic_noise}, and are collected in the aggregate error bound \eqref{eq:bdonXi} through the constants $C_0$ and $C_1$. Consequently, in the final convergence bound, the irreducible error floor is multiplied by $\exp(2h_{\max}/\tau_{\min})$, which means that a larger $h_{\max}$ leads to sharper weighting and a higher error floor, while a smaller $h_{\max}$ yields more conservative weighting and improved stability.
      %
    \end{remark}

\section{Implementation of LoRA under FGRPO framework} \label{sec:app_lora}
  Low-Rank Adaptation (LoRA)~\cite{HuSWALWWC-ICLR22} is a parameter-efficient fine-tuning method that adapts large pre-trained models without updating the full set of model parameters. For a linear layer with frozen pre-trained weight $\mathbf{W}_0$, LoRA introduces a trainable low-rank update $\Delta\mathbf{W}$ and uses the effective weight $\mathbf{W}=\mathbf{W}_0+\Delta\mathbf{W}$ during the forward pass. The update is parameterized as $\Delta\mathbf{W}=\frac{\alpha}{r}\mathbf{B}\mathbf{A}$, where $\mathbf{A}$ and $\mathbf{B}$ are trainable low-rank matrices, $r$ is the LoRA rank, and $\alpha$ is the scaling factor. Since only $\mathbf{A}$ and $\mathbf{B}$ are optimized, LoRA substantially reduces the number of trainable parameters, GPU memory usage, and communication cost, while keeping the backbone model frozen and incurring no additional inference latency after merging the adapters into the base weights.

  We implement LoRA under our FGRPO framework by restricting both local optimization and server-client communication to the adapter parameters. Specifically, LoRA modules are inserted into the target linear projection layers of each decoder layer in the language-model backbone, including the self-attention projections (\texttt{q\_proj}, \texttt{k\_proj}, \texttt{v\_proj}, and \texttt{o\_proj}) and the MLP projections (\texttt{gate\_proj}, \texttt{up\_proj}, and \texttt{down\_proj}). We set the LoRA rank to $r=32$ and the scaling factor to $\alpha=128$. At the beginning of each communication round $t$, the server broadcasts only the global adapter parameters $\theta^{[t]}=\{\mathbf{A}^{[t]},\mathbf{B}^{[t]}\}$ to the clients, while the backbone weights $\mathbf{W}_0$ remain fixed throughout training. Each client then performs local GRPO updates on its private data, where gradients are computed only with respect to the LoRA adapters. After local training, client $i$ uploads its updated adapters $\theta^{[t]}_i=\{\mathbf{A}^{[t]}_i,\mathbf{B}^{[t]}_i\}$ together with its round-level reward statistic $\bar{R}^{[t]}_i$. The server applies our RPG-based adaptive weighting mechanism to these uploaded adapters and aggregates them to form the next global adapter. 
  %

  We report the per-round communication overhead of the LoRA-based implementation in Table~\ref{tab:communication_overhead}. Since LoRA freezes the backbone and updates only low-rank adapter parameters, it substantially reduces the number of transmitted parameters. The communication overhead remains moderate across model scales, ranging from 228.45 MB to 400.08 MB. Specifically, the overhead is 228.45 MB, 252.07 MB, 308.06 MB, and 400.08 MB for the 3B, 4B, 7B, and 11B models, respectively. These results show that even when scaling to larger backbones, FGRPO communicates only a compact set of trainable adapter parameters, making federated reasoning-policy optimization practical under limited communication resources.
  \begin{table}[t]
  \centering
  \caption{Per-round communication overhead of FGRPO for each client across different backbone models.}
  \label{tab:communication_overhead}
  \small
  \setlength{\tabcolsep}{7pt}
  \begin{tabular}{lcccc}
  \toprule
  \textbf{Model} & \textbf{Qwen2.5-3B} & \textbf{Qwen3-4B} & \textbf{Qwen2.5-7B} & \textbf{Llama-3.2-11B} \\
  \midrule
  \textbf{Communication overhead (MB)} 
  & 228.45 & 252.07 & 308.06 & 400.08 \\
  \bottomrule
  \end{tabular}
  \end{table}

\section{More Information about Experiment Settings} \label{sec:app_expsetting}
  We conduct our extensive experiments on the following two datasets:
\begin{itemize}
    \item \textbf{Multimodal-Open-R1-8k-Verified}~\cite{openr1_verified} consists of 8,000+ high-quality samples focusing on diverse mathematical disciplines, including algebra, probability, and functional analysis. It requires models to perform multi-step deduction grounded in various visual contexts such as statistical charts and functional plots. In our framework, we adopt an outcome-based verification protocol to evaluate the model's ability to synthesize coherent, long-chain reasoning paths across broad logical domains, focusing on the accuracy of the final answer.
    \item \textbf{GEOQA-8k}~\cite{chen2025r1v} is designed for specialized geometric reasoning and comprises 8,000+ problems pairing 2D geometric diagrams with natural language descriptions. It emphasizes spatial reasoning and geometric constraints, requiring models to derive precise numerical solutions through joint spatial-logical synthesis. Following the same outcome-based verification protocol, the framework is assessed on its capacity to arrive at correct answers based on axiomatic theorems without intermediate process-based supervision.
  \end{itemize}
  %


  For the OpenR1 dataset, we stratify samples into three difficulty tiers (\emph{simple}, \emph{medium}, and \emph{hard}) based on equal tertiles of reasoning trace length. We use trace length as a metric to evaluate task complexity, as longer traces typically involve more logical transitions and self-correction steps. To simulate data heterogeneity, we allocate these tiers to clients using a Dirichlet distribution parameterized by $\mu$, so that different clients are exposed to varying levels of reasoning difficulty. 
  %
  %
  We examine different levels of data heterogeneity by varying $\mu \in \{0.05, 1.0\}$, corresponding to \emph{highly} and \emph{moderately} non-IID regimes, respectively, along with a \emph{uniform} baseline with $\mu \rightarrow \infty$.

  For the GEOQA dataset, we model domain heterogeneity by partitioning samples according to topological complexity and geometric primitives (e.g., \emph{points}, \emph{lines}, \emph{circles}, and \emph{polygons}). Using the same Dirichlet-based allocation, we assign clients to specialize in disjoint visual concepts, such as circular reasoning versus polygonal construction. Unlike the complexity-driven imbalance in OpenR1, this setup evaluates the model’s ability to aggregate diverse feature representations and generalize across structurally distinct geometric subdomains without incurring catastrophic forgetting.

  In this paper, we compare FGRPO with the following three representative FL baselines:
  \begin{itemize}
    \item \textbf{FedAvg}~\cite{McMahanMRHA-AISTATS17} is the classical FL algorithm, which performs iterative model averaging across decentralized clients. In each communication round, each client performs local GRPO optimization on its private dataset, and the server aggregates the resulting local models via a weighted average. Specifically, the aggregation weights are proportional to the local data volume, i.e., $\omega_i = \frac{D_i}{\sum_{i'=1}^{N} D_{i'}}$, where $D_i$ denotes the size of client $i$'s dataset. This procedure enables collaborative model training without sharing raw data, while naturally emphasizing clients with larger datasets.
    %
    %
    %
    \item \textbf{FedProx}~\cite{LiSZSTS-MLSys20} is a heterogeneity-aware extension of FedAvg designed to improve federated optimization under non-IID data. In the FGRPO setting, we adapt FedProx by adding a proximal regularization term $\|\theta_i-\theta^{[t]}\|^2_2$ to each client's local GRPO objective. 
    %
    %
    %
    %
    %
    %
    \item \textbf{SCAFFOLD}~\cite{KarimireddyKMRS-ICML20} is a variance-reduction-based federated optimization method that explicitly addresses client drift under non-IID data distributions. Specifically, SCAFFOLD equips each client $i$ with control variates that are incorporated into its local gradient $g^{[t,e]}_i$, thereby correcting the discrepancy between local and global update directions. These control variates estimate the deviation between the client-specific gradient and the global objective, enabling each local update to better align with the global optimization trajectory and significantly reducing the variance induced by heterogeneous data.
    %
    %
  %
  \end{itemize}

\section{Supplementary Experiment Results} \label{sec:appexp}
  We first present complementary comparison results in Sec.\ref{ssec:addcomp}. We then analyze the impact of the number of clients in Sec.\ref{ssec:client}, followed by the effect of data heterogeneity in Sec.\ref{ssec:datahet}. Next, we examine the extension of our framework to GRPO-family variants with RPG ablations in Sec.\ref{ssec:extension}, and conduct hyperparameter sensitivity analysis in Sec.\ref{ssec:sensitivity}. Finally, we report the resource consumption of FGRPO in Sec.\ref{ssec:resconsumption}.

  \subsection{Additional Comparison Results} 
  \label{ssec:addcomp}
    \begin{figure*}[t!]
    \centering
      \parbox{.245\textwidth}{\center\includegraphics[width=.245\textwidth]{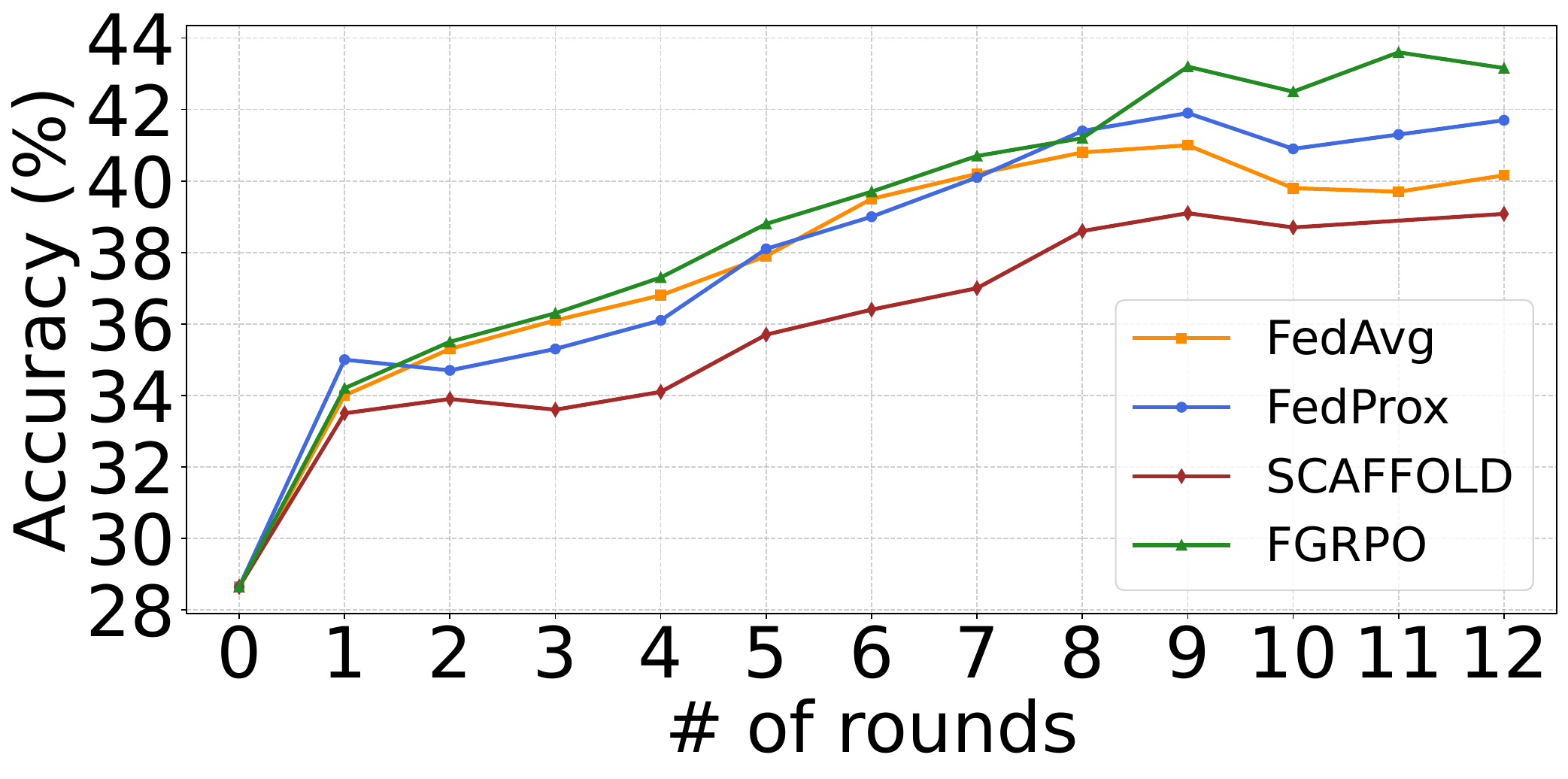}}
      \parbox{.245\textwidth}{\center\includegraphics[width=.245\textwidth]{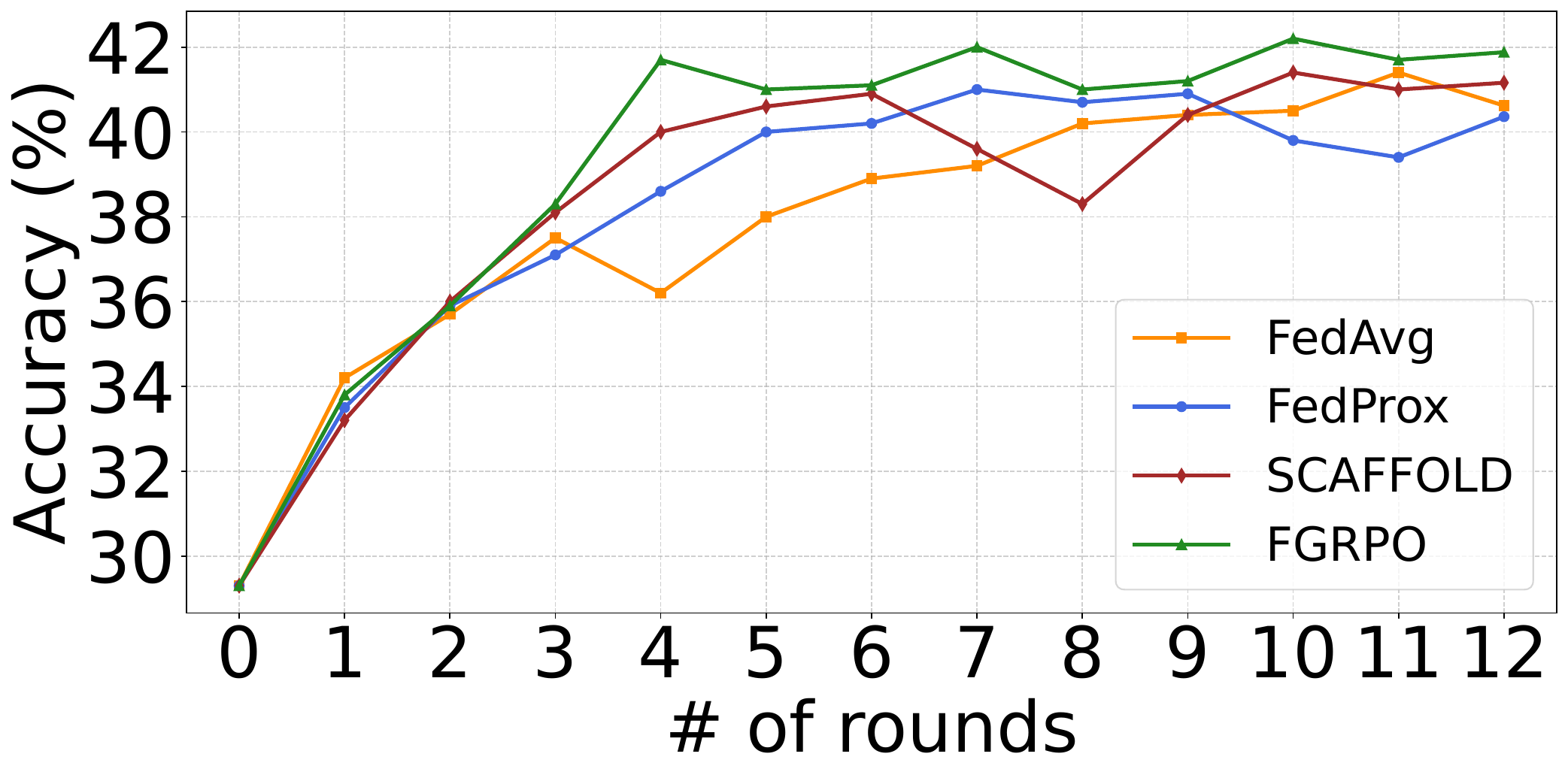}}
      \parbox{.245\textwidth}{\center\includegraphics[width=.245\textwidth]{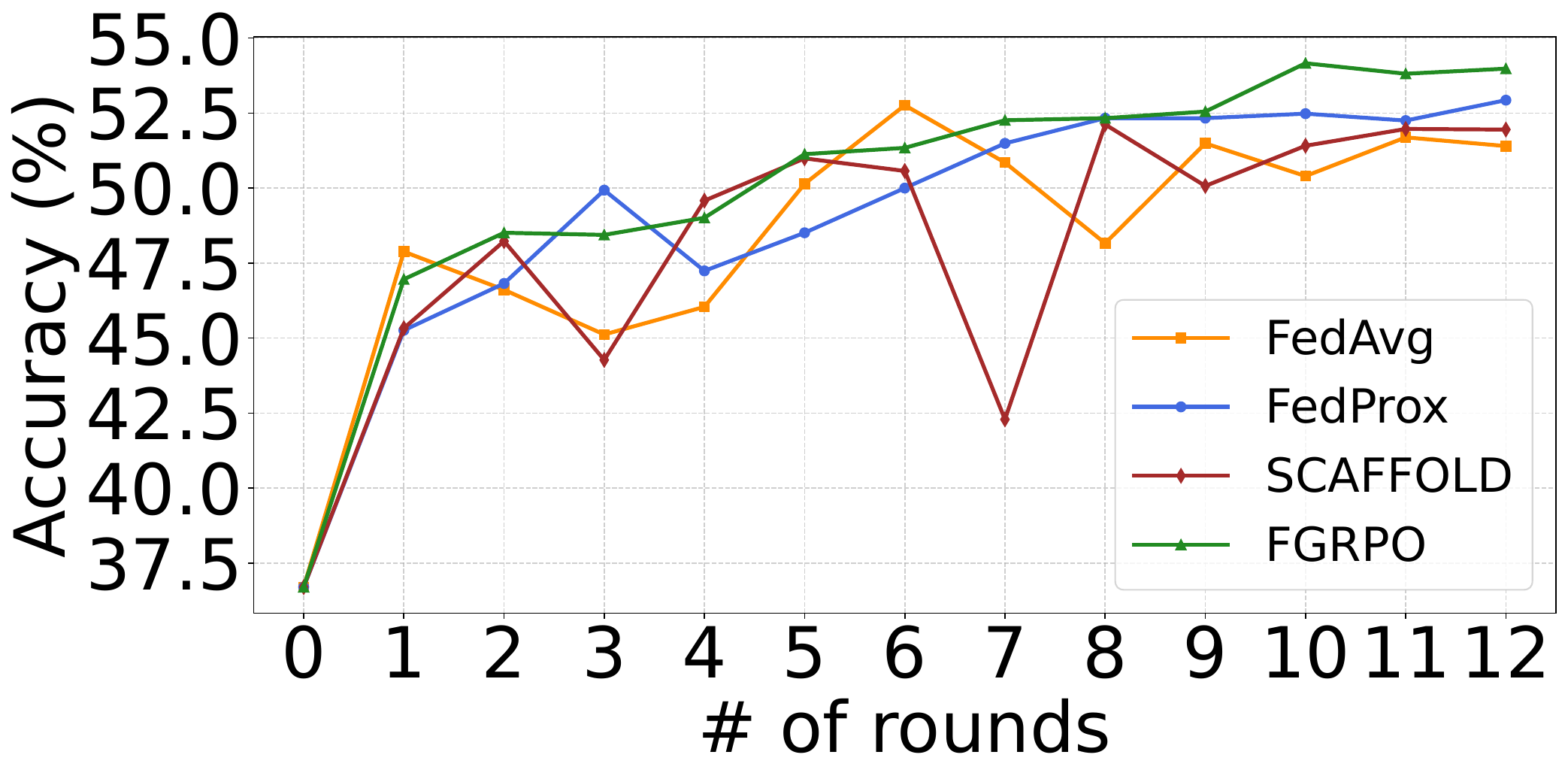}}
      \parbox{.245\textwidth}{\center\includegraphics[width=.245\textwidth]{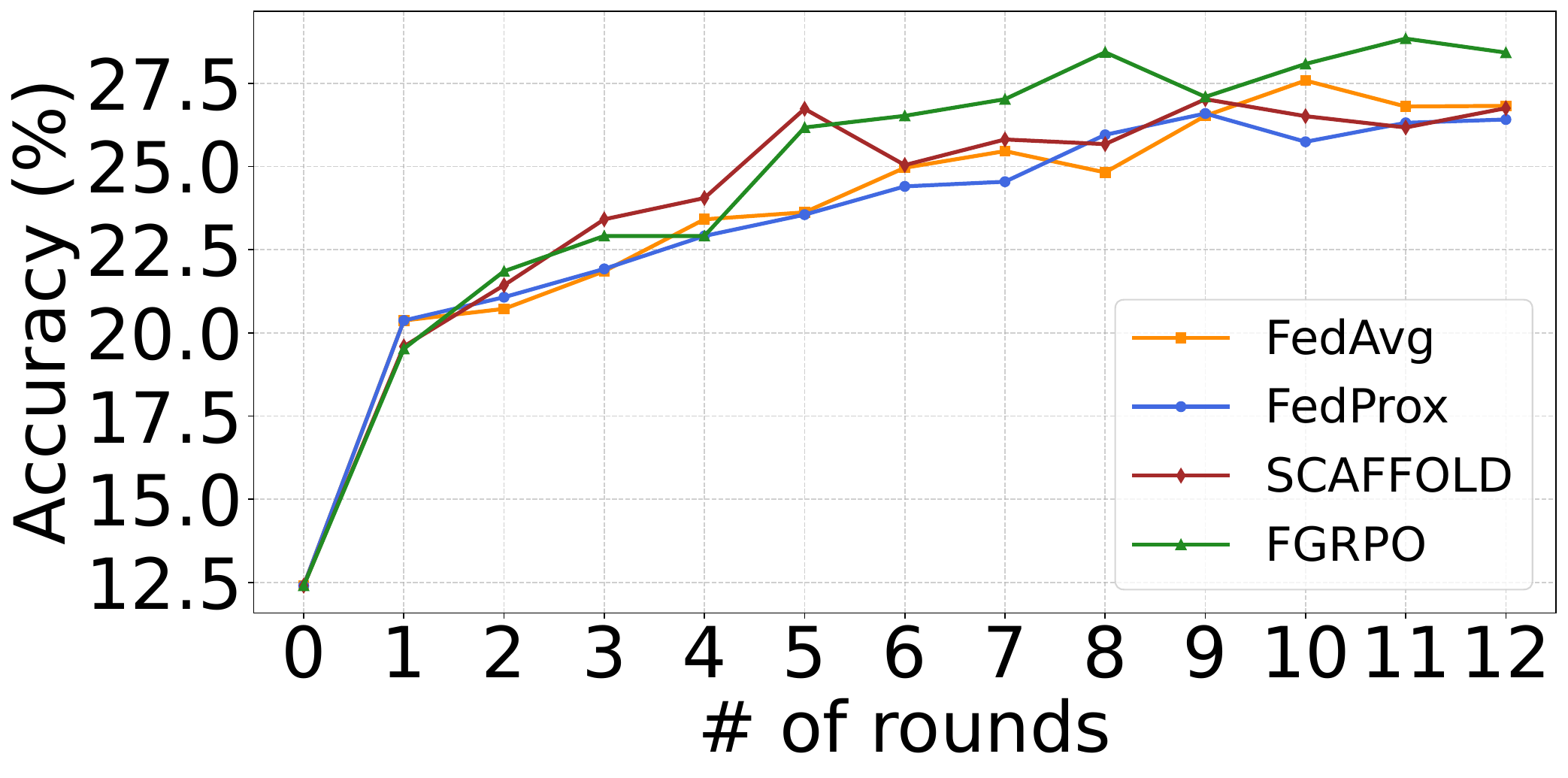}}
      \parbox{.245\textwidth}{\center\scriptsize(a) 4B model on OpenR1}
      \parbox{.245\textwidth}{\center\scriptsize(b) 11B model on OpenR1}
      \parbox{.245\textwidth}{\center\scriptsize(c) 4B model on GEOQA}
      \parbox{.245\textwidth}{\center\scriptsize(d) 11B model on GEOQA}
      \caption{Test accuracy convergence trajectories of the different models (Qwen3-4B and Llama-3.2-11B) on OpenR1 and GEOQA datasets.}
    \label{fig:appen_acc_convergence}
    \end{figure*}
    \begin{figure*}[t!]
    \centering
      \parbox{.245\textwidth}{\center\includegraphics[width=.245\textwidth]{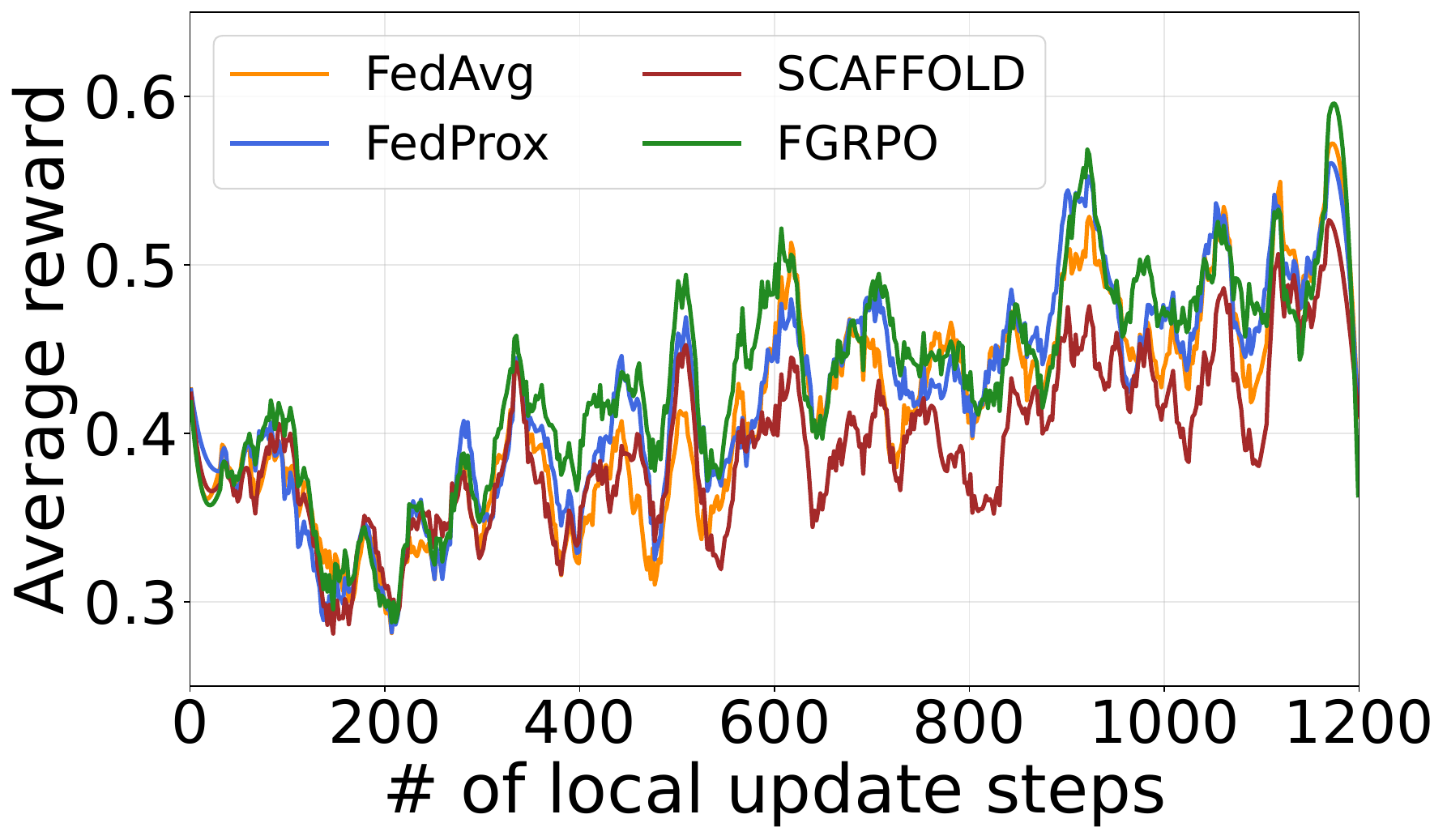}}
      \parbox{.245\textwidth}{\center\includegraphics[width=.245\textwidth]{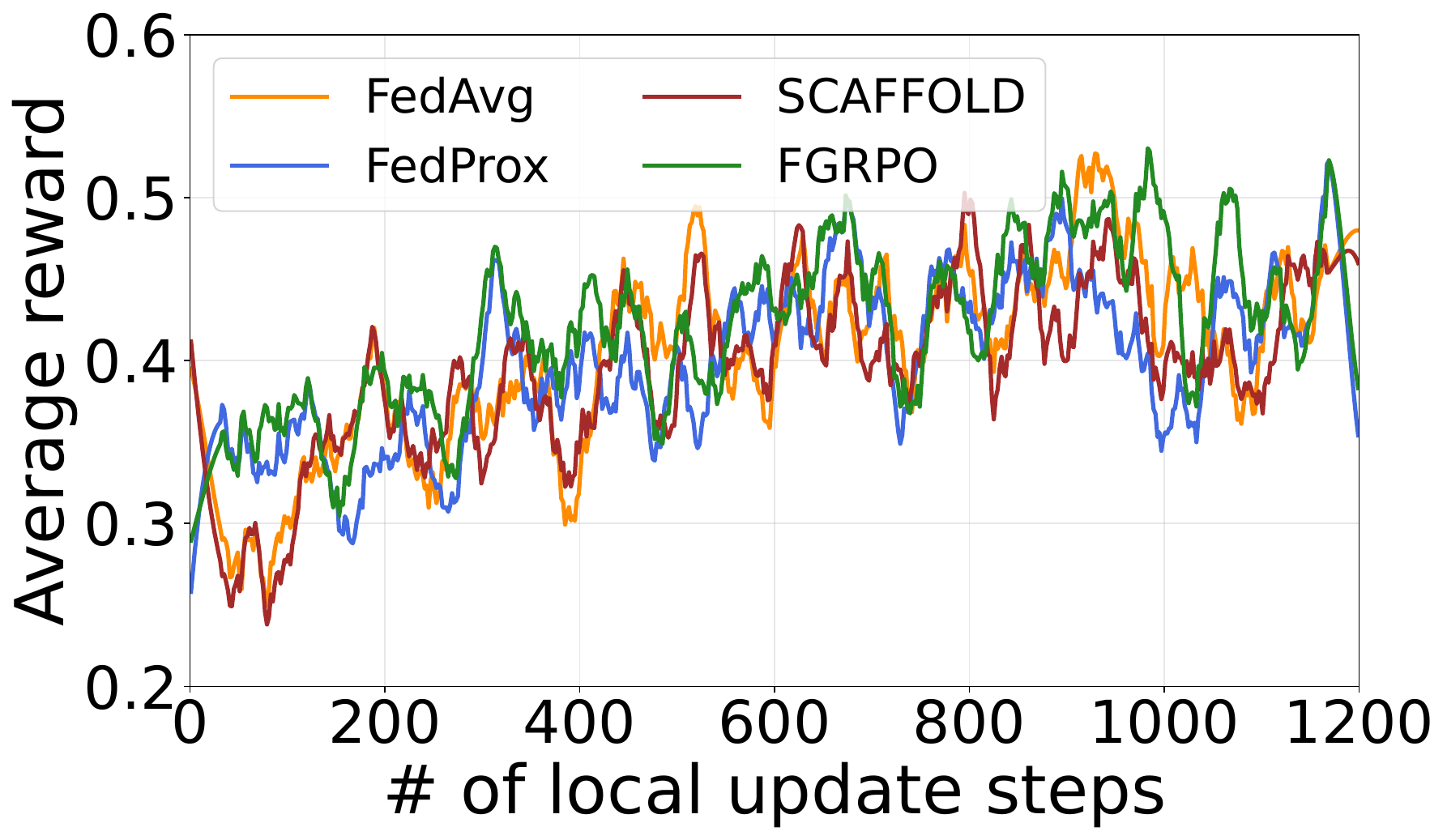}}
      \parbox{.245\textwidth}{\center\includegraphics[width=.245\textwidth]{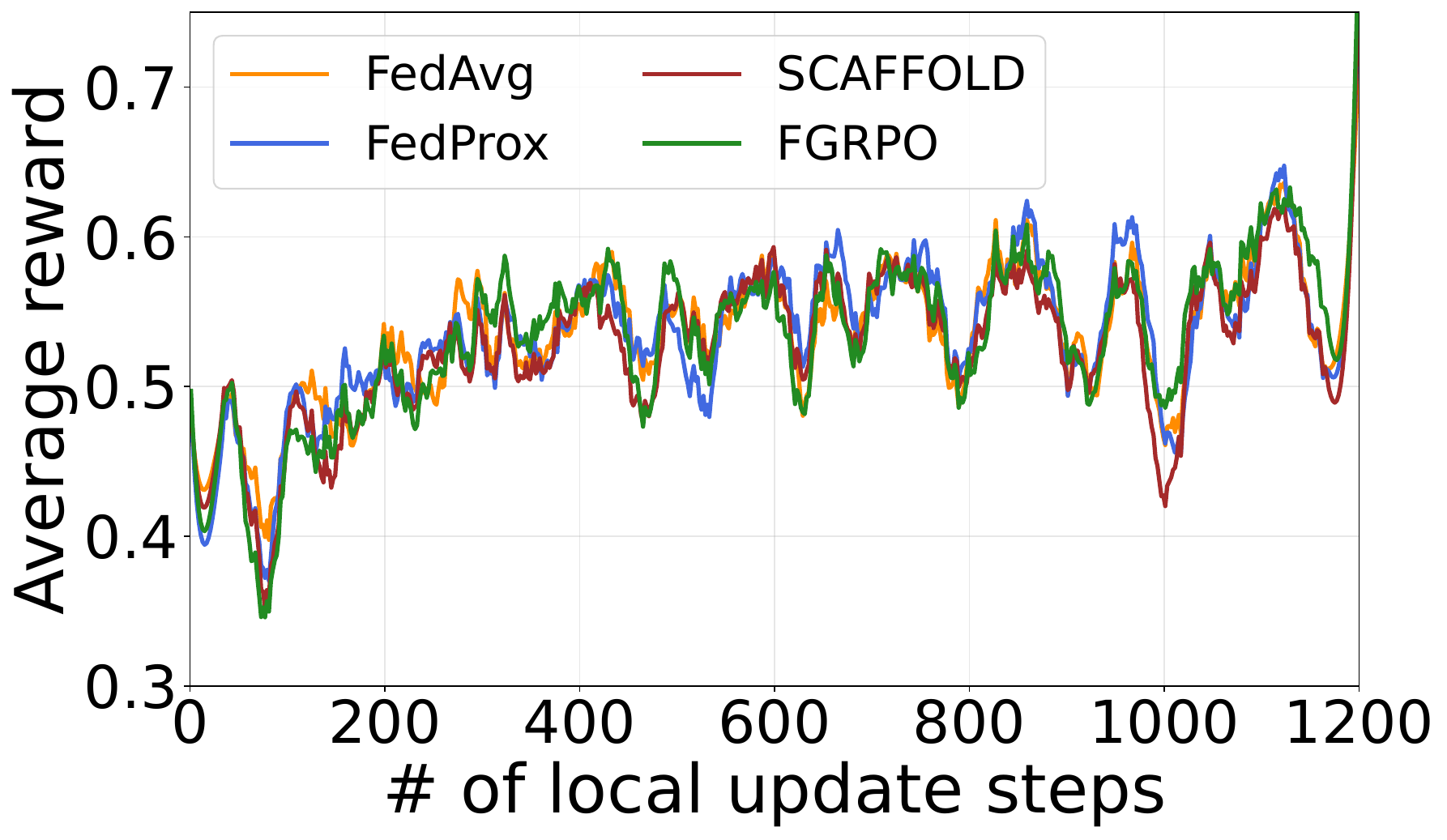}}
      \parbox{.245\textwidth}{\center\includegraphics[width=.245\textwidth]{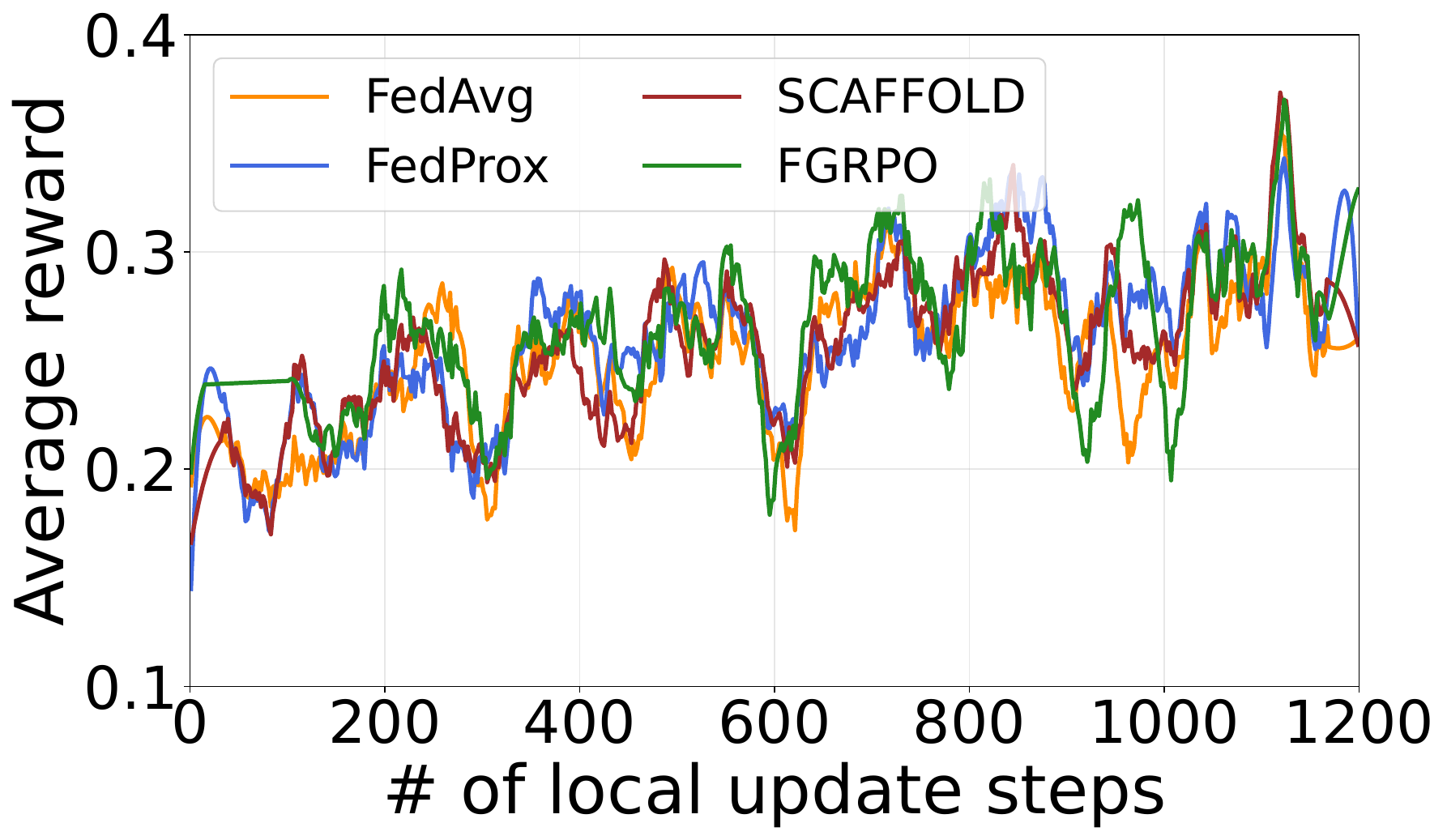}}
      \parbox{.245\textwidth}{\center\scriptsize(a) 4B model on OpenR1}
      \parbox{.245\textwidth}{\center\scriptsize(b) 11B model on OpenR1}
      \parbox{.245\textwidth}{\center\scriptsize(c) 4B model on GEOQA}
      \parbox{.245\textwidth}{\center\scriptsize(d) 11B model on GEOQA}
      \caption{Average reward trajectories of the different models (Qwen3-4B and Llama-3.2-11B) on OpenR1 and GEOQA datasets.}
      \label{fig:Appen_reward_convergence_comparison_with_baseline}
    \end{figure*}

    In this section, we provide additional comparison results on Qwen3-4B and Llama-3.2-11B models to further complement the results shown in Figs.~\ref{fig:acc_convergence}--\ref{fig:reward_convergence_comparison_with_baseline}. %
    As demonstrated in Fig.~\ref{fig:appen_acc_convergence}, for Qwen3-4B model, FGRPO exhibits a clear upward trajectory and reaches the highest final accuracy on both Open-R1 and GEOQA. In contrast, the baseline methods either plateau earlier or show more noticeable fluctuations in later communication rounds. For Llama-3.2-11B model, FGRPO also maintains a stable advantage in the later stages of training, especially on GEOQA, where it consistently stays above the other federated baselines after the middle communication rounds. These observations indicate that RPG-based aggregation can provide stable optimization benefits across different models.

    The reward trajectories in Fig.~\ref{fig:Appen_reward_convergence_comparison_with_baseline} are consistent with the accuracy curves. On both Qwen3-4B and Llama-3.2-11B models, FGRPO generally achieves more favorable average reward trajectories across local update steps. This shows that FGRPO not only improves final evaluation accuracy, but also leads to more effective reinforcement learning dynamics during federated training. By aggregating clients according to relative performance gains rather than absolute reward magnitudes, FGRPO better captures meaningful local progress under heterogeneous reward distributions.

  \subsection{Impact of the Number of Clients}  \label{ssec:client}
    We evaluate the test accuracy of different methods under varying numbers of clients ($N \in \{3, 5, 10\}$) using the Qwen2.5-3B model, as shown in Fig.~\ref{fig:number_of_clients}. The results show that FGRPO consistently outperforms all baselines, maintaining an accuracy above 42.12\% on OpenR1 even at $N=10$, substantially higher than the 37.44\% achieved by FedProx, demonstrating strong scalability to larger client populations. Moreover, the performance gap between FGRPO and the baselines widens as the number of clients increases. On GEOQA, the accuracy gap between FGRPO and SCAFFOLD grows from 2.42\% at $N=3$ to 6.79\% at $N=10$, suggesting that FGRPO remains effective when aggregating updates from more clients. A similar trend is observed against FedProx, where the margin improves from 4.51\% to 4.6\%. Overall, these results demonstrate that RPG-based aggregation enhances learning performance across varying numbers of clients.
    \begin{figure*}[t!]
    \centering
      \parbox{.4\textwidth}{\center\includegraphics[width=.38\textwidth]{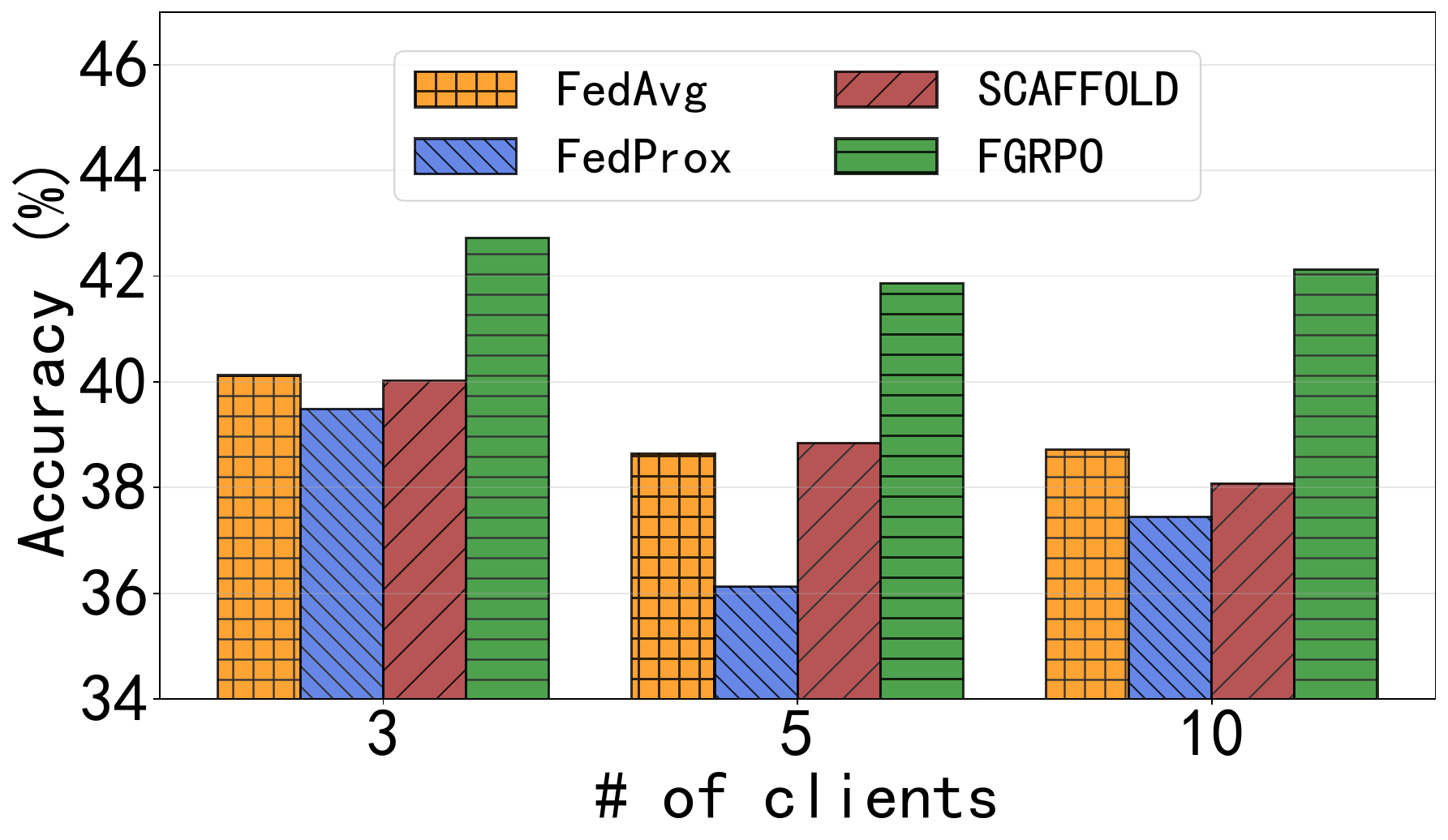}}
      \parbox{.4\textwidth}{\center\includegraphics[width=.38\textwidth]{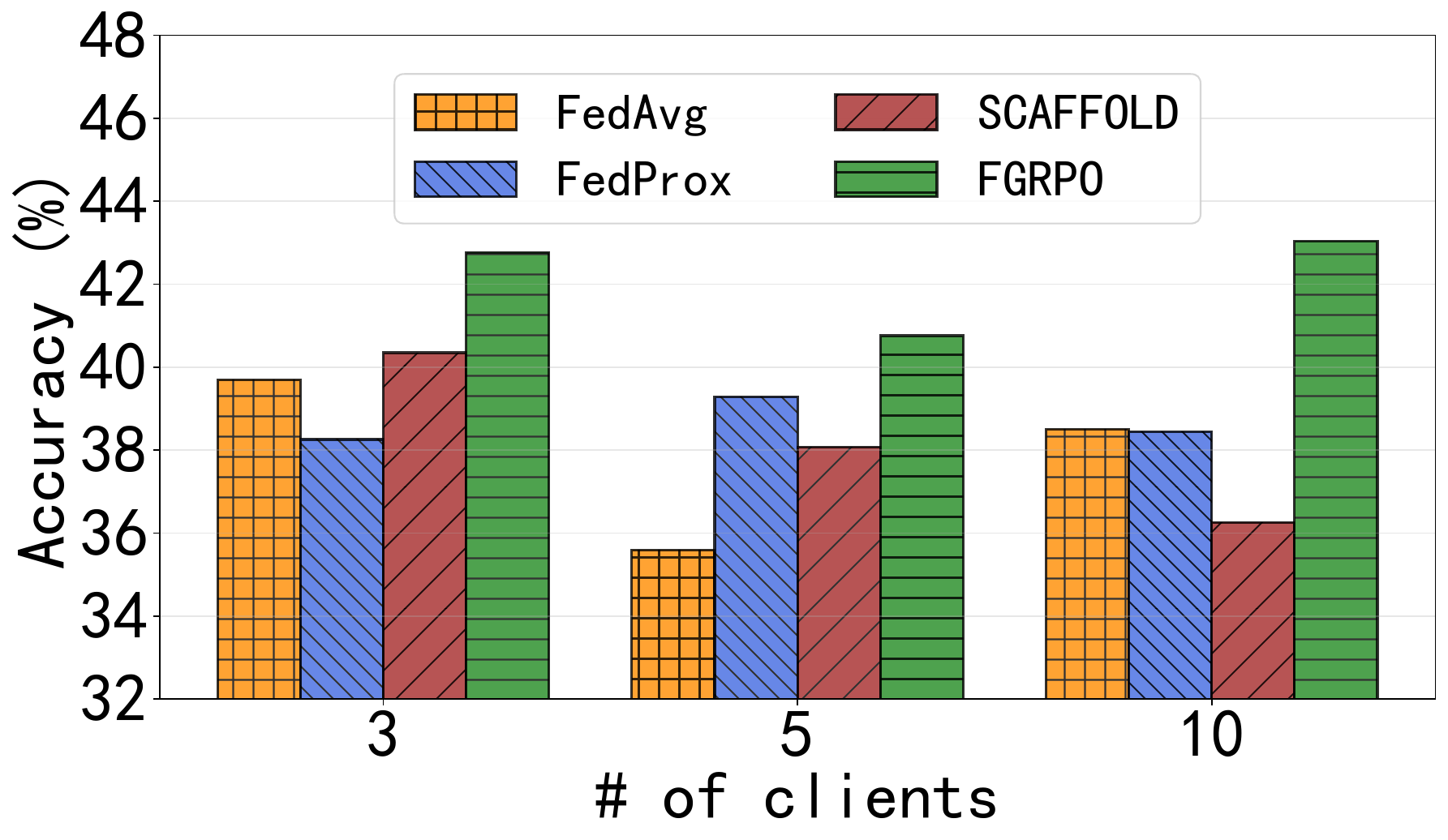}}
      \parbox{.4\textwidth}{\center\scriptsize(a) OpenR1}
      \parbox{.4\textwidth}{\center\scriptsize(b) GEOQA}
      \caption{Test accuracy under different numbers of clients.}
    \label{fig:number_of_clients}
    \end{figure*}

    To further analyze the effect of client population size, we report the average reward trajectories in Fig.~\ref{fig:reward_Client_OpenR1} and Fig.~\ref{fig:reward_client_GEOQA}. On both datasets, all methods show a generally increasing reward trend as local training proceeds, while the reward curves become relatively smoother when the number of clients increases from $N=3$ to $N=10$. This suggests that aggregating feedback from a larger client population can partially reduce the stochastic fluctuations caused by individual local updates. Compared with FedAvg, FedProx, and SCAFFOLD, FGRPO consistently maintains higher average reward trajectories across different client population sizes. When the federation is small ($N=3$), the training dynamics are more volatile because each client has a larger influence on the global update. Even in this setting, FGRPO achieves higher reward levels and shows stronger robustness to local reward noise. As the number of clients increases to $N=5$ and $N=10$, FGRPO further exhibits a more stable upward trend and sustains a clear reward advantage over the baselines on both Open-R1 and GEOQA. These results indicate that the benefit of FGRPO is not limited to a specific federation size. While increasing the number of clients can improve the stability of federated aggregation, the performance gap between FGRPO and the baselines shows that client population size alone is insufficient. The proposed RPG-based aggregation remains important for identifying clients with meaningful relative reward improvements and for mitigating the negative effect of heterogeneous reward scales during the federated training.
    \begin{figure*}[t!]
    \centering
      \parbox{.325\textwidth}{\center\includegraphics[width=.325\textwidth]{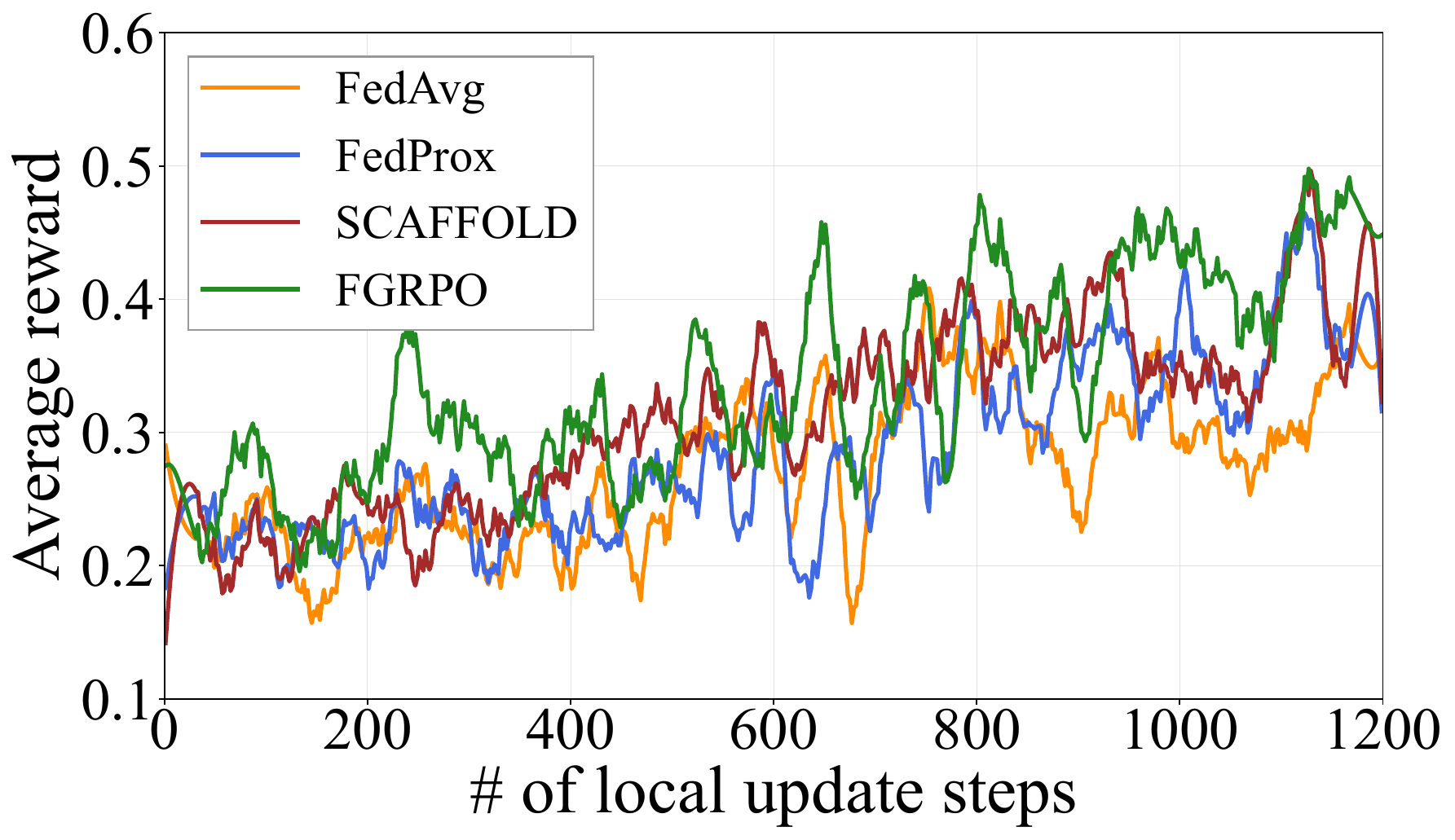}}
      \parbox{.325\textwidth}{\center\includegraphics[width=.325\textwidth]{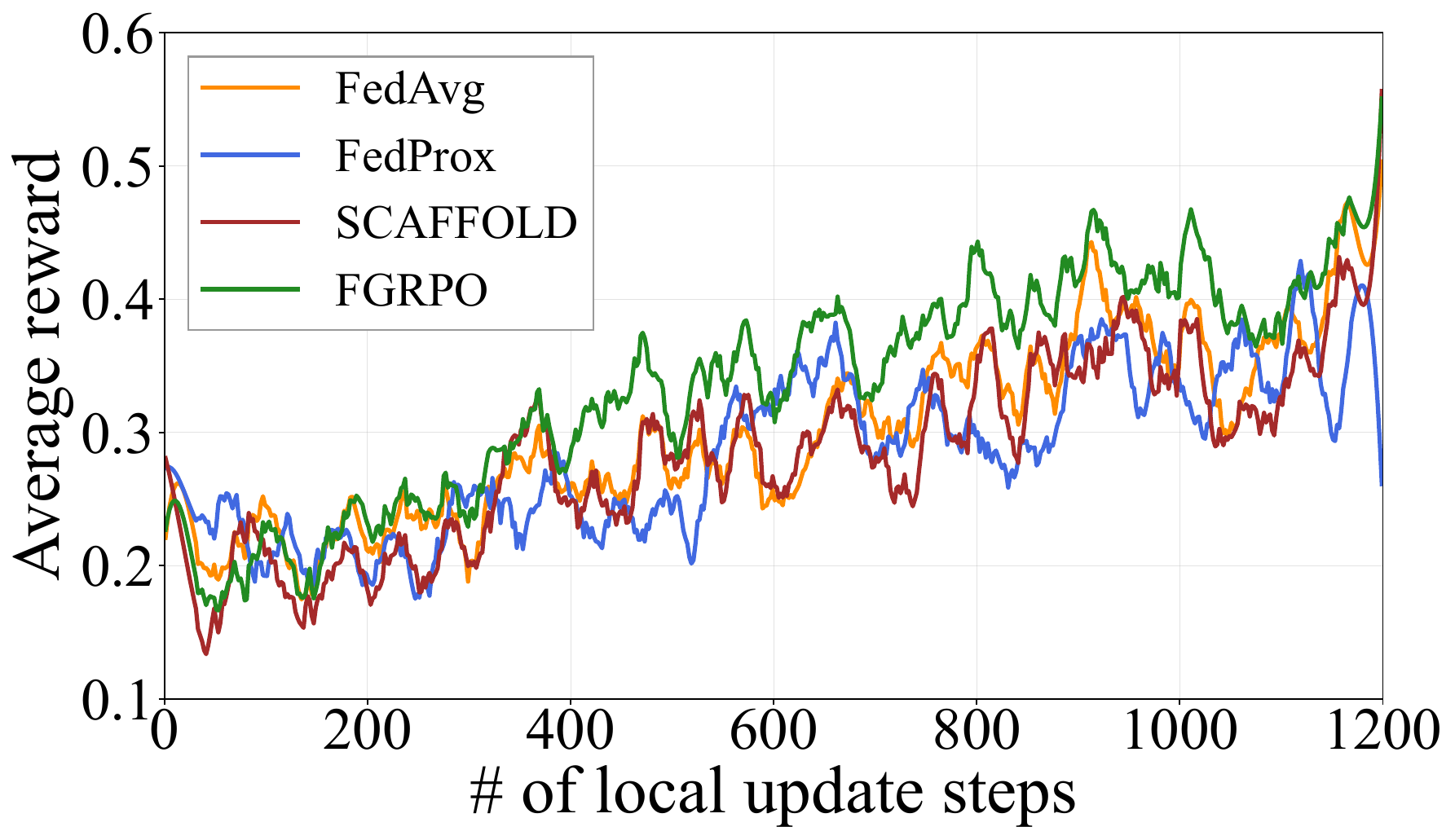}}
      \parbox{.325\textwidth}{\center\includegraphics[width=.325\textwidth]{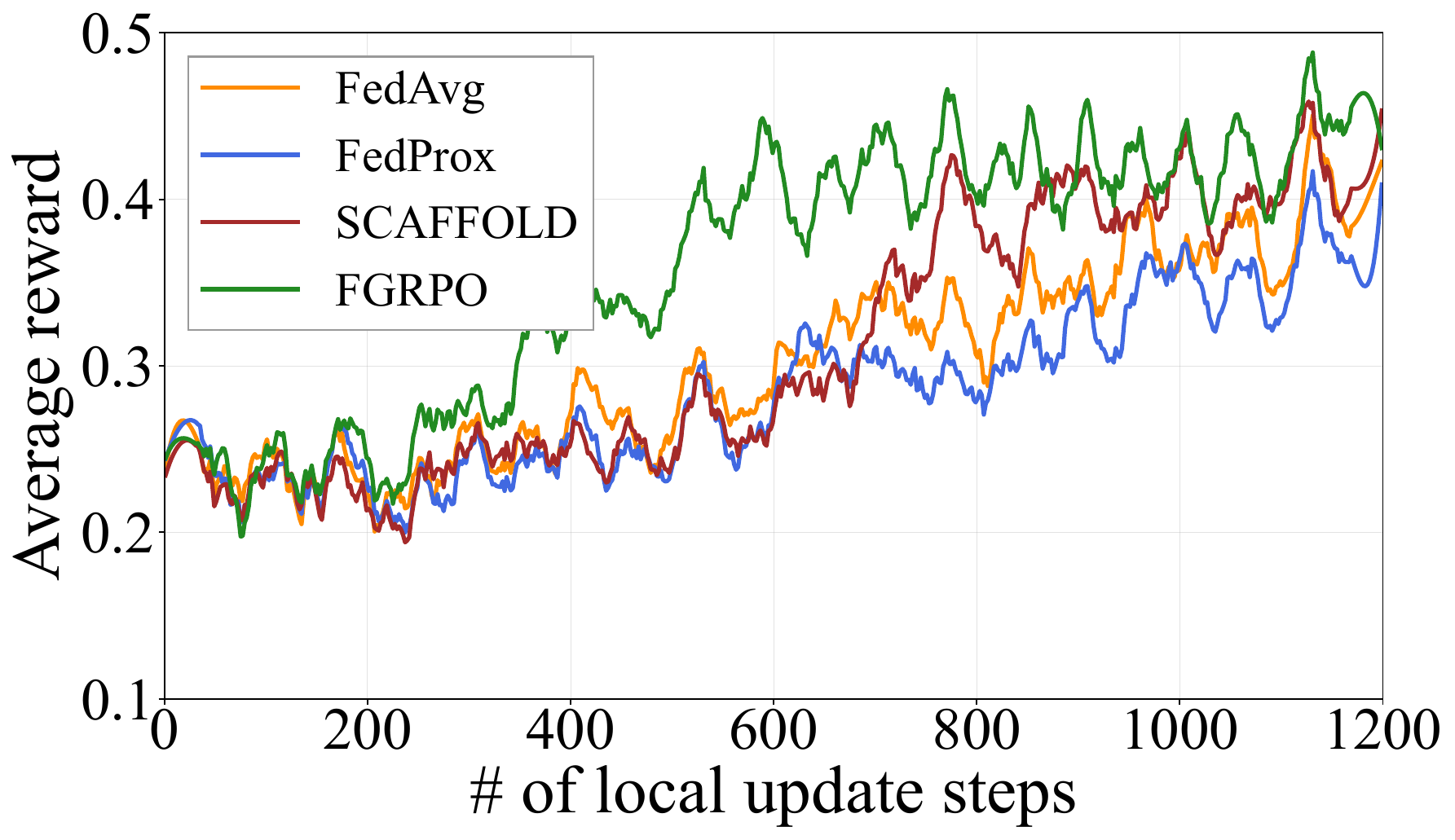}}
      \parbox{.325\textwidth}{\center\scriptsize(a) 3 clients}
      \parbox{.325\textwidth}{\center\scriptsize(b) 5 clients}
      \parbox{.325\textwidth}{\center\scriptsize(c) 10 clients}
      \caption{Average reward trajectories under varying numbers of clients on OpenR1 dataset.}
    \label{fig:reward_Client_OpenR1}
    \end{figure*}
    \begin{figure*}[t!]
    \centering
      \parbox{.325\textwidth}{\center\includegraphics[width=.325\textwidth]{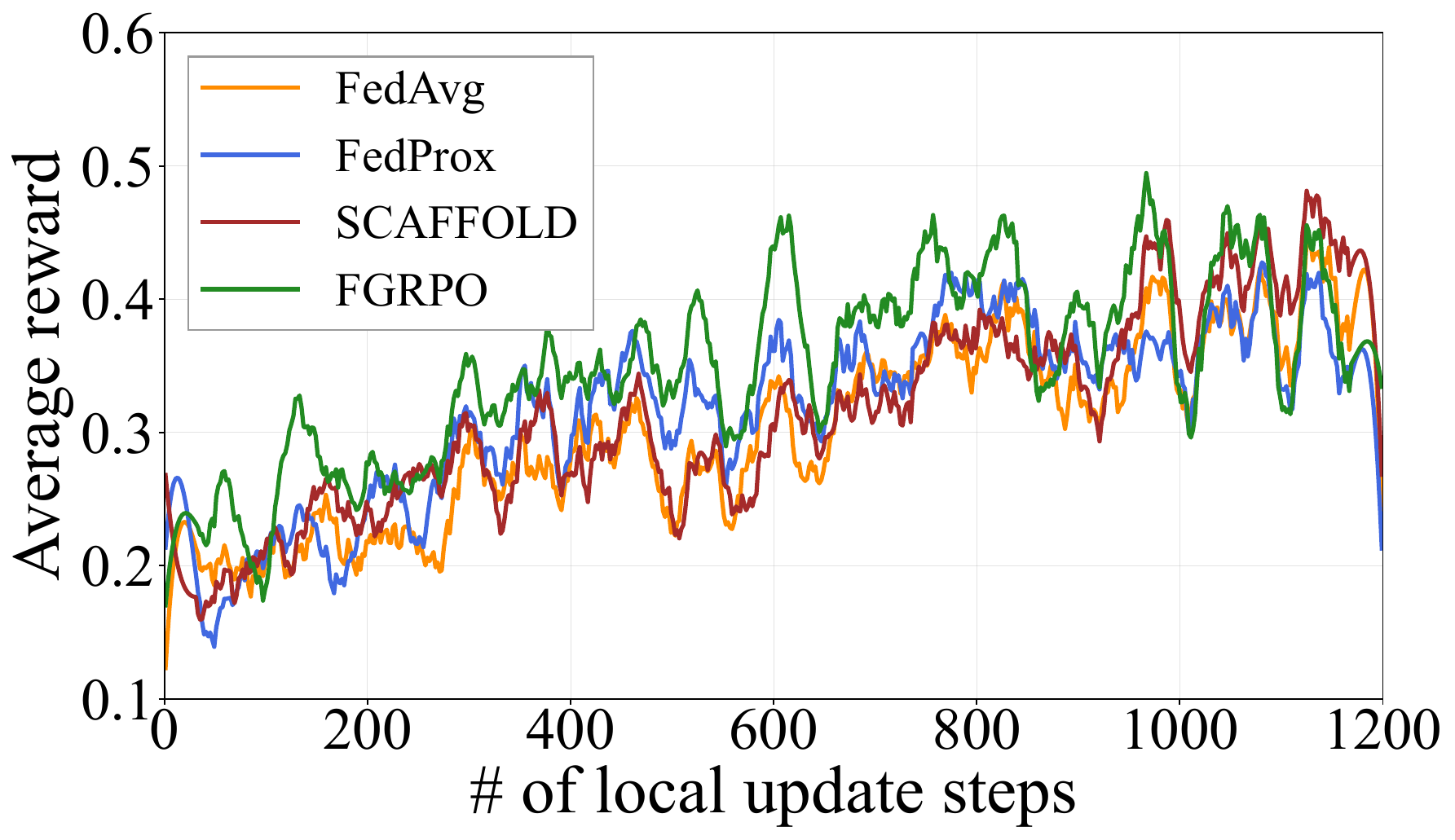}}
      \parbox{.325\textwidth}{\center\includegraphics[width=.325\textwidth]{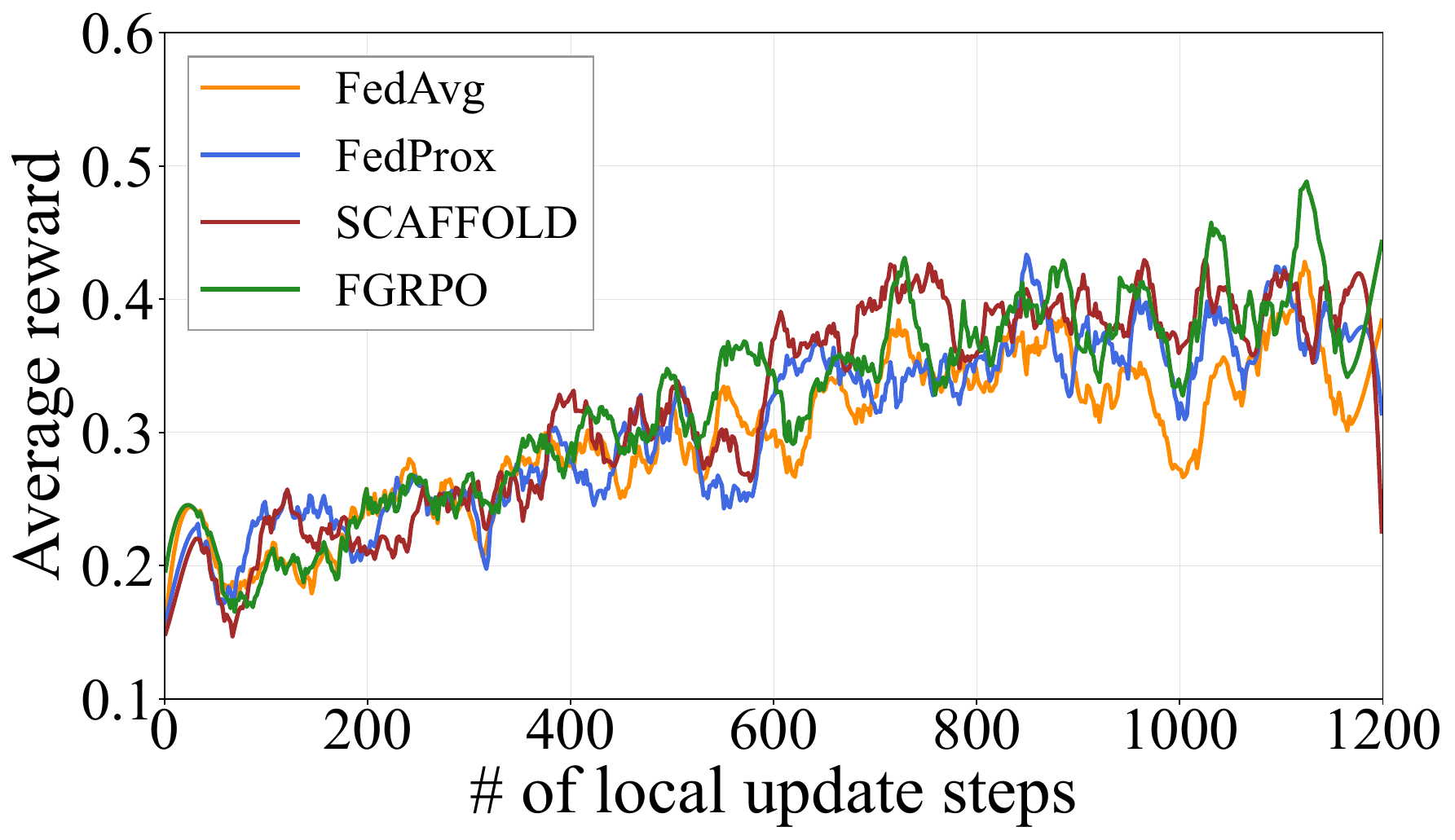}}
      \parbox{.325\textwidth}{\center\includegraphics[width=.325\textwidth]{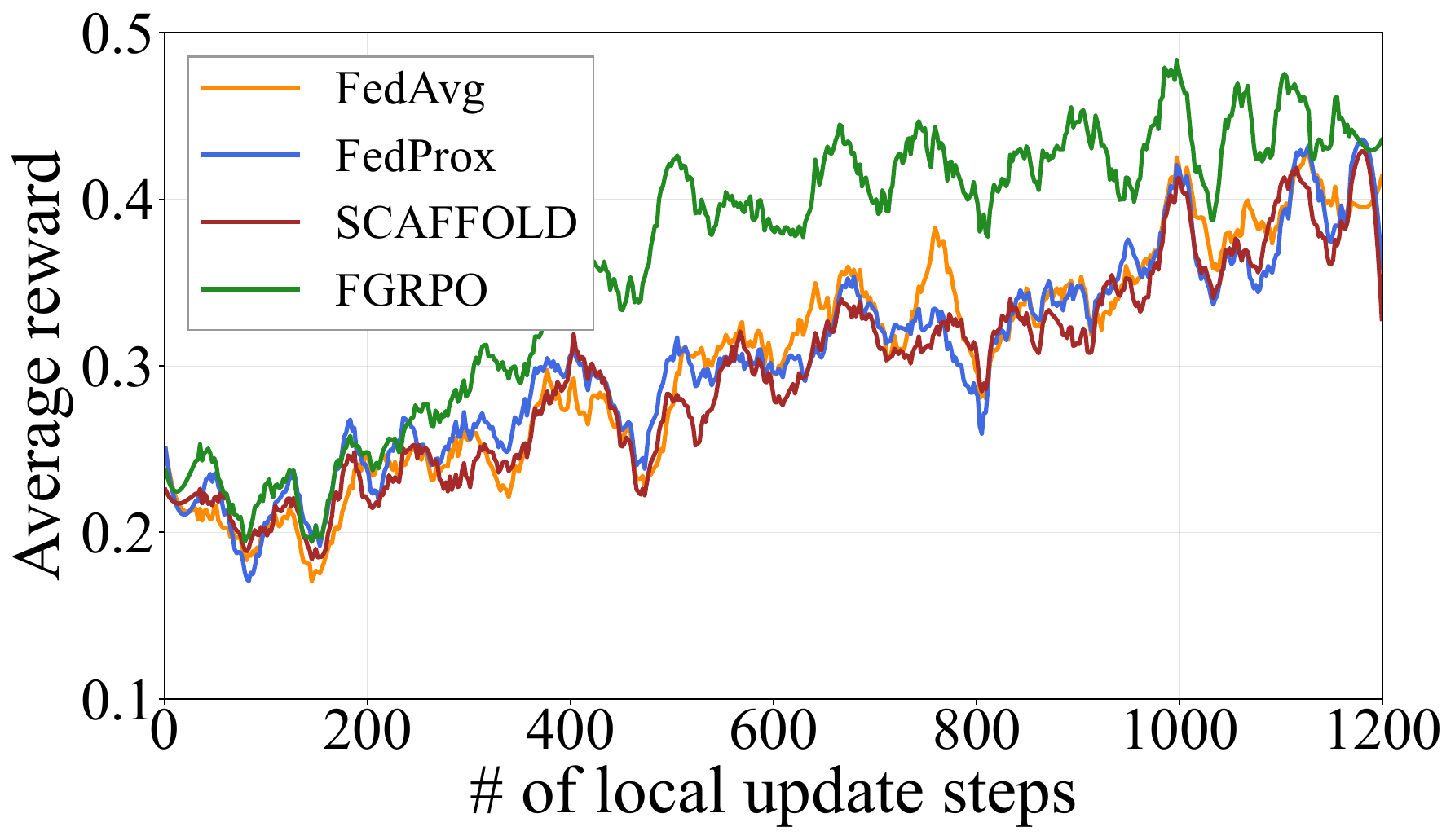}}
      \parbox{.325\textwidth}{\center\scriptsize(a) 3 clients}
      \parbox{.325\textwidth}{\center\scriptsize(b) 5 clients}
      \parbox{.325\textwidth}{\center\scriptsize(c) 10 clients}
      \caption{Average reward trajectories under varying numbers of clients on GEOQA datasets.}
    \label{fig:reward_client_GEOQA}
    \end{figure*}

  \subsection{Impact of Data Heterogeneity Levels} 
  \label{ssec:datahet}
    We further evaluate the performance of different algorithms under varying levels of data heterogeneity using the Qwen2.5-3B model. Specifically, we simulate non-IID data distributions across five clients by adjusting the Dirichlet parameter $\mu$, where smaller $\mu$ leads to more skewed client partitions, and $\mu \rightarrow \infty$ corresponds to a uniform distribution. Fig.~\ref{fig:exp_data_het} presents the performance of different methods under three representative settings: \textit{high} heterogeneity ($\mu=0.05$), \textit{moderate} heterogeneity ($\mu=1.0$), and \textit{low} heterogeneity ($\mu \rightarrow \infty$).
    The results show that FGRPO consistently achieves the best overall performance on both OpenR1 and GEOQA under all heterogeneity levels. Under the most challenging highly non-IID setting ($\mu=0.05$), FGRPO reaches 41.86\% accuracy on OpenR1, outperforming FedAvg, FedProx, and SCAFFOLD by 3.22\%, 5.74\%, and 3.02\%, respectively. A similar advantage can be observed on GEOQA, where FGRPO achieves 40.76\% accuracy and surpasses the strongest baseline by 1.48\%. These results demonstrate that FGRPO remains robust even when local clients are exposed to substantially different reasoning difficulties or visual concept distributions.
    As the data distribution becomes less heterogeneous, FGRPO continues to maintain a clear advantage over the baselines. In the medium-heterogeneity setting ($\mu=1.0$), FGRPO improves the best competing baseline by 3.22\% on OpenR1 and 2.18\% on GEOQA. In the uniform setting ($\mu=\infty$), FGRPO still achieves the highest accuracy, with gains of 2.65\% and 4.14\% over the best baseline on OpenR1 and GEOQA, respectively. This indicates that the benefit of FGRPO is not limited to extreme non-IID scenarios; instead, its adaptive aggregation mechanism also improves generalization when client distributions are relatively balanced.
    Overall, these results confirm that conventional federated optimization methods such as FedAvg, FedProx, and SCAFFOLD are insufficient to address the heterogeneous reward dynamics in the federated environment. By weighting client updates according to RPG, FGRPO can better identify effective local learning trajectories and suppress less reliable updates, leading to more stable and robust performance across different degrees of data heterogeneity.  
    \begin{figure*}[t!]
    \centering
      \parbox{.4\textwidth}{\center\includegraphics[width=.38\textwidth]{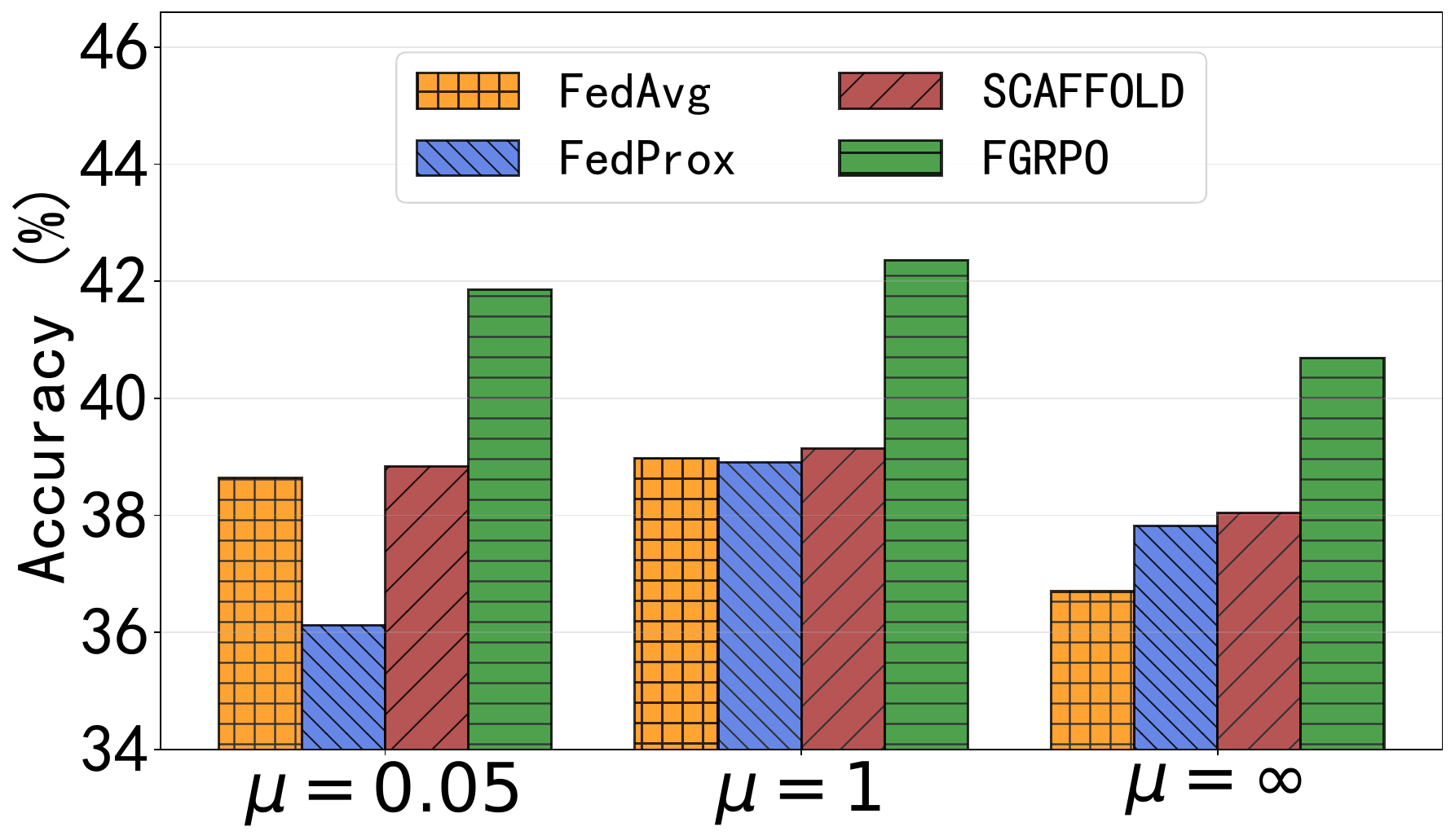}}
      \parbox{.4\textwidth}{\center\includegraphics[width=.38\textwidth]{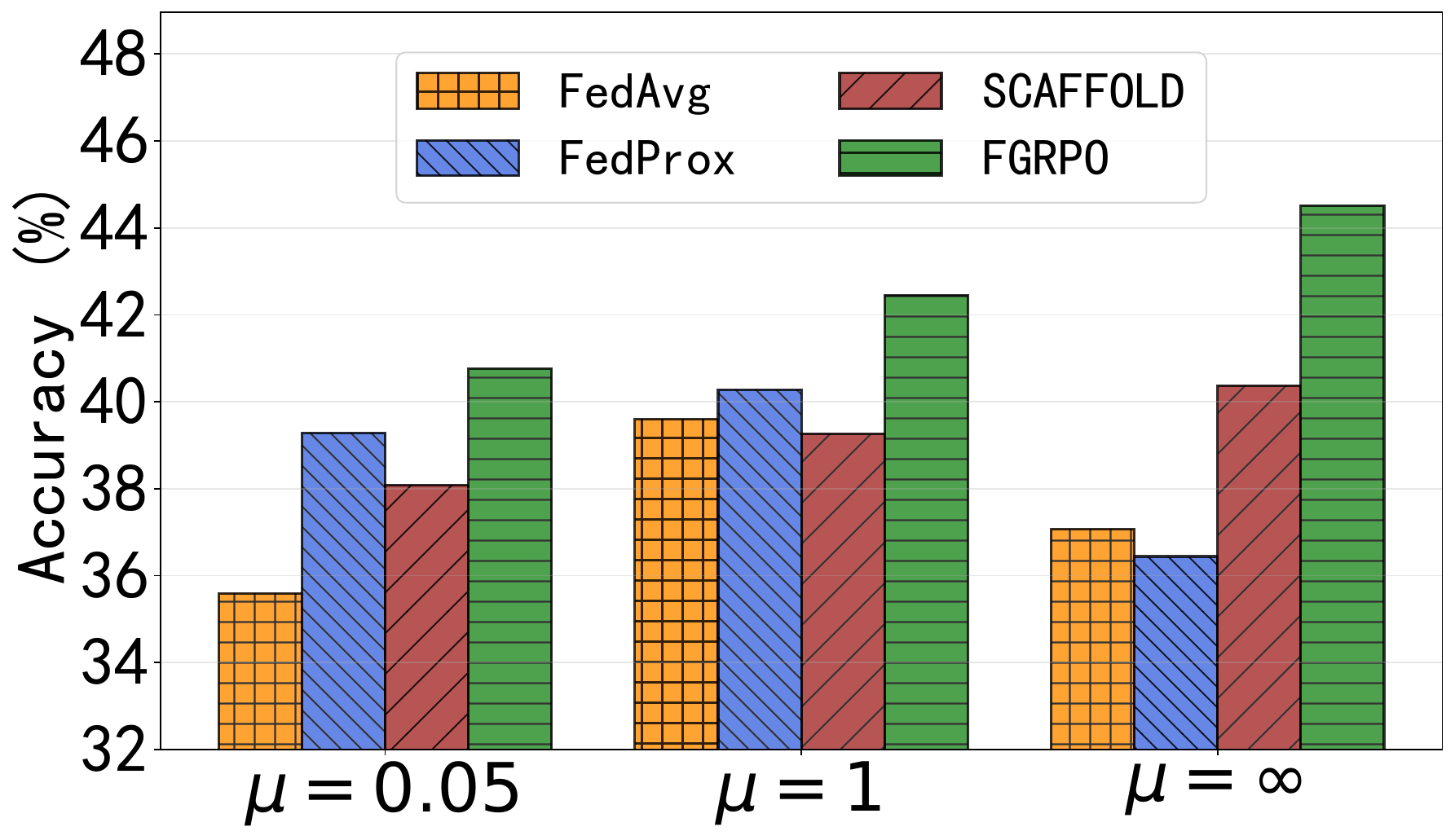}}
      \parbox{.4\textwidth}{\center\scriptsize(a) OpenR1}
      \parbox{.4\textwidth}{\center\scriptsize(b) GEOQA}
    \caption{Test accuracy under varying data heterogeneity on OpenR1 and GEOQA datasets.}
    \label{fig:exp_data_het}
    \end{figure*}

    Fig.~\ref{fig:reward_convergence_noniid_openr1} and Fig.~\ref{fig:reward_convergence_noniid_geoqa} show the average reward trajectories under different data heterogeneity settings on OpenR1 and GEOQA datasets, respectively. FGRPO consistently demonstrates superior performance, maintaining higher average reward levels. This advantage is most pronounced in the highly non-IID scenario (Fig.~\ref{fig:reward_convergence_noniid_openr1}~(a), \ref{fig:reward_convergence_noniid_geoqa}~(a)), where the highly skewed data distributions cause severe reward fluctuations for the baselines. In this challenging regime, FGRPO's trajectory remains higher than the baselines, confirming that the RPG-based aggregation effectively mitigates the optimization difficulties arising from gradient conflicts and reward-scale discrepancies. 
    %
    %
    \begin{figure*}[t!]
    \centering
      \parbox{.325\textwidth}{\center\includegraphics[width=.325\textwidth]{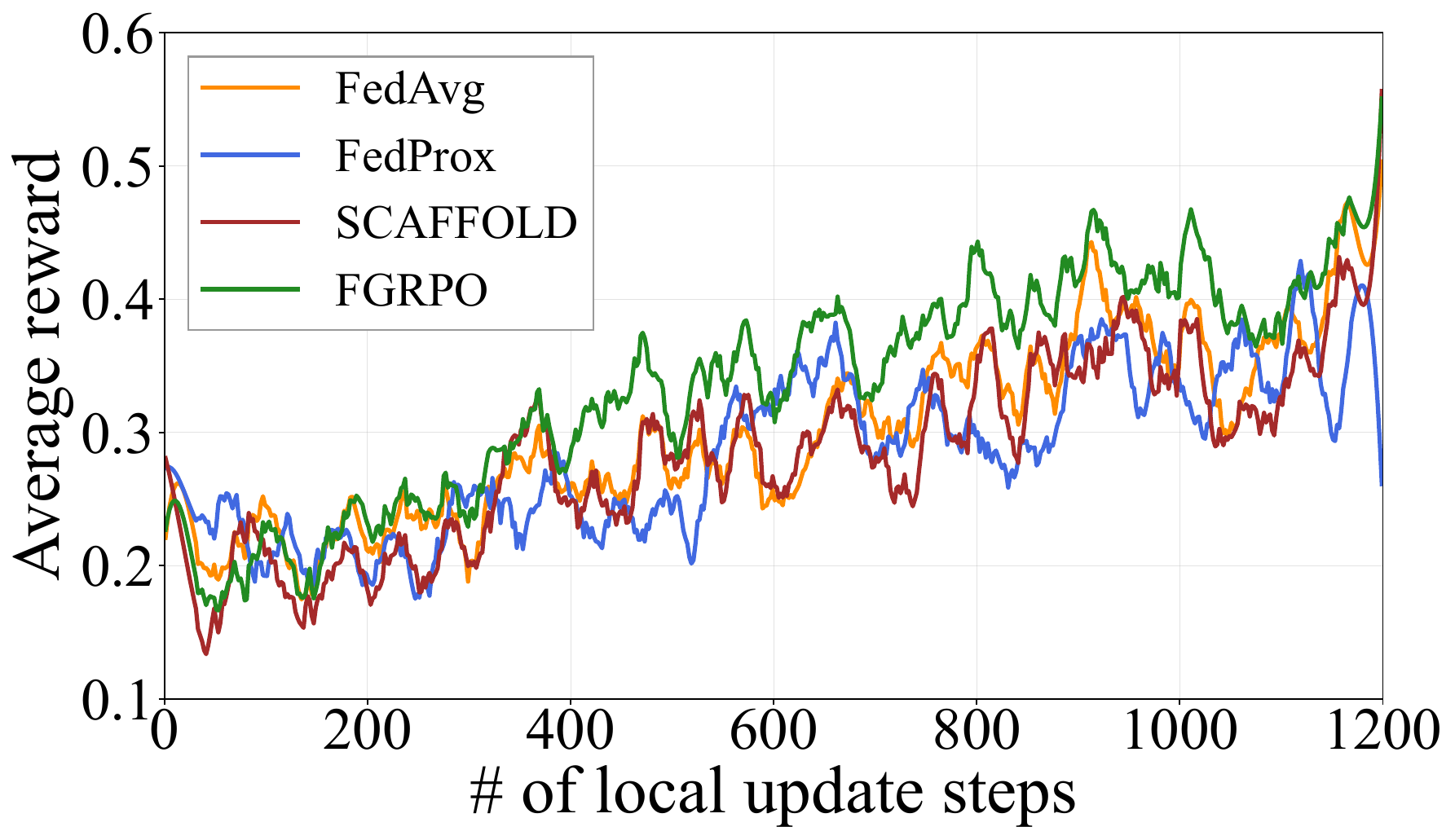}}
      \parbox{.325\textwidth}{\center\includegraphics[width=.325\textwidth]{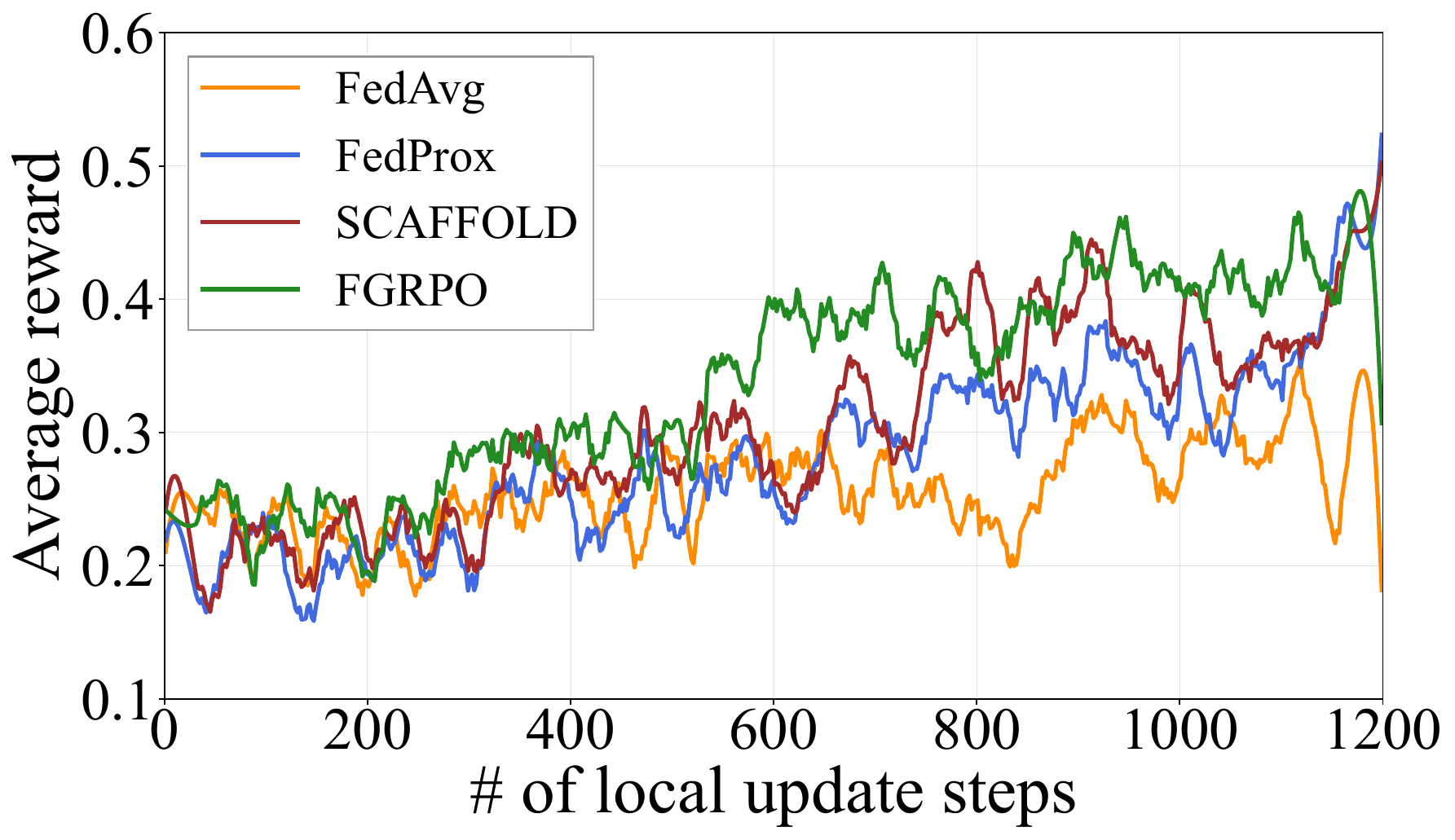}}
      \parbox{.325\textwidth}{\center\includegraphics[width=.325\textwidth]{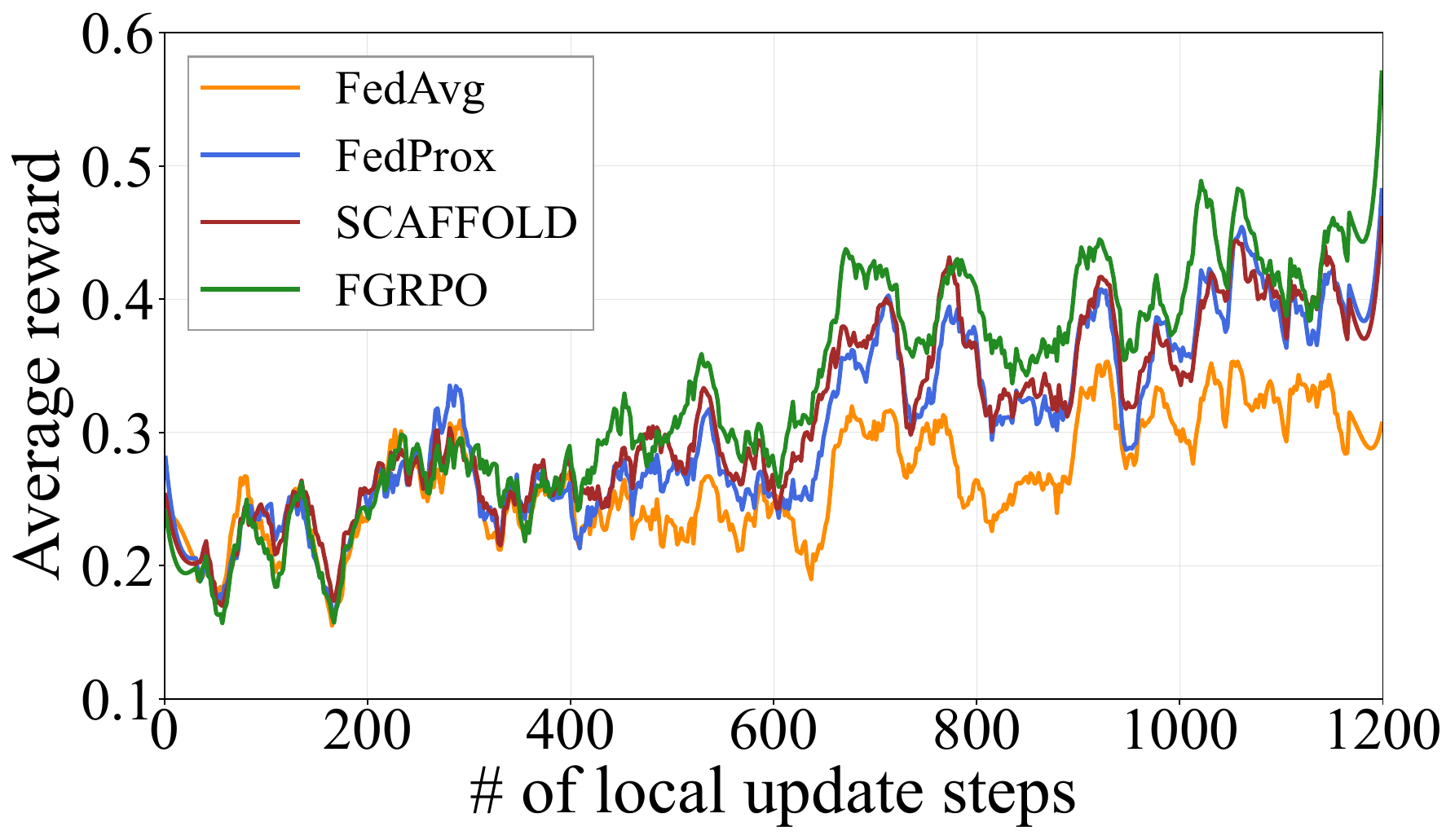}}
      \parbox{.325\textwidth}{\center\scriptsize(a) $\mu = 0.05$}
      \parbox{.325\textwidth}{\center\scriptsize(b) $\mu = 1$}
      \parbox{.325\textwidth}{\center\scriptsize(c) $\mu= \infty$}
      \caption{Average reward trajectories of different algorithms under varying non-IID data settings on OpenR1 dataset.}
    \label{fig:reward_convergence_noniid_openr1}
    \end{figure*}
    \begin{figure*}[t!]
    \centering
      \parbox{.325\textwidth}{\center\includegraphics[width=.325\textwidth]{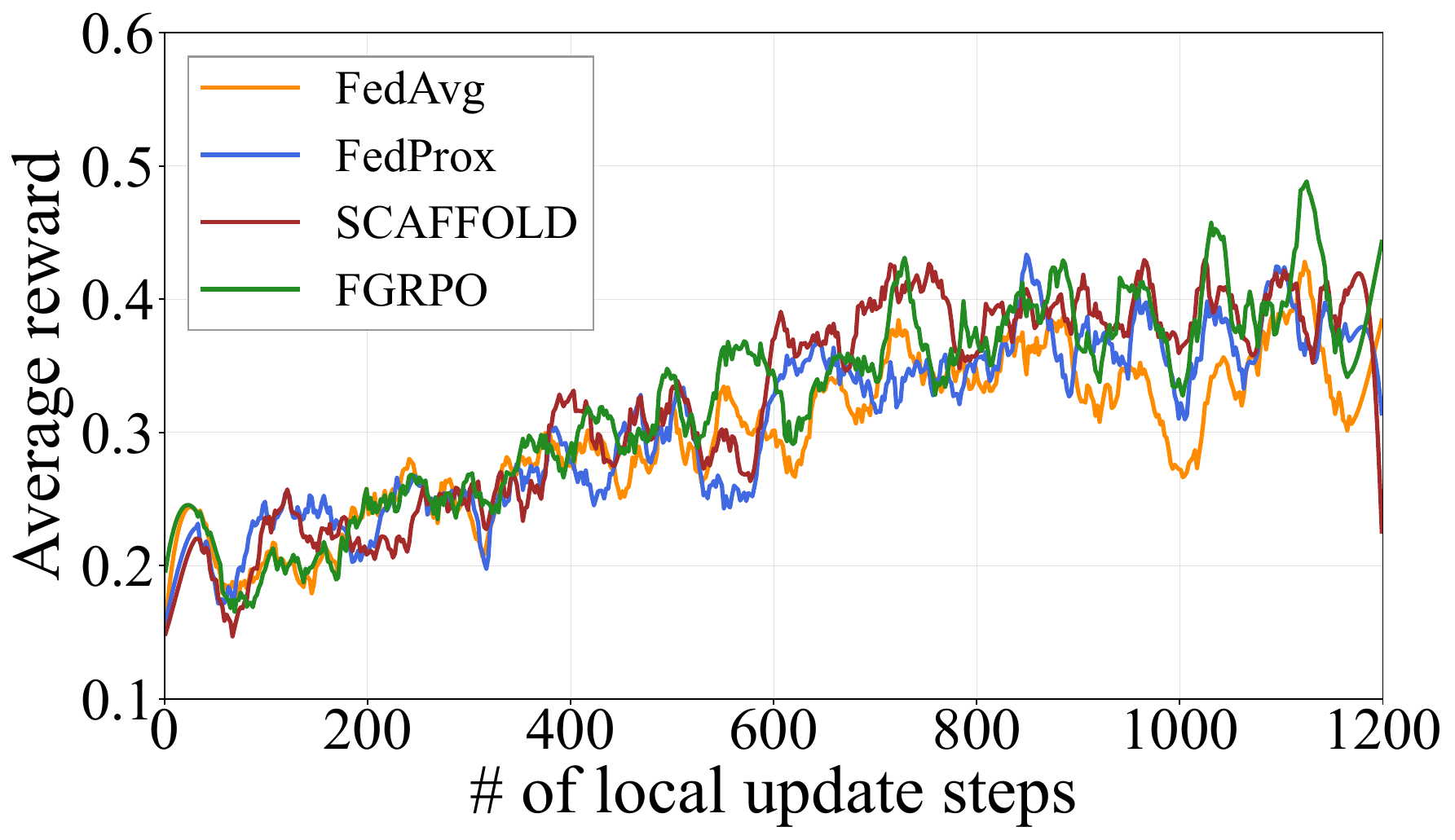}}
      \parbox{.325\textwidth}{\center\includegraphics[width=.325\textwidth]{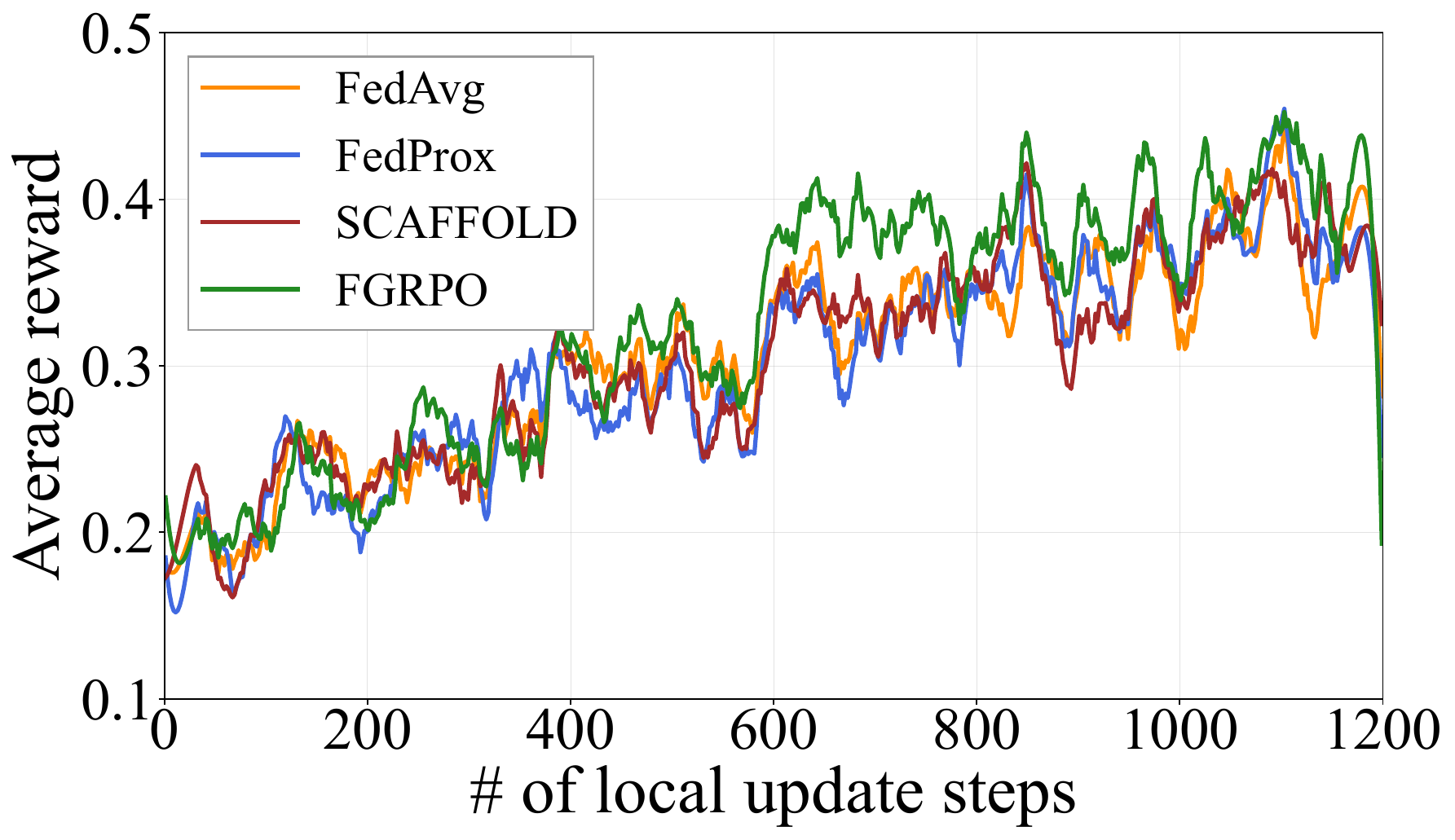}}
      \parbox{.325\textwidth}{\center\includegraphics[width=.325\textwidth]{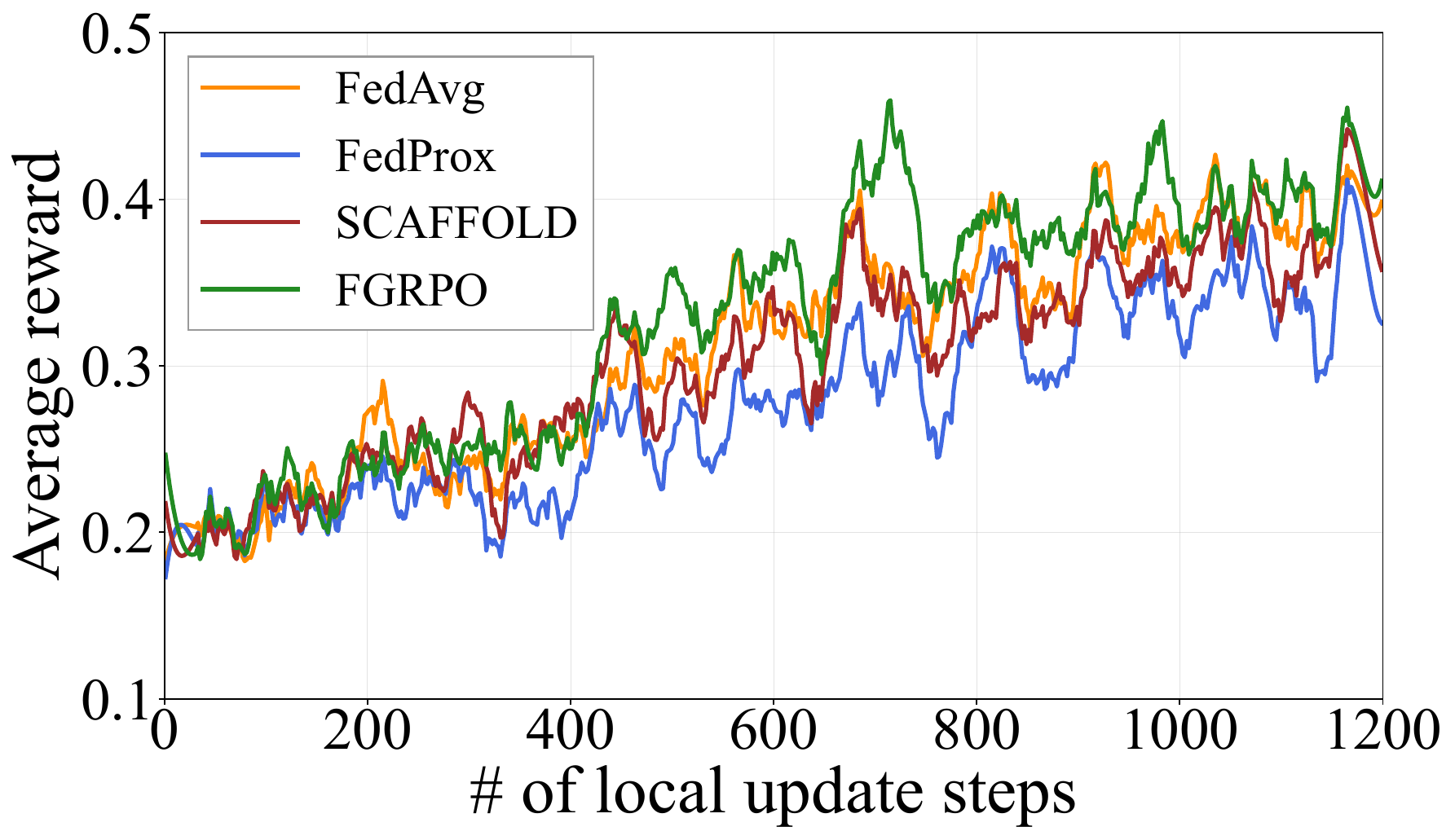}}
      \parbox{.325\textwidth}{\center\scriptsize(a) $\mu=0.05$}
      \parbox{.325\textwidth}{\center\scriptsize(b) $\mu=1$}
      \parbox{.325\textwidth}{\center\scriptsize(c) $\mu=\infty$}
      \caption{Average reward trajectories of different algorithms under varying non-IID data settings on GEOQA dataset.}
    \label{fig:reward_convergence_noniid_geoqa}
    \end{figure*}

  \subsection{Extensions to GRPO-based Variants with RPG Ablations} \label{ssec:extension}
    FGRPO is a general federated framework for GRPO-family methods under the paradigm of \emph{reinforcement learning with verifiable rewards} (RLVR), rather than being tied to a specific algorithmic variant. On the client side, it requires only a local GRPO-style update together with a round-level reward statistic, while the server performs RPG-based adaptive aggregation. This design enables recent GRPO extensions, such as DAPO~\cite{YuZZYZYDFLL-arXiv25} and GSPO~\cite{ZhengLLCYGDLMY-arXiv25}, to be seamlessly incorporated, yielding their federated counterparts, \textbf{FDAPO} and \textbf{FGSPO}. To evaluate the effect of the RPG-based adaptive weighting mechanism, we compare these federated variants (FGRPO, FDAPO, and FGSPO) with their counterparts that use standard data-volume-based aggregation instead of RPG. Specifically, we fine-tune a Qwen2.5-3B model on the Open-R1 and GEOQA datasets using each method, and report the resulting test accuracy in Table~\ref{tab:rlvr_algorithm}.
    
    The results show that federated RLVR algorithms equipped with the RPG-based adaptive weighting mechanism consistently outperform their ablated counterparts in terms of total test accuracy. For FGRPO on Open-R1, RPG-based weighting improves test accuracy from 40.66\% to 41.86\%, a gain of 1.20\%. Similarly, FDAPO achieves a 1.34\% improvement, increasing accuracy from 40.84\% to 42.18\%. FGSPO also benefits from RPG, improving from 40.00\% to 40.76\%. The improvement is more significant on GEOQA. FGRPO improves over its non-RPG variant by 2.10\%, while FDAPO achieves a larger improvement of 5.05\%, increasing the accuracy from 39.45\% to 44.50\%. FGSPO also obtains a substantial gain of 2.75\%. These results indicate that the RPG-based weighting mechanism provides consistent performance gains across different RLVR algorithms, demonstrating its general effectiveness.


    %
    \begin{table*}[t]
    \centering
    \caption{Test accuracy (\%) of applying the proposed federated framework to different RLVR algorithms on the Open-R1 and GEOQA benchmarks. Results are reported as mean$\pm$std.}
    \label{tab:rlvr_algorithm}
    \small
    \resizebox{\textwidth}{!}{
    \begin{tabular}{llccccccccc}
    \toprule
    \multicolumn{2}{c}{\textbf{Setup}} 
    & \multicolumn{4}{c}{\textbf{Open-R1}} 
    & \multicolumn{5}{c}{\textbf{GEOQA}} \\
    \cmidrule(lr){1-2} \cmidrule(lr){3-6} \cmidrule(lr){7-11}
    \textbf{Algorithm} & \textbf{Variant} 
    & \textbf{Simple} & \textbf{Medium} & \textbf{Hard} & \textbf{Total}
    & \textbf{Points} & \textbf{Lines} & \textbf{Circles} & \textbf{Polygons} & \textbf{Total} \\
    \midrule
    \multirow{2}{*}{FGRPO}
    & w/ RPG   
    & 48.11$\pm$0.96 & 44.14$\pm$3.87 & 33.35$\pm$2.19 & 41.86$\pm$1.21
    & 47.62$\pm$4.12 & 45.34$\pm$2.64 & 46.65$\pm$1.38 & 35.88$\pm$1.40 & 40.76$\pm$0.74 \\
    & w/o RPG  
    & 49.01$\pm$2.53 & 42.04$\pm$3.44 & 30.96$\pm$2.12 & 40.66$\pm$1.21
    & 43.33$\pm$5.16 & 39.25$\pm$1.98 & 47.22$\pm$2.49 & 33.13$\pm$2.06 & 38.66$\pm$1.88 \\
    \midrule
    \multirow{2}{*}{FGSPO}
    & w/ RPG   
    & 47.21$\pm$0.83 & 42.76$\pm$2.03 & 32.34$\pm$2.19 & 40.76$\pm$0.92
    & 35.24$\pm$5.43 & 38.51$\pm$2.88 & 39.38$\pm$0.61 & 34.50$\pm$1.19 & 36.55$\pm$0.84 \\
    & w/o RPG  
    & 47.21$\pm$1.25 & 41.50$\pm$3.39 & 31.32$\pm$3.44 & 40.00$\pm$1.26
    & 34.29$\pm$6.86 & 37.02$\pm$2.87 & 37.09$\pm$2.40 & 31.12$\pm$1.22 & 33.80$\pm$0.54 \\
    \midrule
    \multirow{2}{*}{FDAPO}
    & w/ RPG   
    & 44.68$\pm$1.33 & 43.12$\pm$1.63 & 38.74$\pm$1.58 & 42.18$\pm$0.24
    & 46.67$\pm$8.68 & 52.67$\pm$4.70 & 50.35$\pm$1.09 & 39.13$\pm$0.92 & 44.50$\pm$0.96 \\
    & w/o RPG  
    & 43.84$\pm$1.20 & 42.10$\pm$1.89 & 36.59$\pm$2.25 & 40.84$\pm$0.99
    & 36.19$\pm$6.82 & 42.73$\pm$6.88 & 47.14$\pm$1.73 & 34.32$\pm$1.50 & 39.45$\pm$1.82 \\
    \bottomrule
    \end{tabular}
    }
    \end{table*}

    Fig.\ref{fig:reward_convergence_rlvr_OPENR1} and Fig.\ref{fig:reward_convergence_rlvr_geoqa} illustrate the reward trajectories of federated RLVR algorithms with and without our RPG-based weighting mechanism. Across GRPO, GSPO, and DAPO, the RPG-based variants consistently exhibit more stable and favorable reward progression, aligning with the final accuracy improvements reported in Table~\ref{tab:rlvr_algorithm}. These results imply that leveraging the concept of RPG can more effectively capture meaningful client-side progress and mitigate the impact of heterogeneous reward scales. These results demonstrate the generality of RPG across different RLVR algorithms. Although the underlying objectives lead to different absolute performance levels, incorporating RPG consistently improves accuracy on both Open-R1 and GEOQA, highlighting its role as a general reward-aware aggregation mechanism for federated RLVR.
    \begin{figure*}[t!]
    \centering
      \parbox{.325\textwidth}{\center\includegraphics[width=.325\textwidth]{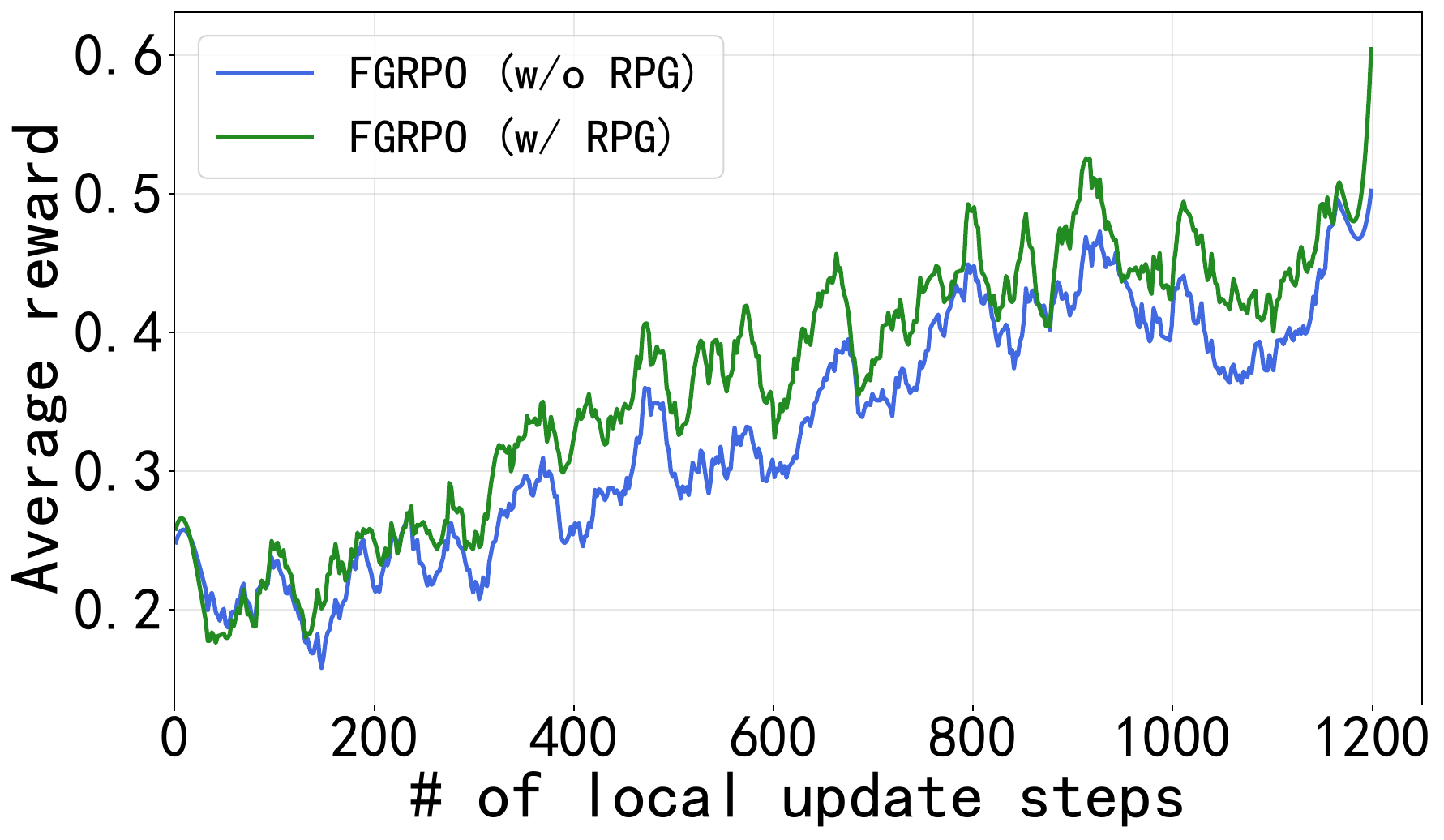}}
      \parbox{.325\textwidth}{\center\includegraphics[width=.325\textwidth]{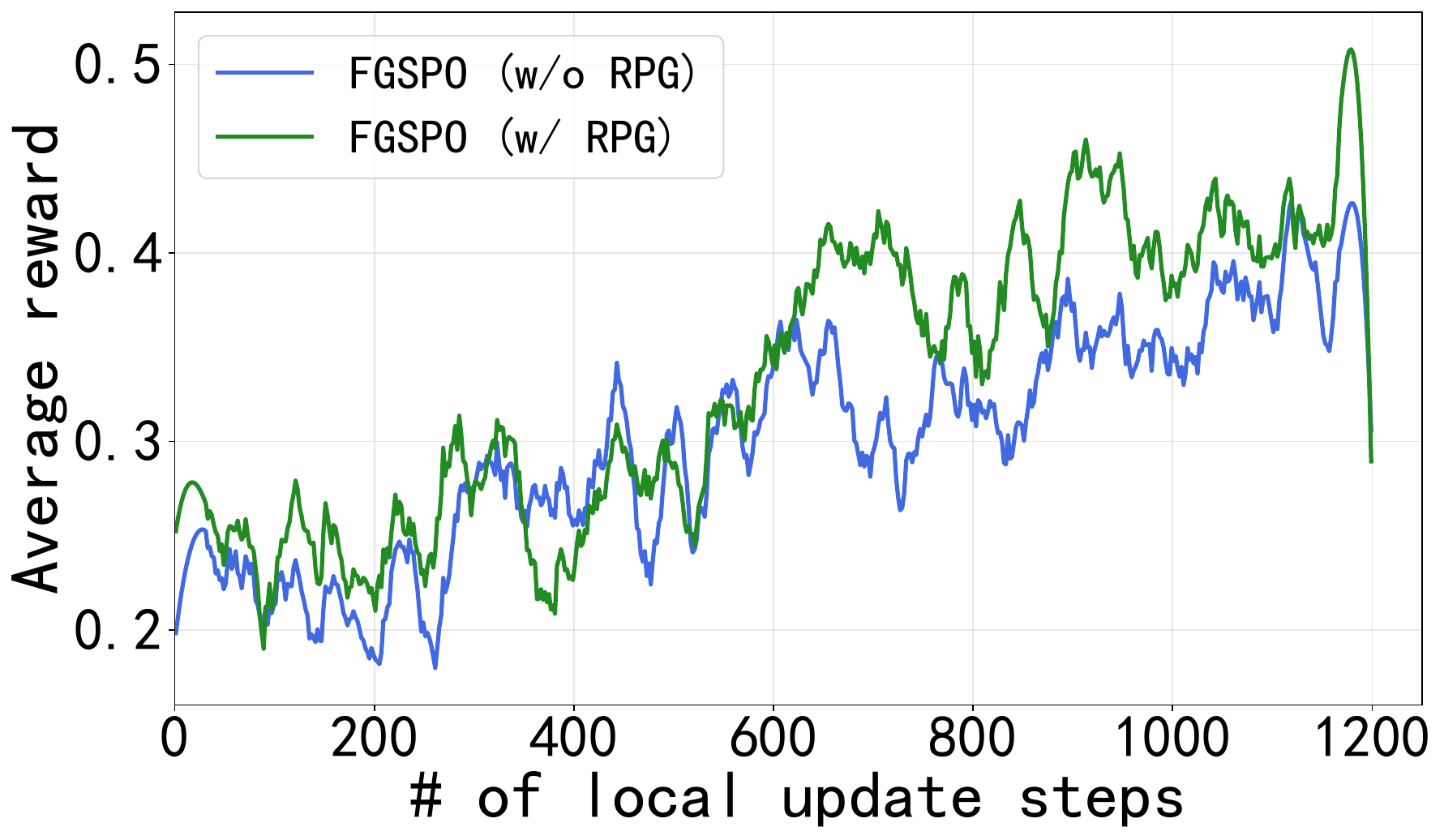}}
      \parbox{.325\textwidth}{\center\includegraphics[width=.325\textwidth]{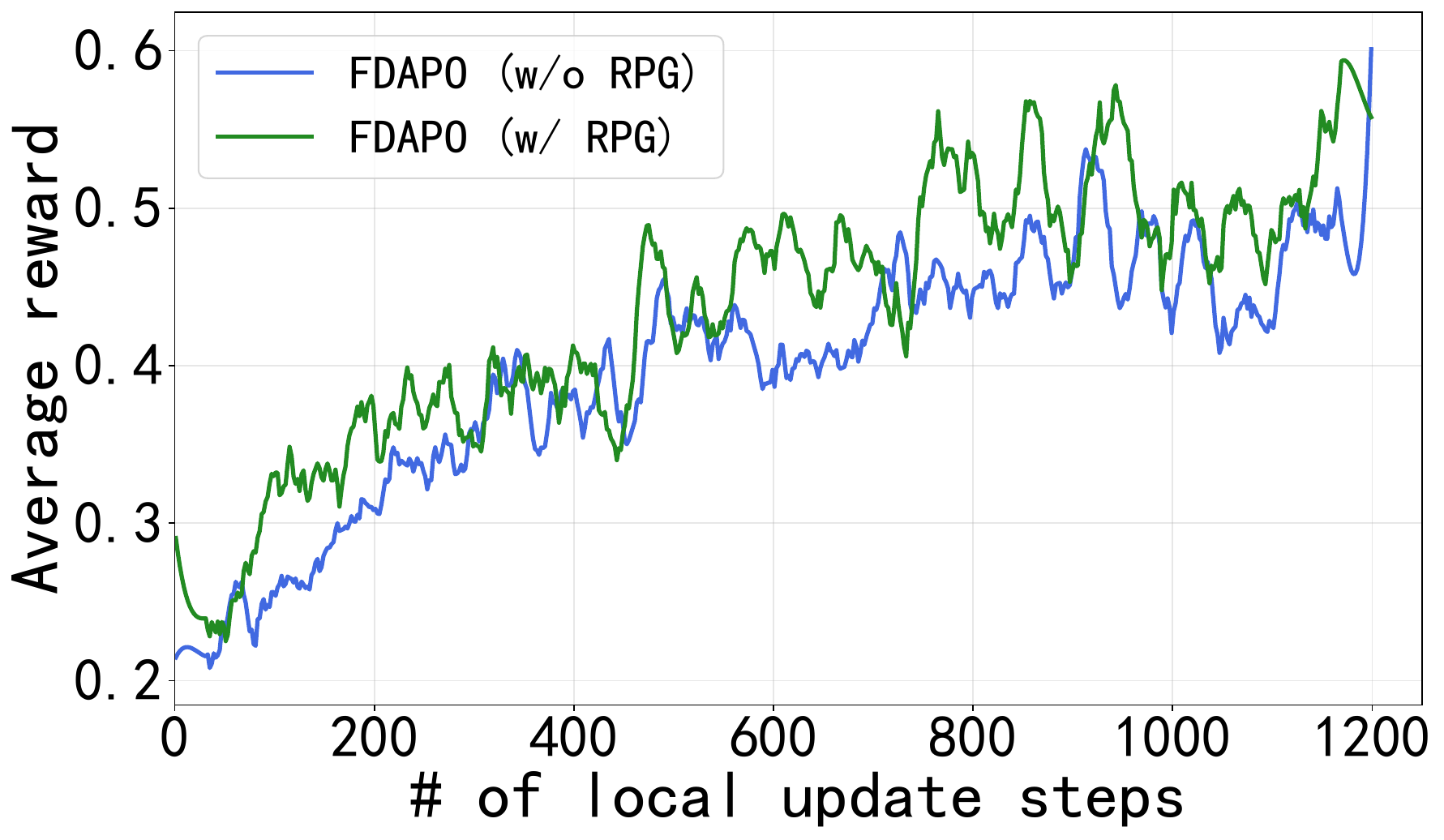}}
      \parbox{.325\textwidth}{\center\scriptsize(a) FGRPO}
      \parbox{.325\textwidth}{\center\scriptsize(b) FGSPO}
      \parbox{.325\textwidth}{\center\scriptsize(c) FDAPO}
      \caption{Average reward trajectories of different federated RLVR algorithms on OpenR1 dataset.}
    \label{fig:reward_convergence_rlvr_OPENR1}
    \end{figure*}
    \begin{figure*}[t!]
    \centering
      \parbox{.325\textwidth}{\center\includegraphics[width=.325\textwidth]{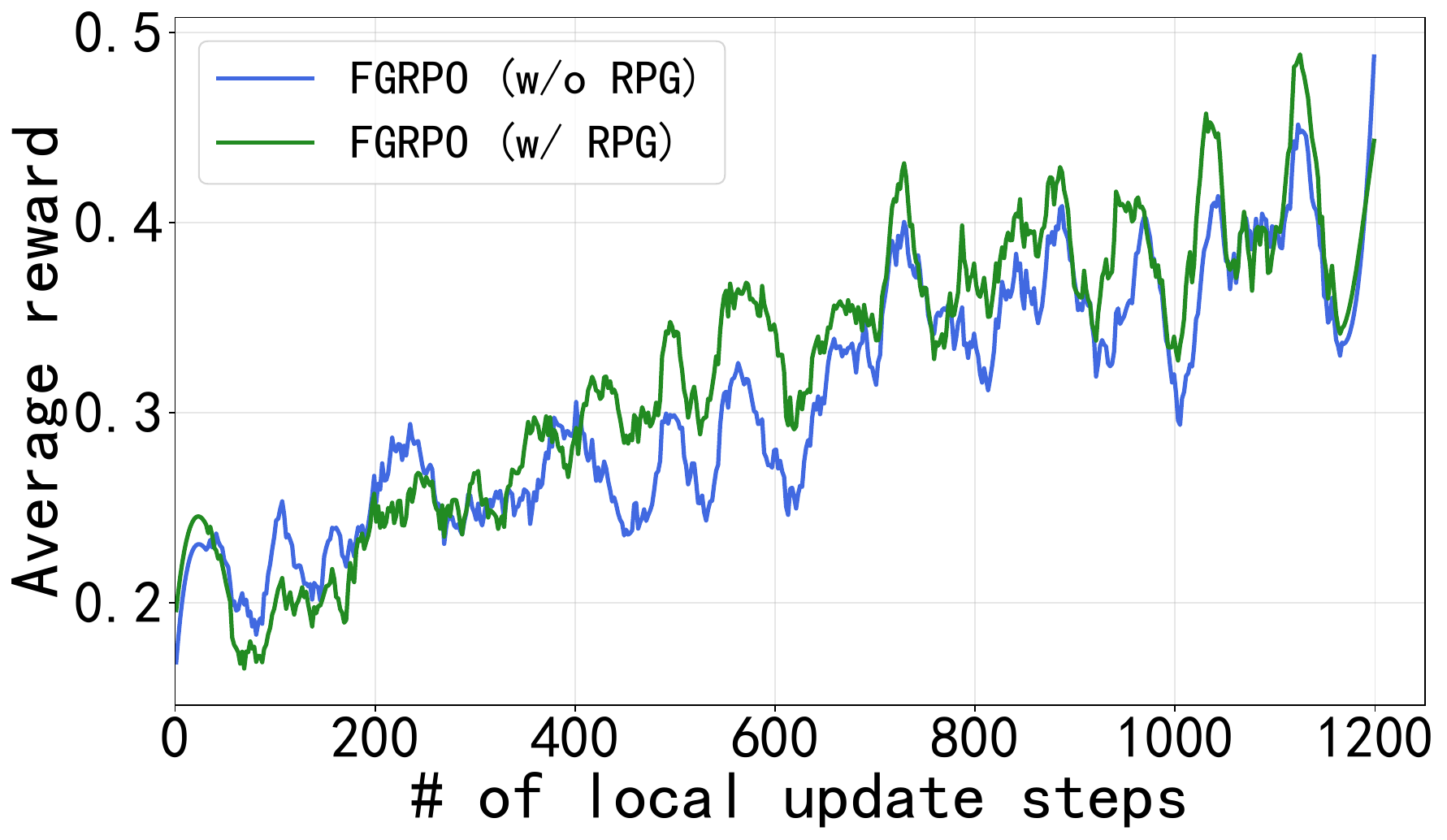}}
      \parbox{.325\textwidth}{\center\includegraphics[width=.325\textwidth]{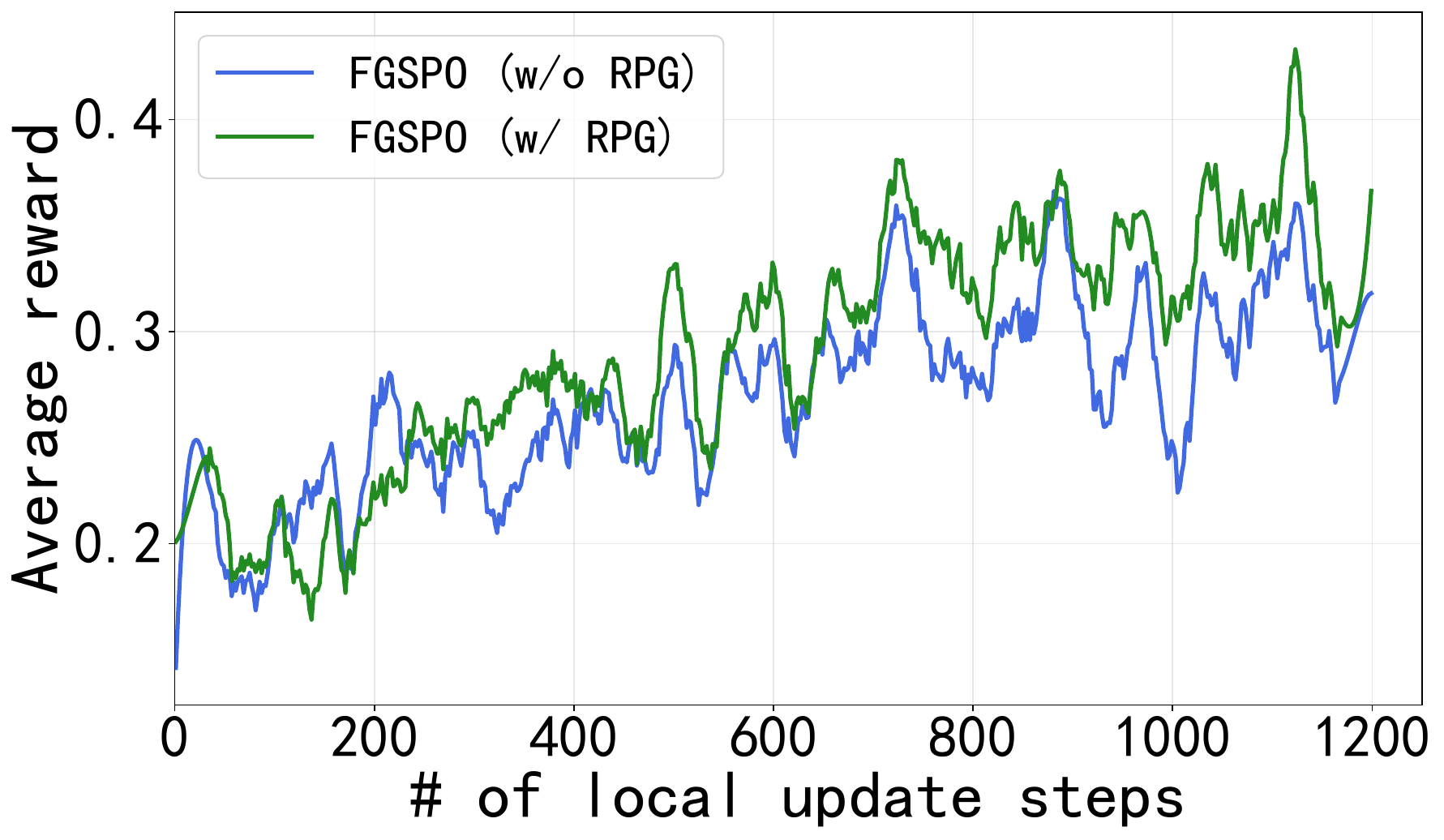}}
      \parbox{.325\textwidth}{\center\includegraphics[width=.325\textwidth]{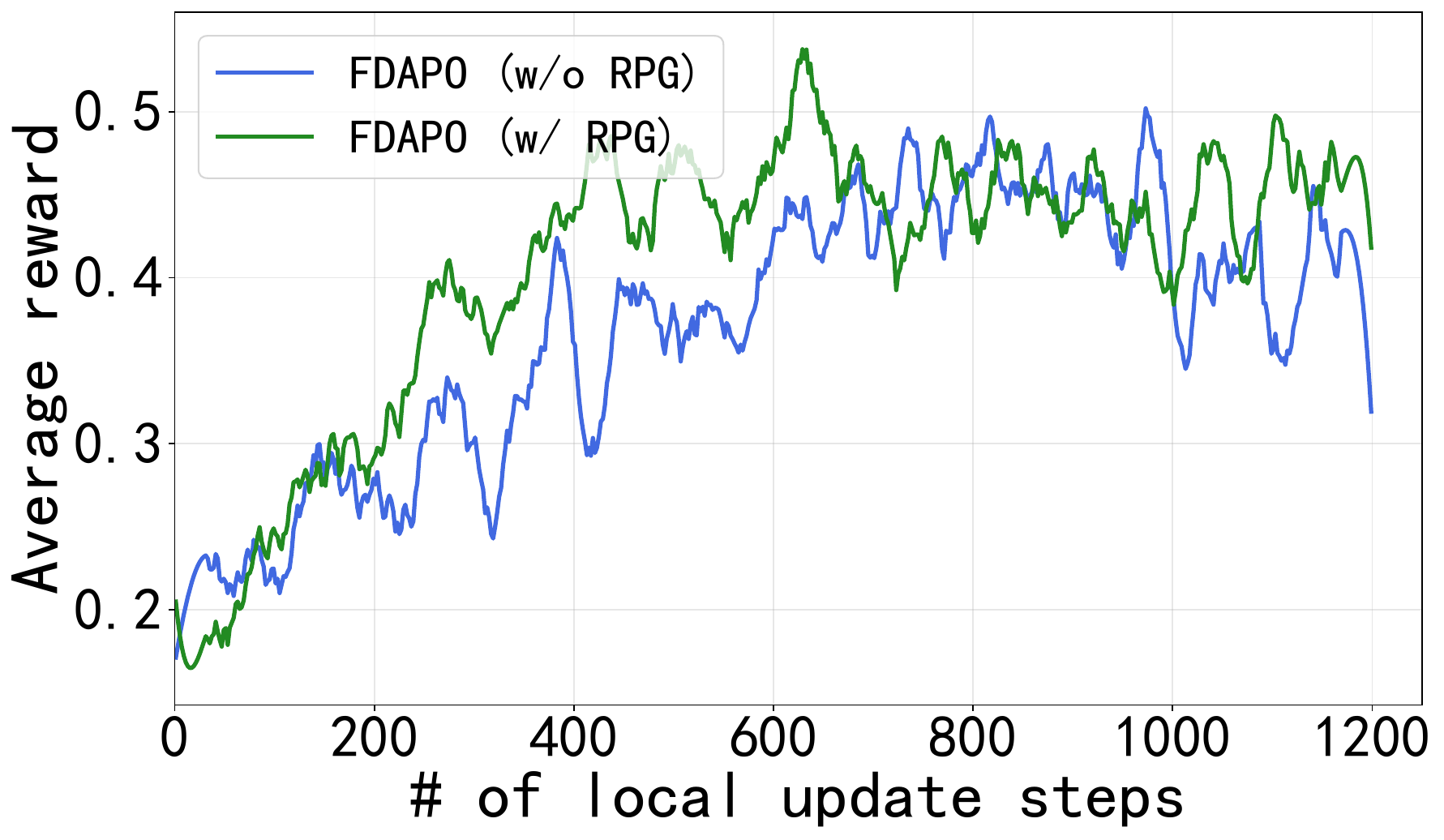}}
      \parbox{.325\textwidth}{\center\scriptsize(a) FGRPO}
      \parbox{.325\textwidth}{\center\scriptsize(b) FGSPO}
      \parbox{.325\textwidth}{\center\scriptsize(c) FDAPO}
      \caption{Average reward trajectories of different federated RLVR algorithms on GEOQA dataset.}
    \label{fig:reward_convergence_rlvr_geoqa}
    \end{figure*}

  \subsection{Hyperparameter Sensitivity Analysis}
  \label{ssec:sensitivity}
    We conduct a hyperparameter sensitivity analysis on the Qwen2.5-3B model using the Open-R1 dataset. Our study focuses on six key hyperparameters in FGRPO: the EMA coefficient $\lambda_{\rm base}$, the temperature bounds $(\tau_{\min}, \tau_{\max})$, the volatility bounds $(\sigma_{\min}, \sigma_{\max})$, and the annealing coefficient $\lambda_{\rm anneal}$. Unless otherwise specified, we adopt the default setting $\lambda_{\rm base}=0.8$, $\tau_{\min}=1.5$, $\tau_{\max}=2.5$, $\sigma_{\min}=0.05$, $\sigma_{\max}=0.2$, and $\lambda_{\rm anneal}=0.1$.

    The coefficient $\lambda_{\rm base}$ controls the exponential moving average used to smooth the estimation of client-side progress. A larger \(\lambda_{\rm base}\) makes the baseline more sensitive to recent reward changes, while a smaller \(\lambda_{\rm base}\) places more emphasis on historical estimates. As shown in Table~\ref{tab:lambda_sensitivity}, the default value $\lambda_{\rm base}=0.8$ achieves the best total accuracy of 41.86\%. When $\lambda_{\rm base}$ is reduced to 0.6 or increased to 0.9, the total accuracy becomes 40.98\% and 40.82\%, respectively. Although $\lambda_{\rm base}=0.6$ slightly improves the Hard split, it reduces the Simple and Total performance. This indicates that a moderate EMA coefficient provides a better balance between responsiveness and stability.

    The temperature bounds $(\tau_{\min}, \tau_{\max})$ control the range of adaptive aggregation sharpness. A lower temperature makes the aggregation more concentrated on clients with larger relative gains, while a higher temperature produces smoother and more uniform weighting.  Table~\ref{tab:tau_sensitivity} shows that FGRPO is relatively robust to different temperature ranges. The default setting $(\tau_{\min}, \tau_{\max})=(1.5,2.5)$ obtains the best total accuracy of 41.86\%.  Changing the upper bound to 2.0 or 3.0 only slightly decreases the total accuracy to 41.48\% and 41.58\%, respectively. Interestingly, increasing $\tau_{\max}$ to 3.0 improves the Hard split to 35.33\%, but reduces Simple and Medium accuracy, suggesting a trade-off between difficult-example optimization and overall balanced performance. Similarly, varying $\tau_{\min}$ also leads to only moderate performance changes, with total accuracy remaining above 41.12\%.

    The volatility bounds $(\sigma_{\min}, \sigma_{\max})$ define the clipping range for reward-progress volatility estimation. These bounds prevent the aggregation weights from becoming overly sensitive to unstable reward fluctuations. As shown in Table~\ref{tab:sigma_sensitivity}, the default setting $(\sigma_{\min}, \sigma_{\max})=(0.05,0.2)$ achieves the best total accuracy of 41.86\%. Reducing $\sigma_{\min}$ to 0.03 or increasing it to 0.10 decreases the total accuracy to 40.86\% and 40.54\%, respectively. Similarly, changing $\sigma_{\max}$ to 0.15 or 0.25 results in total accuracies of 41.04\% and 40.90\%. These results suggest that properly bounding the volatility estimate is important for stabilizing RPG-based aggregation, especially under non-IID data.

    Finally, $\lambda_{\rm anneal}$ controls the annealing strength of the adaptive aggregation process. It determines how quickly the aggregation behavior changes during training. As shown in Table~\ref{tab:lambda_anneal_sensitivity}, the default value $\lambda_{\rm anneal}=0.1$ achieves the best total accuracy of 41.86\%. Reducing it to 0.05 leads to a total accuracy of 41.24\%, while increasing it to 0.15 decreases the total accuracy to 40.76\%. Although $\lambda_{\rm anneal}=0.15$ slightly improves the Medium split, it significantly hurts the Hard split, indicating that overly aggressive annealing may destabilize optimization for difficult reasoning examples.

    Overall, the sensitivity results show that FGRPO is reasonably robust within a practical range of hyperparameter choices. The total accuracy remains around 40.5\%--41.9\% across different settings, and the default configuration consistently achieves the best overall performance. These results also suggest that the proposed RPG-based aggregation does not rely on a narrowly tuned hyperparameter configuration, while moderate smoothing, bounded volatility estimation, and stable annealing are beneficial for federated reasoning optimization.
 
    \begin{table*}[t]
    \centering
    \caption{Sensitivity analysis of $\lambda_{\rm base}$ with other hyperparameters fixed to $\tau_{\min}=1.5$, $\tau_{\max} =2.5$, $\sigma_{\min}=0.05$, $\sigma_{\max}=0.2$, and $\lambda_{\rm anneal}=0.1$.}
    \label{tab:lambda_sensitivity}
    \begin{tabular}{lcccc}
    \toprule
    Hyperparameter & Simple & Medium & Hard & Total \\
    \midrule
    $\lambda_{\rm base} = 0.8$ & 48.11 & 44.14 & 33.35 & 41.86 \\
    $\lambda_{\rm base} = 0.6$ &  45.05 & 43.84 & 34.07& 40.98 \\
    $\lambda_{\rm base} = 0.9$ & 46.79  & 42.88 & 32.81 & 40.82 \\
    \bottomrule
    \end{tabular}
    \end{table*}
    \begin{table*}[t]
    \centering
    \caption{Sensitivity analysis of $\tau_{\min}$ and $\tau_{\max}$ with other hyperparameters fixed to $\lambda_{\rm base}=0.8$, $\sigma_{\min}=0.05$, $\sigma_{\max}=0.2$, and $\lambda_{\rm anneal}=0.1$.}
    \label{tab:tau_sensitivity}
    \begin{tabular}{lcccc}
    \toprule
    Hyperparameter & Simple & Medium & Hard & Total \\
    \midrule
    $\tau_{\min}=1.5,\ \tau_{\max}=2.5$ & 48.11 & 44.14 & 33.35 & 41.86  \\
    $\tau_{\min}=1.5,\ \tau_{\max}=2.0$ & 49.85 & 43.55 & 31.06 & 41.48 \\
    $\tau_{\min}=1.5,\ \tau_{\max}=3.0$ &  46.85 & 42.58 & 35.33 & 41.58 \\
    $\tau_{\min}=1.0,\ \tau_{\max}=2.5$ & 48.23 & 41.08 & 34.43  & 41.24 \\
    $\tau_{\min}=2.0,\ \tau_{\max}=2.5$ & 48.41 & 43.12 & 31.86 & 41.12  \\
    \bottomrule
    \end{tabular}
    \end{table*}
    \begin{table*}[t!]
    \centering
    \caption{Sensitivity analysis of $\sigma_{\min}$ and $\sigma_{\max}$ with other hyperparameters fixed to $\lambda_{\rm base}=0.8$, $\tau_{\min}=1.5$, $\tau_{\max} =2.5$, and $\lambda_{\rm anneal}=0.1$.}
    \label{tab:sigma_sensitivity}
    \begin{tabular}{lcccc}
    \toprule
    Hyperparameter & Simple & Medium & Hard & Total \\
    \midrule
    $\sigma_{\min}=0.05,\ \sigma_{\max}=0.2$ & 48.11 & 44.14 & 33.35 & 41.86  \\
    $\sigma_{\min}=0.03,\ \sigma_{\max}=0.2$ & 49.55 & 42.70 & 30.36 & 40.86 \\
    $\sigma_{\min}=0.10,\ \sigma_{\max}=0.2$ & 48.35 &  41.86 &  31.44 & 40.54 \\
    $\sigma_{\min}=0.05,\ \sigma_{\max}=0.15$ &  46.97 & 43.84 &  32.34 &  41.04  \\
    $\sigma_{\min}=0.05,\ \sigma_{\max}=0.25$ & 48.11 &  42.94 &  31.68 &  40.90  \\
    \bottomrule
    \end{tabular}
    \end{table*}
    \begin{table*}[t]
    \centering
    \caption{Sensitivity analysis of $\lambda_{\rm anneal}$ with other hyperparameters fixed to  $\lambda_{\rm base}=0.8$, $\tau_{\min}=1.5$, $\tau_{\max} =2.5$, $\sigma_{\min}=0.05$, and $\sigma_{\max}=0.2$.}
    \label{tab:lambda_anneal_sensitivity}
    \begin{tabular}{lcccc}
    \toprule
    Hyperparameter & Simple & Medium & Hard & Total \\
    \midrule
    $\lambda_{\rm anneal} = 0.1$ & 48.11 & 44.14 & 33.35 & 41.86  \\
    $\lambda_{\rm anneal} = 0.05$ & 49.73 & 40.96  & 33.05 & 41.24 \\
    $\lambda_{\rm anneal} = 0.15$ & 48.77 & 44.68 &  28.86 & 40.76 \\
    \bottomrule
    \end{tabular}
    \end{table*}

  \subsection{Resource Consumption} \label{ssec:resconsumption}
    As discussed in Appendix~\ref{sec:app_lora}, the communication overhead per round is mainly determined by the size of the LoRA low-rank matrices. Since all methods use the same LoRA configuration, they incur comparable communication costs. Therefore, we focus on evaluating computational cost by analyzing GPU utilization and GPU memory utilization. Fig.~\ref{fig:cost_analysis} reports these metrics, averaged across the five clients per communication round. 
    %
    %
    With Qwen2.5-3B model (Fig.~\ref{fig:cost_analysis}~(a)), all algorithms exhibit comparable resource consumption. FedAvg achieves an averaged GPU utilization of 46.91\% and memory utilization of 14.45\%, while FedProx and SCAFFOLD consume 49.74\%/17.65\% and 47.52\%/14.83\%, respectively. FGRPO records 47.94\% GPU utilization and 15.10\% memory utilization, remaining very close to FedAvg and SCAFFOLD and lower than FedProx in both metrics. This indicates that the proposed RPG-based aggregation mechanism introduces negligible additional overhead for the 3B model.

    A similar trend is observed under the Qwen3-4B model (Fig.~\ref{fig:cost_analysis}~(b)), where overall resource utilization increases due to the larger model size. FedAvg and SCAFFOLD show relatively higher averaged GPU utilization of 61.30\% and 62.43\%, with memory utilization of 27.22\% and 27.57\%, respectively. FedProx consumes 57.62\% GPU and 24.82\% memory. In comparison, FGRPO achieves 58.37\% GPU utilization and 25.53\% memory utilization, remaining within the same cost envelope as the baselines and even reducing resource usage compared with FedAvg and SCAFFOLD.

    Under the Qwen2.5-7B model (Fig.~\ref{fig:cost_analysis}~(c)), the resource demand further increases for all methods. FedAvg, FedProx, and SCAFFOLD achieve averaged GPU utilization of 64.38\%, 65.14\%, and 62.94\%, respectively, with corresponding memory utilization of 35.99\%, 36.69\%, and 35.32\%. The full FGRPO consumes 66.21\% GPU and 37.33\% memory. Although FGRPO is slightly higher than the baselines in this setting, the difference remains marginal, suggesting that the adaptive aggregation strategy does not introduce significant additional per-round computational or memory cost.

    Finally, under the Llama-3.2-11B model (Fig.~\ref{fig:cost_analysis}~(d)), all methods reach higher GPU and memory utilization due to the substantially larger model scale. FedAvg, FedProx, and SCAFFOLD consume 79.81\%/51.60\%, 79.34\%/51.76\%, and 78.05\%/52.25\% GPU/memory utilization, respectively. In contrast, FGRPO achieves an averaged GPU utilization of 80.78\%, which is slightly higher than the baselines, while maintaining the lowest memory utilization at 50.72\%. This demonstrates that FGRPO can scale to larger models without introducing additional memory pressure.

    Overall, these results demonstrate that the performance gains of FGRPO are achieved without incurring significant additional computational or memory overhead. Across all model scales, FGRPO remains within the same resource-consumption range as conventional federated baselines, while providing stronger robustness and accuracy under heterogeneous data distributions.
    \begin{figure}[t!]
    \centering
      \parbox{.4\textwidth}{\center\includegraphics[width=.33\textwidth]{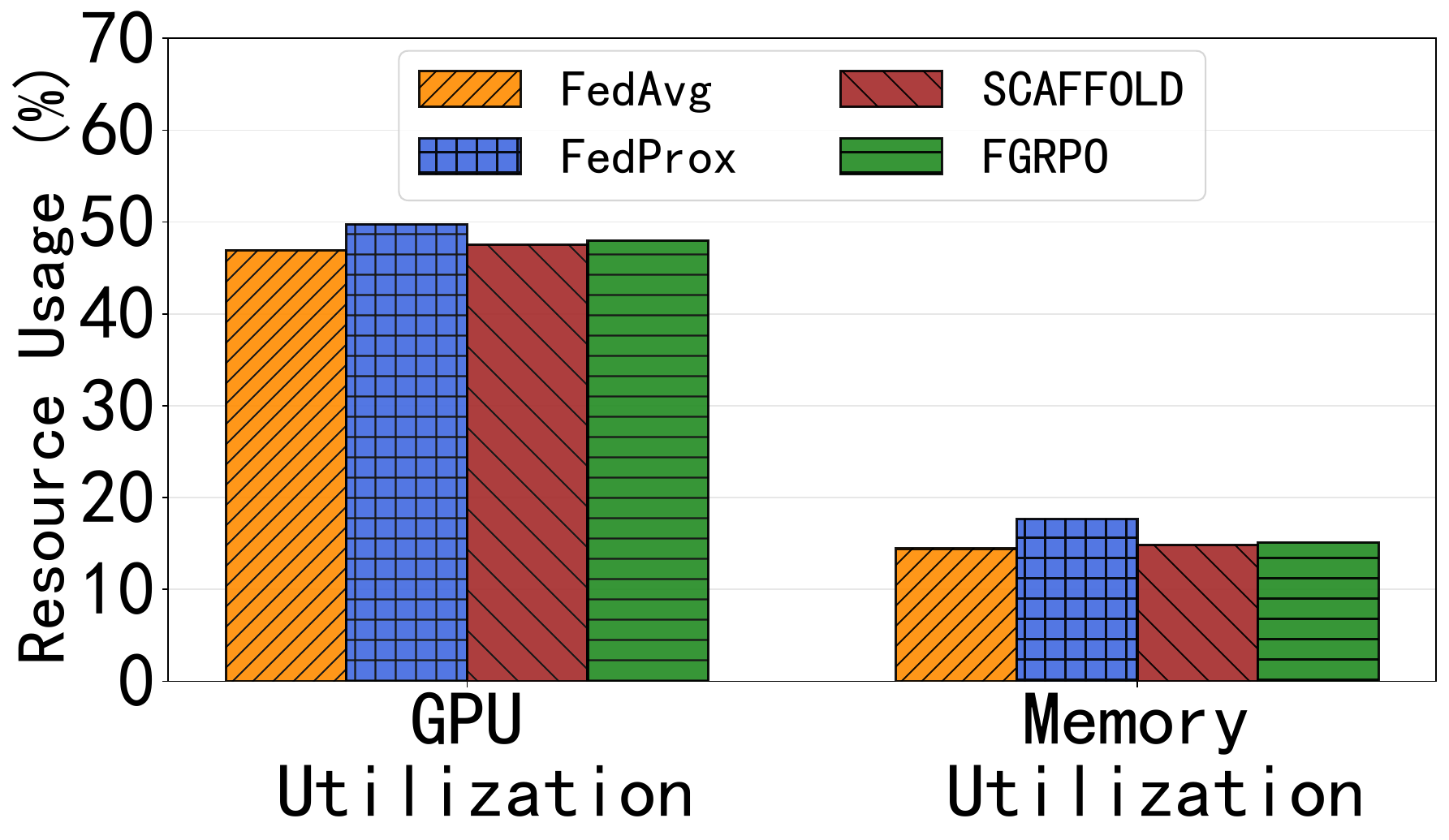}}
      \parbox{.4\textwidth}{\center\includegraphics[width=.33\textwidth]{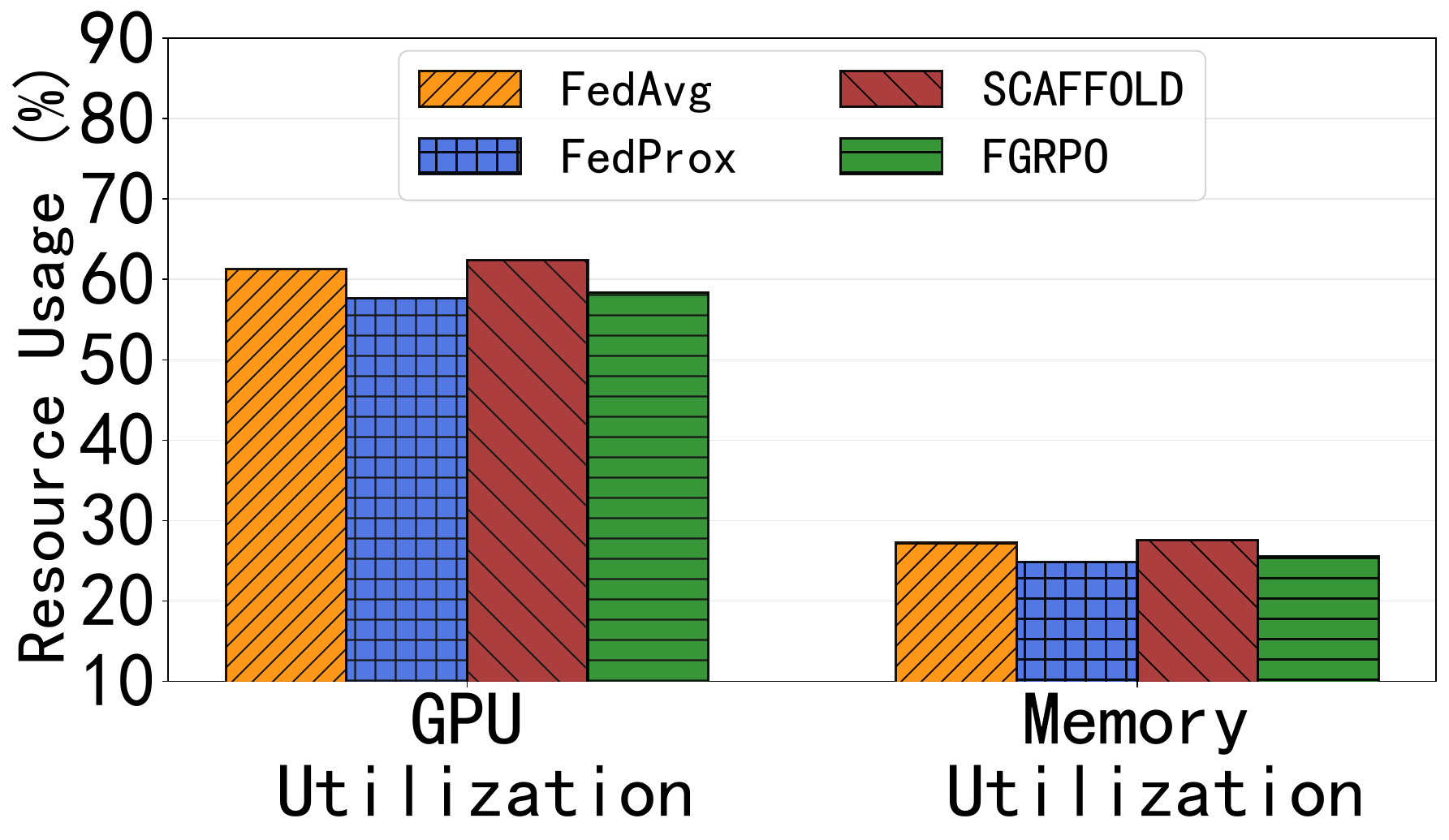}}
      \parbox{.4\textwidth}{\center\scriptsize(a) Qwen2.5-3B}
      \parbox{.4\textwidth}{\center\scriptsize(b) Qwen3-4B}
      \parbox{.4\textwidth}
      {\center\includegraphics[width=.33\textwidth]{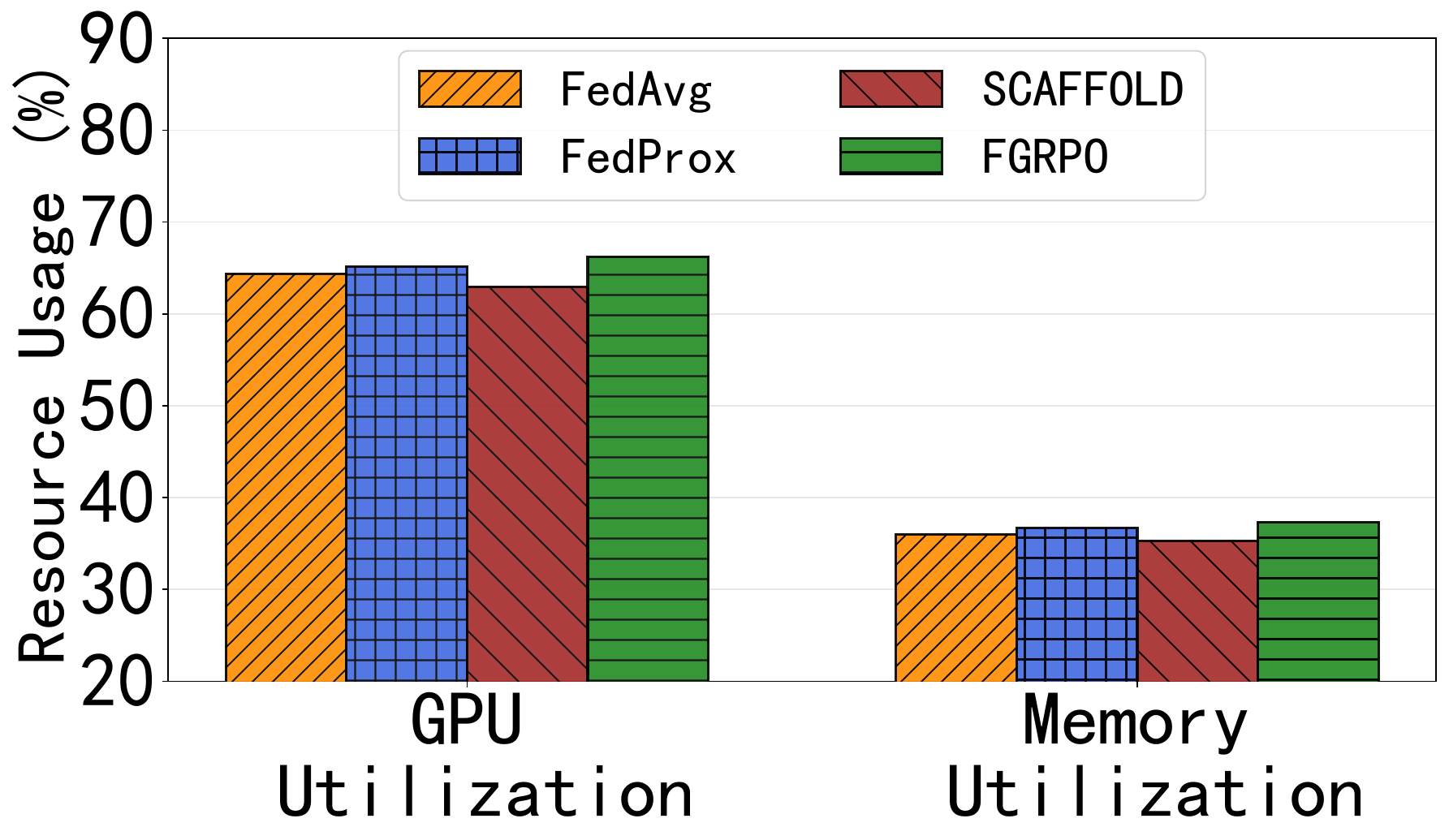}}
      \parbox{.4\textwidth}{\center\includegraphics[width=.33\textwidth]{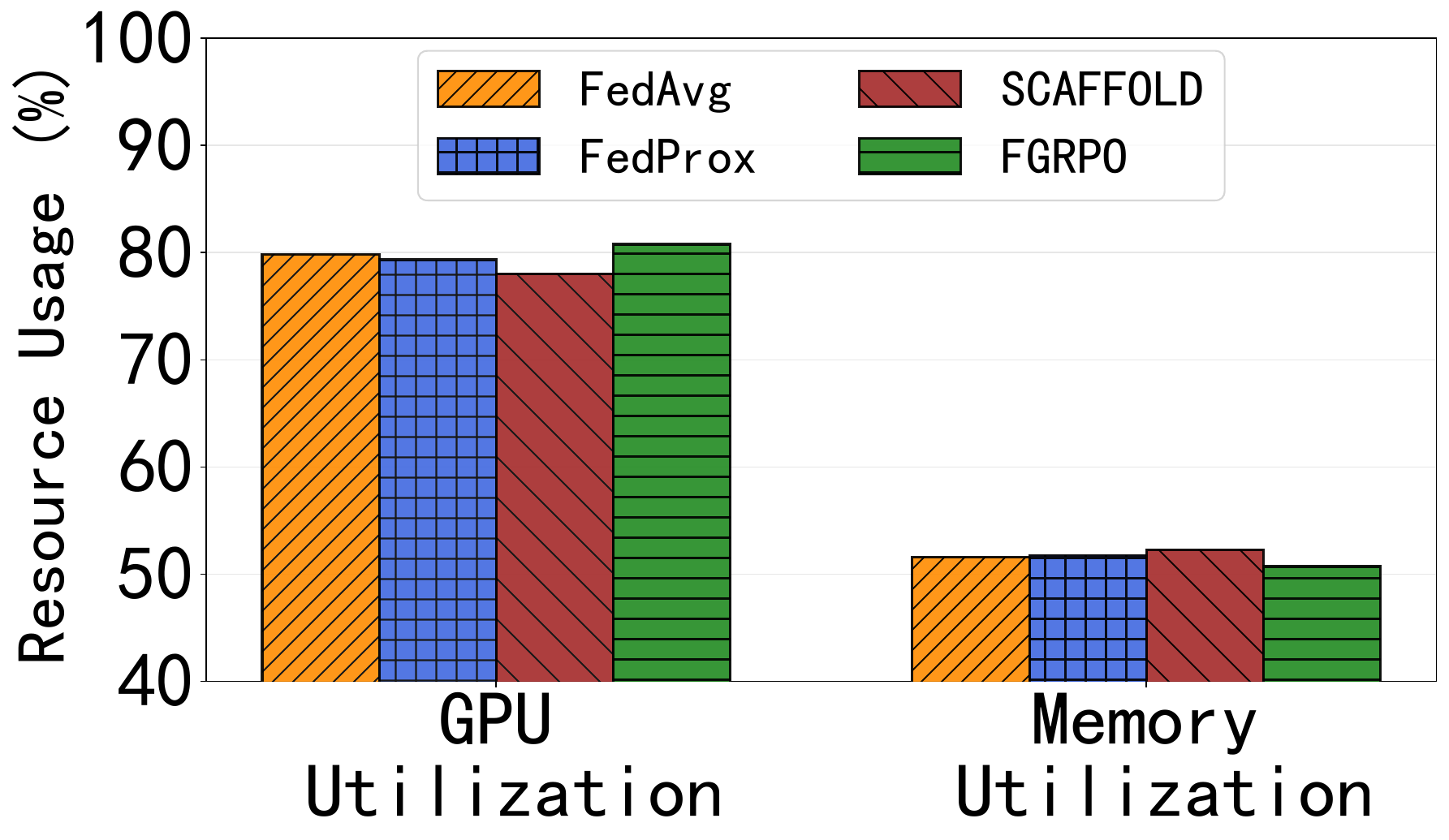}}
      \parbox{.4\textwidth}{\center\scriptsize(c) Qwen2.5-7B}
      \parbox{.4\textwidth}{\center\scriptsize(d) Llama-3.2-11B}
      \caption{Average GPU and memory utilization across different models.}
    \label{fig:cost_analysis}
    \end{figure}

\section{Comprehensive Literature Survey} \label{sec:app_survey}

  \subsection{Federated Reinforcement Learning}
  \label{ssec:appsurvey_fedrl}
    \emph{Federated reinforcement learning} (FedRL) aims to leverage the principles of collaborative learning \cite{McMahanMRHA-AISTATS17} across diverse clients to enhance sample efficiency without compromising raw data or trajectory privacy. Recent theoretical research has focused on establishing rigorous convergence guarantees under the unique constraints of sequential decision-making. \cite{ZhangWMA-ICLR24} provides a fundamental finite-time analysis of on-policy FedRL under data heterogeneity, while \cite{KhodadadianSJM-ICML22} demonstrates that linear speedup is achievable even under the complexities of Markovian sampling. In the offline setting, it is proved in \cite{WooSJC-ICML24} that a ``collaborative single-policy coverage'' condition, where the union of client data covers the optimal policy, is sufficient for global optimality. Furthermore, \cite{XiongWJL-ICLR25} highlights that shared representation learning can further accelerate convergence by extracting collaborative features across diverse tasks, while \cite{YangCWCC-NIPS24} proposes federated natural policy gradient methods to exploit common task.

    Beyond theoretical convergence, recent works address the practical challenges of system reliability and environmental diversity. Robustness against failures and adversaries has been explored by \cite{FanMDJTL-NIPS21} and \cite{FangWG-WWW25}, who provide formal certifications of policy performance under Byzantine disruptions and state perturbations. To handle system-level constraints, \cite{LanHHAB-ICLR25} introduces an asynchronous framework that ensures robustness to varying computational speeds, while \cite{JordanGFW-AAMAS24} proposes a fully decentralized policy gradient algorithm for peer-to-peer topologies. Managing heterogeneous dynamics and architectures also remains a central theme; while \cite{WangHZMA-ICML24} utilizes momentum-based aggregation to stabilize updates across diverse environments, \cite{JiangWZBTF-AAMAS25} introduces FedHPD, utilizing policy distillation to enable collaborative learning between clients with different model structures. Finally, \cite{WooJC-ICML23} proves that diverse local distributions can actually reduce global coverage requirements, while \cite{RengarajanRKS-NIPS24} utilizes ensemble-directed models to quantify uncertainty in offline settings.
    %

  \subsection{GRPO} \label{ssec:appsurvey_grpo}
    \emph{Group relative policy optimization} (GRPO) \cite{ShaoWZXSBZZLW-arXiv24} has established a new paradigm for fine-tuning reasoning models by replacing the traditional critic network with group-level reward normalization \cite{GuoYZSWZXZMB-Nature25}. While this approach stabilizes updates by computing relative advantages within response sets, its practical application is often hindered by high computational costs and a lack of sensitivity to prompt difficulty. To mitigate these efficiency bottlenecks, CPPO \cite{LinLXJ-arXiv23} introduces completion pruning to discard low-advantage trajectories, while GSPO \cite{ZhengLLCYGDLMY-arXiv25} shifts from token-level to sequence-level importance ratios to stabilize training in Mixture-of-Experts (MoE) architectures. Furthermore, SEED-GRPO \cite{ChenCWY-arXiv25} integrates semantic entropy into the optimization process, enabling the model to explicitly differentiate between certain and uncertain knowledge boundaries instead of treating all prompts as equally informative.

    Building upon these architectural efficiencies, recent research has focused on enhancing the precision of the learning signal through advanced reward shaping and sampling strategies. GRPO-LEAD \cite{ZhangZ-arXiv25} addresses the issues of verbosity and sparsity by integrating length-regularized rewards and difficulty-aware advantage reweighting, which ensures robust generalization on challenging problems. Similarly, DAPO \cite{YuZZYZYDFLL-arXiv25} introduces a decoupled and dynamic sampling system designed to stabilize long Chain-of-Thought (CoT) reasoning. By employing asymmetric clipping to prevent entropy collapse and overlong reward shaping, these methods collectively evolve GRPO into a more resilient framework capable of handling complex, multi-step logical synthesis.

    While the aforementioned studies focus on algorithmic refinements such as reward shaping or sampling efficiency, they implicitly assume a centralized data architecture where global statistics are readily available. In contrast, our proposed FGRPO framework breaks this assumption by enabling collaborative reasoning training across distributed private datasets.

\end{document}